\documentclass{article}
\usepackage{amsthm}
\usepackage{amsmath}
\usepackage{amssymb}
\usepackage{amsfonts}
\usepackage{mathtools}
\usepackage{bm}
\usepackage{wrapfig}

% if you need to pass options to natbib, use, e.g.:
%     \PassOptionsToPackage{numbers, compress}{natbib}
% before loading neurips_2022

% ready for submission
% \usepackage{neurips_2022}

% to compile a preprint version, e.g., for submission to arXiv, add add the
% [preprint] option:
    \usepackage[nonatbib, preprint]{neurips_2022}

% to compile a camera-ready version, add the [final] option, e.g.:
% \usepackage[final]{neurips_2022}

% to avoid loading the natbib package, add option nonatbib:
% \usepackage[nonatbib]{neurips_2022}
% \usepackage{natbib}

\usepackage[utf8]{inputenc} % allow utf-8 input
\usepackage[T1]{fontenc}    % use 8-bit T1 fonts
\usepackage{hyperref}       % hyperlinks
\usepackage{url}            % simple URL typesetting
\usepackage{booktabs}       % professional-quality tables
\usepackage{amsfonts}       % blackboard math symbols
\usepackage{nicefrac}       % compact symbols for 1/2, etc.
\usepackage{microtype}      % microtypography
\usepackage{xcolor}         % colors
\usepackage{tikz}
\usepackage{forest}
\usetikzlibrary{automata, positioning}
\usetikzlibrary{matrix,intersections,arrows,decorations.pathmorphing,backgrounds,positioning,fit,petri}
\usetikzlibrary{decorations.pathmorphing} % noisy shapes
\usetikzlibrary{fit}					% fitting shapes to coordinates
\usetikzlibrary{backgrounds}	% drawing the background after the foreground
\usepackage{float}

\tikzset{%
  zeroarrow/.style = {-stealth,dashed},
  onearrow/.style = {-stealth,solid},
  c/.style = {circle,draw,solid,minimum width=2em,
        minimum height=2em},
  r/.style = {rectangle,draw,solid,minimum width=2em,
        minimum height=2em}
}

\newcommand {\St}{\mathcal{S}}
\newcommand {\Obs}{\mathcal{O}}

\newcommand {\A}{\mathcal{A}}

\newcommand {\Hist}{\mathcal{H}}
\newcommand {\Dist}{\mathcal{D}}
\newcommand {\Reward}{\mathcal{R}}

\newcommand {\R}{\mathbb{R}}
\newcommand {\E}{\mathbb{E}}
\newcommand {\N}{\mathbb{N}}

\newcommand{\Pred}{\mathcal{P}}
\newcommand {\Pa}{\mathbb{P}}

\newcommand{\comp}{\, \raisebox{0.1em}{$\scriptscriptstyle \circ$} \,}

\newcommand{\abs}[1]{\lvert #1 \rvert}

\newcommand{\sS}{\mathsf{S}}
\newcommand{\sE}{\mathsf{E}}
\newcommand{\sI}{\mathsf{I}}
\newcommand{\sR}{\mathsf{R}}

\newcommand{\DoNothing}{\mathit{DoNothing}}
\newcommand{\Vaccinate}{\mathit{Vaccinate}}
\newcommand{\Quarantine}{\mathit{Quarantine}}

\newtheorem{theorem}{Theorem}
\theoremstyle{definition}
\newtheorem{defn}{Definition}

\newtheorem{prop}{Proposition}

\title{A Direct Approximation of AIXI Using Logical State Abstractions}

% The \author macro works with any number of authors. There are two commands
% used to separate the names and addresses of multiple authors: \And and \AND.
%
% Using \And between authors leaves it to LaTeX to determine where to break the
% lines. Using \AND forces a line break at that point. So, if LaTeX puts 3 of 4
% authors names on the first line, and the last on the second line, try using
% \AND instead of \And before the third author name.

\author{
  Samuel Yang-Zhao\thanks{Equal contribution.}~ \thanks{Corresponding author.}\\
  Australian National University\\
  Canberra ACT 2601 \\
  \texttt{samuel.yang-zhao@anu.edu.au}\\
  \And
  Tianyu Wang\footnotemark[1] \\
  Australian National University\\
  Canberra ACT 2601 \\
  \texttt{tianyu.wang2@anu.edu.au}\\
  \And
  Kee Siong Ng\\
  Australian National University\\
  Canberra ACT 2601 \\
  \texttt{keesiong.ng@anu.edu.au}
}

\begin{document}

\maketitle

\begin{abstract}
We propose a practical integration of logical state abstraction with AIXI, a Bayesian optimality notion for reinforcement learning agents, to significantly expand the model class that AIXI agents can be approximated over to complex history-dependent and structured environments. 
The state representation and reasoning framework is based on higher-order logic, which can be used to define and enumerate complex features on non-Markovian and structured environments. 
We address the problem of selecting the right subset of features to form state abstractions by adapting the $\Phi$-MDP optimisation criterion from state abstraction theory.
Exact Bayesian model learning is then achieved using a suitable generalisation of Context Tree Weighting over abstract state sequences.
The resultant architecture can be integrated with different planning algorithms.
Experimental results on controlling epidemics on large-scale contact networks validates the agent's performance.
\end{abstract}

\section{Introduction}
The AIXI agent \cite{Hutter:04uaibook} is a mathematical solution to the general reinforcement learning problem that combines Solomonoff induction \cite{solomonoff97} with sequential decision theory. 
At time $t$, after observing the observation-reward sequence $or_{1:t-1}$ and performing previous actions $a_{1:t-1}$ the AIXI agent chooses action $a_t$ given by
\begin{align}
    a_t = \arg\max_{a_t} \sum_{or_t} \ldots \max_{a_{t+m}} \sum_{or_{t+m}} [r_t + \ldots + r_{t+m}] \sum_{q: U(q, a_{1:t+m}) = or_{1:t+m}} 2^{-l(q)}, \label{eqn_AIXI}
\end{align}
where $U$ is a universal Turing machine, $U(q, a_{1:n})$ is the output of $U$ given program $q$ and input $a_{1:n}$, $m \in \N$ is the lookahead horizon, and $l(q)$ is the length of $q$ in bits. 
\cite{Hutter:04uaibook} shows that AIXI's environment model converges rapidly to the true environment and its policy is pareto optimal and self-optimising.

While mathematically elegant, a fundamental issue with AIXI is that it is only asymptotically computable. The question of whether a good practical approximation of AIXI can be constructed remains an open problem twenty years after AIXI was first introduced \cite{hutter12decade}.
The most direct approximation of AIXI given so far is \cite{veness09}, in which the Bayesian mixture over all Turing machines is approximated over $n$-Markov environments using Context Tree Weighting \cite{WST95} and the expectimax search is approximated using $\rho$UCT, a Monte Carlo planning algorithm \cite{ks06}. 
% The most direct approximation of AIXI given so far is \cite{veness09}, in which the Bayesian mixture over all Turing machines is approximated using the Context Tree Weighting algorithm \cite{WST95} for maintaining a Bayesian mixture over all $n$-Markov environments, and the expectimax search is approximated using $\rho$UCT, a Monte Carlo planning algorithm \cite{ks06}. 
This paper extends \cite{veness09} in several ways. 
To move beyond $n$-Markov environments and make it easy to incorporate background knowledge, we adopt a formal knowledge representation and reasoning framework based on higher-order logic to allow complex features to be defined on non-Markovian, structured environments.
% To move beyond $n$-Markov environments and simultaneously make it easy for agent designers to incorporate background knowledge, we adopt a formal knowledge representation and reasoning framework based on higher-order logic that allows complex features on non-Markovian and structured environments to be defined and enumerated.
The Feature Reinforcement Learning framework \cite{Hutter2009FeatureRL} is adapted to define an optimisation problem for selecting the best state abstractions and we present an algorithm based on Binary Decision Diagrams to constrain the set of solutions.
% We then adapt the Feature Reinforcement Learning framework \cite{Hutter2009FeatureRL} to define the optimisation problem associated with picking the right set of features to form state abstractions and present an algorithm based on Binary Decision Diagrams to first constrain possible the set of solutions.
% We then adapt the Feature Reinforcement Learning framework \cite{Hutter2009FeatureRL} to define the optimisation problem associated with picking the right set of features to form state abstractions and present an algorithm based on Binary Decision Diagrams to approximate the solution.
Generalizing binarised CTW \cite{VH18} to incorporate complex features in its context then allows a Bayesian mixture to be computed over a much larger model class. Combined with $\rho$UCT, the resulting agent is the richest, direct approximation of AIXI known to date. The practicality of the resultant agent is shown on epidemic control on non-trivial contact networks.
% Running binarised CTW \cite{VH18} and $\rho$UCT, both suitably generalized, over the resultant abstract state-action-reward sequences then gives us the richest, direct approximation of AIXI known to date. The practicality of the resultant agent is shown on epidemic control on non-trivial contact networks.

\iffalse
We then adapt the Feature Reinforcement Learning framework \cite{Hutter2009FeatureRL} to define the optimisation problem for selecting the best state abstraction. We show that generalizing binarized CTW \cite{VH18} over the resultant 
\fi

\section{Background and Related Work}
\subsection{The General Reinforcement Learning Setting}
We consider finite action, observation and reward spaces denoted by $\A, \Obs, \Reward$ respectively. The agent interacts with the environment in cycles: at any time, the agent chooses an action from $\A$ and the environment returns an observation and reward from $\Obs$ and $\Reward$. Frequently we will be considering observations and rewards together, and will denote $x \in \Obs \times \Reward$ as a percept $x$ from the percept space $\Obs \times \Reward$. We will denote a string $x_1x_2\ldots x_n$ of length $n$ by $x_{1:n}$ and its length $n-1$ prefix as $x_{<n}$. An action, observation and reward from the same time step will be denoted $aor_t$. After interacting for $n$ cycles, the interaction string $a_1o_1r_1\ldots a_no_nr_n$ (denoted $aor_{1:n}$ from here on) is generated. A history $h$ is an element of the history space $\Hist \coloneqq (\A \times \Obs \times \Reward)^* $. An environment $\rho$ is a sequence of probability distributions $\{\rho_0, \rho_1, \rho_2, \ldots \}$, where $\rho_n: \A^{n} \to \Dist((\Obs \times \Reward)^n)$, that satisfies $\forall a_{1:n}~ \forall or_{<n} \; \rho_{n-1}(or_{<n} | a_{<n}) = \sum_{or \in \Obs \times \Reward} \rho_n(or_{1:n} | a_{1:n})$. We will drop the subscript on $\rho_n$ when the context is clear. The predictive probability of the next percept given history and a current action is given by $\rho(or_n | aor_{<n}, a_n) = \rho(or_n | h_{n-1} a_n) \coloneqq \frac{\rho(or_{1:n} | a_{1:n})}{\rho(or_{<n} | a_{<n})}$ for all $aor_{1:n}$ such that $\rho(or_{<n} | a_{<n}) > 0$. 

The general reinforcement learning problem is for the agent to learn a \textit{policy} $\pi: \Hist \to \Dist(A)$ mapping histories to a distribution on possible actions that will allow it to maximise its future expected reward. In this paper, we consider the future expected reward up to a finite horizon $m \in \N$. 
Given a history $h$ and a policy $\pi$ the value function with respect to the environment $\rho$ is given by $V_{\rho}^{\pi}(h_t) \coloneqq \E_{\rho}^{\pi} \left[  \sum_{i = t+1}^{t+m} R_i \big| h_t  \right]$, where $R_i$ denotes the random variable distributed according to $\rho$ for the reward at time $i$. The action value function is defined similarly as $V_{\rho}^{\pi}(h_t, a_{t+1}) \coloneqq \E_{\rho}^{\pi} \left[  \sum_{i = t+1}^{t+m} R_i \big| h_t, a_{t+1}  \right]$. The agent's goal is to learn the optimal policy $\pi^*$, which is the policy that results in the value function with the maximum reward for any given history. 

\subsection{AIXI agent and its Monte-Carlo approximation}
The following equation can be shown to be formally equivalent to Equation (\ref{eqn_AIXI}). At time $t$, the AIXI agent chooses action $a^*_t$ according to
\begin{align}
	a^*_t = \arg\max_{a_t} \sum_{or_t} \ldots \max_{a_{t+m}} \sum_{or_{t+m}} [r_t + \ldots r_{t+m}] \sum_{\rho \in \mathcal{M}_U} 2^{-K(\rho)} \rho(or_{1:{t+m}} | a_{1:t+m}), \label{eqn_K_AIXI}
\end{align}
where $m \in \N$ is a finite lookahead horizon, $\mathcal{M}_U$ is the set of all enumerable chronological semimeasures \cite{Hutter:04uaibook}, $\rho(or_{1:{t+m}} | a_{1:t+m})$ is the probability of observing $or_{1:t+m}$ given the action sequence $a_{1:t+m}$, and $K(\rho)$ denotes the Kolmogorov complexity \cite{LV97} of $\rho$.

The Bayesian mixture $\xi_U := \sum_{\rho \in \mathcal{M}_U} 2^{-K(\rho)} \rho(or_{1:t} | a_{1:t})$ in (\ref{eqn_K_AIXI}) is a mixture environment model, with $2^{-K(\rho)}$ as the prior for $\rho$. A mixture environment model enjoys the property that it converges rapidly to the true environment if there exists a `good' model in the model class.
\begin{theorem}\cite{Hutter:04uaibook}
	\label{thm:MEMConvergence}
	Let $\mu$ be the true environment and $\xi$ be the mixture environment model over a model class $\mathcal{M}$. % The $\mu$-expected squared difference of $\mu$ and $\epsilon$ is bounded as follows. 
	For all $n \in \N$ and for all $a_{1:n}$,
	\begin{align}
		\sum_{k=1}^{n} \sum_{or_{1:k}} \mu(or_{<k} | a_{<k}) \left( \mu(or_k | aor_{<k} a_k) - \xi(or_k | aor_{<k}a_k) \right)^2 \leq \min_{\rho \in \mathcal{M}} \left\{ \ln \frac{1}{w^\rho_0} + D_n(\mu || \rho) \right\} \label{eqn:MEMBound}
	\end{align}
	
	where $D_n(\mu || \rho)$ is % \coloneqq \sum_{or_{1:n}} \mu(or_{1:n} | a_{1:n}) \ln \frac{\mu(or_{1:n} | a_{1:n})}{\rho(or_{1:n} | a_{1:n})}$, i.e. 
	the KL divergence of $\mu(\cdot | a_{1:n})$ and $\rho(\cdot | a_{1:n})$. 
\end{theorem}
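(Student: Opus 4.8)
The plan is to establish the bound in three moves: (i) a per-cycle inequality that upper-bounds the squared prediction error at time $k$ by the one-step KL divergence between $\mu$ and $\xi$; (ii) a chain-rule / telescoping argument collapsing the sum of one-step KL divergences into the $n$-step divergence $D_n(\mu\|\xi)$; and (iii) the domination property of the Bayesian mixture, which exchanges $D_n(\mu\|\xi)$ for $\ln(1/w^\rho_0)+D_n(\mu\|\rho)$ uniformly over $\rho\in\mathcal{M}$, after which one simply minimises over $\rho$. Throughout, the action sequence $a_{1:n}$ is fixed; because environments are chronological, predicting $or_k$ by conditioning on $a_{1:k}$ rather than on all of $a_{1:n}$ is harmless, and every identity below is read pointwise in $a_{1:n}$.

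For (i), I first rewrite the left-hand side as $\sum_{k=1}^n\sum_{or_{<k}}\mu(or_{<k}\mid a_{<k})\sum_{or_k}(\mu(or_k\mid aor_{<k}a_k)-\xi(or_k\mid aor_{<k}a_k))^2$, i.e.\ the $\mu$-expected squared $\ell_2$ distance between the one-step predictive distributions $\mu(\cdot\mid h_{<k}a_k)$ and $\xi(\cdot\mid h_{<k}a_k)$. These are probability distributions on the finite percept space $\Obs\times\Reward$ — the consistency requirement in the definition of an environment forces the one-step conditionals to sum to one — so their difference $v$ is a zero-sum vector. Any zero-sum $v$ satisfies $\|v\|_1\ge\sqrt2\,\|v\|_2$: its positive and negative parts have equal mass $\tfrac12\|v\|_1$, each coordinate is bounded by that mass, and hence $\|v\|_2^2=\sum_i v_i^2\le(\max_i|v_i|)\|v\|_1\le\tfrac12\|v\|_1^2$. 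Combining this with Pinsker's inequality $D(P\|Q)\ge\tfrac12\|P-Q\|_1^2$ yields the per-step bound $\|\mu(\cdot\mid h_{<k}a_k)-\xi(\cdot\mid h_{<k}a_k)\|_2^2\le D(\mu(\cdot\mid h_{<k}a_k)\,\|\,\xi(\cdot\mid h_{<k}a_k))$.

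For (ii), substituting this bound, it remains to recognise $\sum_{k=1}^n\sum_{or_{<k}}\mu(or_{<k}\mid a_{<k})\,D(\mu(\cdot\mid h_{<k}a_k)\|\xi(\cdot\mid h_{<k}a_k))$ as $D_n(\mu\|\xi)$. This is the chain rule for KL divergence: write $\mu(or_{1:n}\mid a_{1:n})=\prod_{k}\mu(or_k\mid aor_{<k}a_k)$ and similarly for $\xi$ (terms in which an intermediate marginal vanishes contribute $0$ and are dropped), so $\ln\frac{\mu(or_{1:n}\mid a_{1:n})}{\xi(or_{1:n}\mid a_{1:n})}=\sum_k\ln\frac{\mu(or_k\mid aor_{<k}a_k)}{\xi(or_k\mid aor_{<k}a_k)}$; taking the $\mu$-expectation over $or_{1:n}$, summing out the future percepts via consistency, and regrouping gives the claimed identity. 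Hence the left-hand side of the theorem is at most $D_n(\mu\|\xi)$. For (iii), $\xi(or_{1:n}\mid a_{1:n})=\sum_{\rho\in\mathcal{M}}w^\rho_0\,\rho(or_{1:n}\mid a_{1:n})\ge w^\rho_0\,\rho(or_{1:n}\mid a_{1:n})$ for each fixed $\rho$, so $\ln\frac{\mu(or_{1:n}\mid a_{1:n})}{\xi(or_{1:n}\mid a_{1:n})}\le\ln\frac{1}{w^\rho_0}+\ln\frac{\mu(or_{1:n}\mid a_{1:n})}{\rho(or_{1:n}\mid a_{1:n})}$; averaging under $\mu(\cdot\mid a_{1:n})$, which is a probability distribution so the constant term survives unchanged, gives $D_n(\mu\|\xi)\le\ln\frac{1}{w^\rho_0}+D_n(\mu\|\rho)$, and minimising over $\rho\in\mathcal{M}$ finishes the proof.

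The telescoping bookkeeping and the standard care around vanishing conditional marginals are routine. The one genuinely substantive ingredient is the per-step inequality of step (i): the naive chain ``Pinsker plus $\|v\|_2\le\|v\|_1$'' only delivers $\|v\|_2^2\le 2D$, and obtaining the constant $1$ exactly as stated depends on exploiting that $\mu-\xi$ is zero-sum to upgrade $\|v\|_1\ge\|v\|_2$ to $\|v\|_1\ge\sqrt2\,\|v\|_2$. If one were content to lose that factor, the remainder of the argument is entirely mechanical, so I expect this sharpening to be the only point requiring real thought.
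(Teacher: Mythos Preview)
Your proposal is correct and follows essentially the same route as the paper. The paper does not prove Theorem~\ref{thm:MEMConvergence} itself (it is quoted from \cite{Hutter:04uaibook}), but it proves the closely analogous Theorem~\ref{thm:MEM_reward_convergence} in Appendix~\ref{appendix:ctw_mixture}, and that argument is precisely your steps (i)--(iii): the per-step bound $\sum_i(y_i-z_i)^2\le\sum_i y_i\ln(y_i/z_i)$ for probability vectors, the chain-rule telescoping of the one-step KL terms into $D_n(\mu\|\xi)$, and the mixture-domination inequality $\xi\ge w_0^\rho\rho$.

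The one place where you do more than the paper is step~(i). The paper simply \emph{invokes} the inequality $\sum_i(y_i-z_i)^2\le\sum_i y_i\ln(y_i/z_i)$ as a known fact, whereas you derive it from Pinsker together with the zero-sum norm comparison $\|v\|_1\ge\sqrt2\,\|v\|_2$. That derivation is sound (your bound $\max_i|v_i|\le\tfrac12\|v\|_1$ for zero-sum $v$ is correct, and it gives exactly the factor needed to close the gap left by naive Pinsker), so your write-up is actually slightly more self-contained than the paper's.
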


To see the rapid convergence of $\xi$ to $\mu$, take the limit $n \to \infty$ on the l.h.s of (\ref{eqn:MEMBound}) and observe that in the case where $\min_{\rho \in \mathcal{M}} \sup_{n} D_{n}(\mu || \rho)$ is bounded, the l.h.s. can only be finite if $\xi(or_k | aor_{<k} a_k)$ converges sufficiently fast to $\mu(or_k | aor_{<k}a_k)$. From Theorem \ref{thm:MEMConvergence}, it can be shown that AIXI's environment mixture model $\xi_U$ also enjoys this property with any computable environment.

The first practical approximation of AIXI is given in \cite{veness09}, in which the expectimax search is approximated using Monte Carlo Tree Search \cite{ks06} and Bayesian mixture learning is performed using Context Tree Weighting (CTW) \cite{WST95}. There is a body of work on extending the Bayesian mixture learning to larger classes of history-based models \cite{venessNHB12, venessWBG13, bellemareVB13, bellemareVT14}. In this paper, we generalise the Bayesian mixture learning in an orthogonal direction by adapting binarised CTW \cite{VH18} to run on sequences defined by logical state abstractions, thus significantly improving the ease with which symbolic and statistical background knowledge can be incorporated into the AIXI framework.

\subsection{State Abstraction and Feature Reinforcement Learning}
In the general reinforcement learning setting, the state abstraction framework is concerned with finding a function $\phi: \Hist \to \St_{\phi}$ that maps every history string to an element of an abstract state space $\St_{\phi}$. Given a state abstraction $\phi$, the abstract environment is a process that generates a state $s'$ and a reward $r$ given a history and action according to the distribution $	\rho_{\phi}(s', r | h, a) \coloneqq \sum_{o: \phi(haor) = s'} \rho(or | ha)$. The interaction sequence of the original process converts to a state-action-reward sequence under $\phi$. 

A key reason for considering abstractions is that the space $\Hist$ is too large and the underlying process may be highly irregular and difficult to learn on. 
% Furthermore, if the abstraction constructed is able to capture the dependencies on the history sequence, the resulting MDP is sufficient for traiditional reinforcement learning algorithms to perform well on.
The aim is thus to select a mapping $\phi$ such that the induced process can facilitate learning without severely compromising performance. Theoretical approaches to this question have typically focused on providing conditions that minimise the error in the action-value function between the abstract and original process \cite{abel18,li06}. The $\Phi$MDP framework instead provides an optimisation criteria for ranking candidate mappings $\phi$ based on how well the state-action-reward sequence generated by $\phi$ can be modelled as an MDP whilst being predictive of the rewards \cite{Hutter2009FeatureRL}. 
A good $\phi$ results in a Markovian model where the state is a sufficient statistic of the history and thus facilitates reinforcement learning.
% A good $\phi$ captures historical dependencies and results in a model that is Markovian and good for predicting the next state and reward, facilitating the efficient learning of good actions that maximise the expected long-term reward.

\textbf{Optimisation Criteria for Selecting Mappings.}
The $\Phi$MDP framework ranks candidate state abstractions based on an optimisation criteria defined over the code length of the state-reward sequence. Let $CL(x_{1:n})$ denote the code length of a sequence $x_{1:n}$ and $CL(x_{1:n} | a_{1:n})$ denote the code length of sequence $x_{1:n}$ conditioned on $a_{1:n}$. At time $n$, the $Cost_{M}$ objective is defined as
\begin{align}
	\label{eqn:Cost_MDP}
	Cost_{M}(n, \phi) \coloneqq CL(s_{1:n} | a_{1:n}) + CL(r_{1:n} | s_{1:n}, a_{1:n}) + CL(\phi)
\end{align}
where $\phi(h_t) = s_t$ and $h_t = aor_{1:t}$. For more details on how MDP sequences can be coded see \cite{Hutter2009FeatureRL}.

The objective is then to look for $\phi^{best}_n \coloneqq \arg\min_{\phi} Cost_M(n, \phi)$.
Note that $CL(\phi)$ is a term that imposes a Occam's razor-like penalty on the candidate mapping; if different candidate abstractions result in the same $Cost_M$ value, the simplest model is preferred. 

For the agent to achieve high expected long-term reward, it is imperative that a state abstraction produces a model that can predict the reward sequence well. However, the equal weighting given to the code lengths of the state sequence in $Cost_M$ can be suboptimal; in fact a simple environment can be constructed where $Cost_{M}$ gives unintuitive results (see Appendix \ref{appendix:counter_example}). 
Given that $CL(s_{1:n} | a_{1:n})$ is optimised independent of any reward signal it does not inherently provide any information about the long-term reward. 
Thus, it is best thought of as a regularisation term to make the resulting MDP easier to learn. 
We drop the code length of the state sequence from the optimisation criteria to focus purely on reward predictability and hence consider $Cost_M^0(n, \phi) = CL(r_{1:n} | s_{1:n}, a_{1:n}) + CL(\phi)$. 
% In \cite{NSH11}, the author's reformulate $Cost_M$ as a convex combination of $CL(s_{1:n} | a_{1:n})$ and $CL(r_{1:n} | s_{1:n}, a_{1:n})$
% \begin{align*}
% 	Cost^{\alpha}_{M}(n, \phi) \coloneqq \alpha CL(s_{1:n} | a_{1:n}) + (1-\alpha) CL(r_{1:n} | s_{1:n}, a_{1:n}) + CL(\phi)
% \end{align*}
% where $\alpha \in [0, 1]$. This allows the state sequence code length to be weighted appropriately. 
% We focus solely on the predictability of rewards and hence consider the case where $\alpha = 0$. 

\subsection{Knowledge Representation and Reasoning Formalism}
The formal agent knowledge representation and reasoning language is the one described in \cite{lloyd11,lloyd03}. The syntax of the language are the terms of the $\lambda$-calculus \cite{church}, extended with type polymorphism to increase its expressiveness. The inference engine has two interacting components: an equational-reasoning engine and a tableau theorem prover \cite{farinas02}. Thus, a predicate evaluation at the internal node of a Context Tree takes the general form $\Gamma \vdash p(h)$, where $\Gamma$ is a theory consisting of equations and implications that reflects the agent's current belief base, $h$ is the history of agent-environment interacts, and $p$ is a predicate mapping histories to boolean values. There is also a predicate rewrite system for defining and enumerating a set of predicates.

Suitable alternatives to our logical formalism are studied in the Statistical Relational Artificial Intelligence literature \cite{getoor-taskar, deraedt16}, including a few that caters specifically for relational reinforcement learning \cite{dzeroskiRD01, driessens17} and Symbolic MDP/POMDPs \cite{kerstingOR04, sannerK10}. Our choice is informed by the standard arguments given in \cite{farmer,hughes89} and the practical convenience of working with (the functional subset of) a language like Python. Our thesis that formal logic can be used to incorporate domain knowledge and broaden the model class of AIXI approximations remains unchanged with an alternate formal logic.

\iffalse
We now present the $\Phi$-AIXI-CTW agent. The actions and rewards are considered in their binary representation. The agent has access to a large set $\mathcal{P}$ of predicates on histories, and each subset of $\mathcal{P}$ constitutes a candidate state abstraction. 
We show that a $\Phi$-BCTW data structure can be used to perform a Bayesian mixture over all abstractions defined by $\mathcal{P}$ and will converge to the best model under $Cost_M^0$. However, if $\mathcal{P}$ is large, this is computationally infeasible. We first look to reduce this computational burden by performing feature selection to appropriately reduce the set of predicates for $\Phi$-BCTW to perform a Bayesian mixture over. Finally, the agent selects actions according to the $\rho$UCT algorithm. 

\fi

\section{The $\Phi$-AIXI-CTW Agent}
% We now present the $\Phi$-AIXI-CTW agent. The actions and rewards are considered in their binary representation. The agent has access to a large set $\mathcal{P}$ of predicates on histories, and each subset of $\mathcal{P}$ constitutes a candidate state abstraction. 
% In theory, a $\Phi$-BCTW data structure will converge to the best model in $\mathcal{P}$ under $Cost_M^0$. 
% However, this is computationally infeasible in practice if $\mathcal{P}$ is large.
% Our approach is to first use feature selection to select a smaller subset of predicates for $\Phi$-BCTW to perform a Bayesian mixture over. Finally, the agent selects actions according to the $\rho$UCT algorithm. 
We now present the $\Phi$-AIXI-CTW agent. The actions and rewards are considered in their binary representation. The agent has access to a large set $\mathcal{P}$ of predicates on histories, and each subset of $\mathcal{P}$ constitutes a candidate state abstraction. 
We show that a $\Phi$-BCTW data structure performs a Bayesian mixture over all abstractions defined by $\mathcal{P}$ and will optimise $Cost_M^0$. However, this is computationally infeasible in practice if $\mathcal{P}$ is large. We first reduce this computational burden by performing feature selection to shrink the set of predicates for $\Phi$-BCTW. Finally, the agent selects actions according to the $\rho$UCT algorithm. 

\subsection{Shrinking the Model Class via Feature Selection}
Our method for selecting candidate state abstractions is centred around the detection of predicate features that result in low entropy conditional reward distributions since lower entropy results in smaller code lengths and lower $Cost_M^0$. 
A simple heuristic that we employ is to look for candidate state abstractions that contain states that maximise the probability of receiving a given reward. 
% We employ the simple heuristic of looking for state abstractions that contain states maximising the probability of receiving a given reward.
% We employ the simple heuristic of looking for state abstractions that contain states with dominant probability of receiving a particular reward.
We call such abstractions sharp mappings.

\begin{defn} %[Sharp mappings]
Let $p_{\phi}(r) \coloneqq \sum_{s,a} p_{\phi}(r | s,a)$, $p_{\phi}(r^{c} | s, a) \coloneqq \sum_{r' \neq r \in \Reward} p_{\phi}(r' | s, a)$ and $p_{\phi}(r^{c}) \coloneqq \sum_{r' \neq r \in \Reward} p_{\phi}(r')$ for all $r \in \Reward$. 
For a $D > p_{\phi}(r) / p_{\phi}(r^{c})$,
an MDP mapping $\phi$ is \textit{$D$-sharp} if for all $r \in \Reward$, there exists $s \in S_{\phi}$, $a \in \A$ such that
%\begin{align}
    $\frac{p_{\phi}(r | s, a)}{p_{\phi}(r^c | s, a)} \geq D$.
% \end{align}
% where $D > \frac{p_{\phi}(r)}{p_{\phi}(r^{c})}$.
\end{defn}
% The reason for defining sharp mappings in this manner is that we wish to select candidate state abstractions that will be predictive of all rewards. 
Sharp mappings are defined in this manner to help select candidate state abstractions that are predictive of all rewards.
It is often the case in reinforcement learning environments that informative reward signals are sparse during learning. Treating rewards as class labels, the problem of predicting rewards can be viewed as an unbalanced data classification task. If the unbalanced nature of the data is not taken into account, feature selection methods will tend to ignore features that may be predictive of rewards that occur infrequently but are important for performance. Thus, we adjust for this fact by ensuring that the threshold chosen for each reward is bounded from below by $\frac{p_{\phi}(r)}{p_{\phi}(r^{c})}$; if a reward $r$ occurs infrequently relative to the frequency of all other rewards, then the lower bound on $D$ will be small. Our approach is to select predicate features that lead to sharp mappings. % by setting the sharpness thresholds appropriately. 
% Increasing the sharpness threshold imposes a constraint that increasingly limits the number of satisfying state abstractions. 

\textbf{Feature Selection Using Decision Boundaries.}
Using the sharpness threshold, we frame the problem as a binary classification task.
% Let $\mathcal{X}_{\phi} \coloneqq \St_{\phi} \times \A$, recalling that the action space is identified as the set of binary strings representing each action.
For each reward $r \in \Reward$, the sharpness of the reward distribution at a state-action pair determines the binary class by a decision rule defined by $F_{\phi, D}^r(s,a) \coloneqq \mathbb{I} \left[ \frac{p_{\phi}(r | s,a)}{p_{\phi}(r^c | s,a)} > D \right]$ with indicator function $\mathbb{I}$. Feature selection on the binary classification task can then be performed by eliminating conditionally redundant features, defined next.
% $x \in \mathcal{X}_{\phi}$ and 
\begin{defn}
    \label{defn:redundant_featu}
    Let $\phi_1 = \{p_1, \ldots, p_n\}$ and $\phi_2 = \phi_1 \setminus \{p_j\}$ for $j \in \{1, \ldots, n\}$  where $p_i$ is a predicate function. If $F^{r}_{\phi_1, D}(s,a) = F^{r}_{\phi_2, D}(s',a)$ for all $s \in \St_{\phi_1}$ and $s' = s_{1:j-1}s_{j+1:n}$, then $p_j$ is a conditionally redundant feature.
\end{defn}
% \begin{defn}
%     \label{defn:redundant_featu}
%     Let $\phi_1 = \{p_1, \ldots, p_n\}$ and $\phi_2 = \phi_1 \setminus \{p_j\}$ for $j \in \{1, \ldots, n\}$  where $p_i$ is a predicate function. If $F^{r}_{\phi_1, D}(s) = F^{r}_{\phi_2, D}(s')$ for $s \in \mathcal{X}_{\phi_1}$ and $s' = s_{1:j-1}s_{j+1:n}$, then $p_j$ is a conditionally redundant feature. A feature is conditionally informative iff it is not conditionally redundant.
% \end{defn}

% Given a total ordering $\iota = \{p_{\iota_0} \prec ... \prec p_{\iota_n}\}$, we exam the conditional redundancy from $\iota_0$ to $\iota_n$. If $p_i$ is conditionally redundant, we remove it from the candidate set before checking the next predicate.
% We term the remaining set after checking \textit{informative subset} denoted as $\phi_{\iota}$.
% Note that different ordering will lead to different informative subset.
% By doing so, we guarantee that no informative predicates are removed.

A conditionally redundant feature does not affect the output of the decision rule given the feature ordering and thus does not contribute to discriminating between different classes.
As this property only depends upon the feature ordering upon construction, removing a conditionally redundant feature does not change whether the remaining features are redundant or not.
% A conditionally redundant feature is a feature that does not affect the output of the decision rule in the presence of the remaining features, thus making no contribution to discriminating between different classes. 
% A conditionally informative feature, in contrast, can change the output of the decision rule and shift observations across the decision boundary. 
To select candidate predicate features, we set up a decision rule for each reward $r \in \Reward$ and eliminate the predicate features that are redundant across all rewards.
In practice, the probabilities used by the decision rules are estimated as frequency counts.

To remove conditionally redundant features, we use a Binary Decision Diagram (BDD) data structure \cite{Knuth08} to represent each decision rule. A reduction procedure on the BDD representation then guarantees that only informative features remain.

\begin{theorem}
    \label{thm:bdd_informative}
    Let $F^{r}_{\phi, D}: \St_{\phi} \times \A \to \{0, 1\}$ be the decision rule for reward $r \in \Reward$. Then the reduced BDD representation of $F^{r}_{\phi, D}$ contains only conditionally informative features.
\end{theorem}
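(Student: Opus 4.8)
The plan is to connect the notion of a conditionally redundant feature (Definition \ref{defn:redundant_featu}) to the standard notion of a \emph{redundant node} / \emph{don't-care variable} in the theory of Binary Decision Diagrams, and then invoke the canonicity of reduced OBDDs. First I would fix the variable ordering on the BDD to be the feature ordering $p_1, \ldots, p_n$ of $\phi$ followed by (a bit-encoding of) the action $a$, so that a complete path from root to a leaf corresponds exactly to a pair $(s,a) \in \St_\phi \times \A$ with $s = s_1 \cdots s_n$, and the leaf value is $F^r_{\phi,D}(s,a)$. Thus the (unreduced) BDD literally computes the decision rule as a Boolean function of the variables $p_1,\dots,p_n$ and the action bits.

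Next I would recall the two reduction rules that define a \emph{reduced} OBDD: (i) merge isomorphic subgraphs, and (ii) delete any node both of whose outgoing edges point to the same child (node elimination), repeating until neither applies \cite{Knuth08}. The key observation is that rule (ii) applied to (all nodes labelled by) a variable $p_j$ is possible precisely when $p_j$ is a ``don't-care'' variable: the function value does not depend on $p_j$ once the values of the variables preceding it in the ordering are fixed. I would make this precise by showing the equivalence: $p_j$ can be eliminated from the BDD for $F^r_{\phi,D}$ if and only if $F^r_{\phi_1,D}(s,a) = F^r_{\phi_2,D}(s',a)$ for all $s \in \St_{\phi_1}$ with $s' = s_{1:j-1}s_{j+1:n}$, where $\phi_1 = \phi$ and $\phi_2 = \phi \setminus \{p_j\}$ — i.e. exactly the condition in Definition \ref{defn:redundant_featu}. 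One direction is immediate (if $p_j$ is redundant, every $p_j$-node has two identical cofactors, so after merging isomorphic subgraphs rule (ii) fires); the other uses the fact that after full reduction no $p_j$-node remains, together with canonicity to argue the function genuinely does not depend on $p_j$. Then, by the remark in the text that removing a conditionally redundant feature does not alter the redundancy status of the remaining features, I can iterate: after the reduction procedure terminates, no variable appearing in the reduced BDD is conditionally redundant, i.e. every feature that survives is conditionally informative.

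The main obstacle I anticipate is bookkeeping the interaction between the two reduction rules: a variable $p_j$ that is ``semantically'' redundant need not have \emph{syntactically} identical children before the isomorphism-merging rule (i) is applied, so the argument must track the fixed point of applying (i) and (ii) together rather than treating node elimination in isolation. The clean way around this is to use the canonicity theorem for reduced OBDDs — the reduced OBDD under a fixed ordering is unique — so that the set of variables labelling its nodes equals exactly the set of variables the function essentially depends on (given the ordering / the preceding variables). Phrasing the argument in terms of cofactors and the essential-dependence characterization, rather than in terms of the operational reduction procedure, sidesteps the combinatorial details and makes the correspondence with Definition \ref{defn:redundant_featu} transparent. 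A minor secondary point to address is the encoding of $\A$ into Boolean variables and checking that conditional redundancy as defined (quantified over $a \in \A$, not over action bits) is insensitive to that encoding, which follows since the action bits all come after the feature bits in the ordering.
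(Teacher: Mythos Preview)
Your proposal is correct and arrives at the same conclusion, but the route differs from the paper's. The paper introduces the notion of a \emph{bead}: a truth-table substring $\beta$ of length $2^k$ is a bead if it is not of the form $\alpha\alpha$ for some $\alpha$ of length $2^{k-1}$. It then proves two lemmas: (i) every node in a reduced BDD corresponds to a bead (because a non-bead node would have two children with identical sub-truth-tables, contradicting reducedness); and (ii) every node labelled by a conditionally redundant feature $p_j$ corresponds to a non-bead (because $F_{\phi_1}(a0b) = F_{\phi_1}(a1b)$ for all $a,b$ forces the truth-table substring at any $p_j$-node to repeat at its midpoint). The theorem is the immediate combination of (i) and (ii).

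Your argument instead works directly with cofactors and the canonicity theorem for reduced OBDDs, identifying conditional redundancy with the variable being a ``don't-care'' along every path and concluding via uniqueness of the reduced form. This is the more standard textbook phrasing (closer to Bryant's original presentation), and your use of canonicity to sidestep the operational interleaving of rules (i) and (ii) is a clean move. The paper's bead-based argument, by contrast, is the Knuth TAOCP formulation: more combinatorial and self-contained, avoiding any appeal to the canonicity theorem as a black box. Both approaches prove only the direction actually needed (redundant $\Rightarrow$ eliminated); your extra remark on the converse is fine but unnecessary. Your attention to the action-bit encoding is a point the paper glosses over, so that is a small plus for your write-up.
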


The proof is given in Appendix \ref{appendix:bdd}. In practice there exists simple procedures to reduce a BDD. For each reward, we determine a decision rule and perform BDD reduction to get a set of informative predicates. The union of the sets of selected predicates is then chosen.

One issue with BDD reduction is that the given variable ordering can impact whether a feature is conditionally redundant or not; a sub-optimal variable ordering will keep more features than necessary. 
% In general, BDD reduction will tend to keep more features than necessary since the variable ordering is likely sub-optimal. 
We alleviate this issue by performing BDD reduction on multiple randomly selected subspaces of the feature space, corresponding to randomly selected subsets of predicates. We name our approach Random Forest-BDD (RF-BDD).
% Our method for alleviating this issue is to perform multiple BDD reductions on randomly selected subspaces of the feature space, corresponding to randomly selected subsets of predicates. We name our approach Random Forest-BDD (RF-BDD). 
For each selected subset of predicates $\Pred_i$, let $\phi_i$ be the corresponding MDP mapping. We perform BDD reduction on the decision rule constructed from $\Pred_i$ given by $F^{r}_{\Pred_i}(s, a) \coloneqq \mathbb{I}\left[ \frac{p_{\phi_i}(r | s, a)}{p_{\phi_i}(r^c | s, a)} > D \right]$. Once BDD reduction has been performed for all chosen subsets, a choice has to be made on which predicates to keep. We employ a voting scheme where a predicate receives a vote if it is kept by a BDD reduction. The votes are then normalised by the number of times each predicate was selected. The predicates that have a retention rate above a certain threshold will then be kept. The kept predicates for each percept are then combined into a single set.

We emphasise that selecting features based on a sharpness threshold does not directly optimise the $Cost_M^0$ criteria but rather constrains the set of possible solutions by their reward predictability.
Thus the set of predicates selected by the RF-BDD procedure above, when input into $\Phi$-BCTW, lead to a model that approximately minimises $Cost_M^0$  over the initial set of predicates.
% We emphasise that constructing decision rules for a given sharpness threshold does not directly optimise the $Cost_M^0$ criteria. 
% Instead, it constrains the set of possible solutions before the $\Phi$-BCTW data structure optimises the $Cost_M^0$ objective over the given set of abstractions. 
% In this sense, the sharpness thresholds are an inductive bias that select for solutions that allow each reward to be predictable and constrains the entropy of possible solutions; the larger the threshold $D$ is, the smaller the set of satisfying solutions. 

% \subsection{Mixture Environment Model Using $\Phi$-Binarized Context Tree Weighting}
\subsection{$\Phi$-Binarized Context Tree Weighting}
The binarized context-tree weighting (BCTW) algorithm is a universal data compression scheme that extends the context-tree weighting algorithm to non-binary alphabets. The context-tree weighting algorithm is a data structure that computes an online Bayesian mixture model over prediction suffix trees \cite{WST95}. The BCTW approach is to binarize a target symbol and predict each bit using a separate context tree. We present the $\Phi$-BCTW algorithm which modifies BCTW by allowing nodes to correspond to arbitrary predicates on histories. We show that this extension allows $\Phi$-BCTW to compute a Bayesian mixture model over environments that can be modelled by a set of predicate functions and will optimise $Cost_M^0$. 

The $\Phi$-BCTW algorithm modifies the BCTW algorithm to allow it to model the state and reward transitions of an abstract environment defined under a set of predicate functions. Suppose we are given a set of predicates that defines the state space. 
% The sequence in consideration is the state-reward-action sequence $asr_{1:n}$. 
The $\Phi$-BCTW models the abstract environment by assuming it has MDP structure and thus conditions on an initial context given by the previous state and action. Suppose the length of a state and action symbol together is $d$ and let $[x]$ denote the binary representation of a symbol $x$ and $[x]_i, [x]_{<i}$ denote the $i$-th bit and the first $i-1$ bits of $x$ respectively.
% To model the state-reward transitions, the $\Phi$-BCTW algorithm modifies BCTW to condition on an initial context of the previous state and action.
Given the sequence $sra_{1:n}$, the distribution of the first bit is computed as a Bayesian mixture model by a depth $d$ context tree:
\begin{align*}
    \hat{P}^{(1)}([sr_{n}]_1 | asr_{1:n-1}, a_n) &= \sum_{T \in \mathcal{T}_d} 2^{-\gamma(T)} \hat{P}([sr_{n}]_1 | T([s_{n-1}a_{n}]), sr_{1:n-1})
\end{align*}
Here $T([s_{n-1}a_{n}])$ denotes the leaf node in the PST $T$ that is reached by following the suffix of $[s_{n-1}a_n]$. 
Limiting the context to the previous state-action symbols encodes the Markov assumption into the modelling of the state-reward transition.
The distribution $\hat{P}([sr_{n}]_1 | T([s_{n-1}a_n]), sr_{1:n-1})$ of $[sr_{n}]_1$ under the model $T$ is provided by a KT estimator \cite{KT06} at $T([s_{n-1}a_n])$ that is computed by accumulating the frequency of bits following $[s_{n-1}a_n]$ from the history $sr_{1:n}$. The function $\gamma$ measures the complexity of a model via a prefix coding and the weighting term $2^{-\gamma(T)}$ provides an Occam-like penalty on the model $T$.
% The weighting term $2^{-\gamma(T)}$ provides an Occam-like penalty on the model $T$ and $\gamma$ is a function measuring complexity via a prefix coding of $T$. 
For more details on the prior weighting see \cite{veness09}. For the $l$-th bit, the distribution conditions on the previous state-action as well as the previous $l-1$ bits of $sr_{n}$:
\begin{align}
    \hat{P}^{(l)}([sr_{n}]_l | [sr_{n}]_{<l}, asr_{1:n-1}, a_n) = \sum_{T \in \mathcal{T}_{d+l-1}} 2^{-\gamma(T)} \hat{P}([sr_{n}]_l | T([s_{n-1}a_n][sr_{n}]_{<l}), sr_{1:n-1}) \label{eqn:phi_bctw_T}
\end{align}
The prediction for $sr_{n}$ using $\Phi$-BCTW is then given by composing the predictions for each bit:
\begin{align*}
    \hat{P}(sr_{n} | asr_{1:n-1}, a_n) = \prod_{j=1}^{k} \hat{P}^{(j)}([sr_{n}]_j| [sr_{n}]_{<j}, sra_{1:n-1}, a_n)
\end{align*}
Note that the conditional reward distribution $\hat{P}(r_{n} | sra_{1:n-1}, s_n)$ is modelled by the last $\abs{[r_n]}$ context trees. The following result states that $\Phi$-BCTW computes a mixture environment model.
\begin{prop}
    \label{prop:phi_mixture}
    Let $\hat{P}(sr_{1:n} | a_{1:n}, T) = \prod_{i=1}^{n} \prod_{j=1}^{k} \hat{P}([sr_{i}]_{j} | T([s_{i-1}a_i][sr_{i}]_{<j}), sr_{1:i-1}  )$ and $\Gamma(T) = \sum_{j=1}^{k} \gamma(T_i)$. $\Phi$-BCTW computes a mixture environment model of the form:
    \begin{align*}
        \xi(sr_{1:n} | a_{1:n}) = \sum_{T \in \mathcal{T}_{d} \times \ldots \times \mathcal{T}_{d+k-1}} 2^{-\Gamma(T)} \hat{P}(sr_{1:n} | a_{1:n}, T)
    \end{align*}
    % where $\Gamma(T) = \sum_{j=1}^{k} \gamma(T_i)$ and $\hat{P}(sr_{1:n} | a_{1:n}, T) = \prod_{i=1}^{n} \prod_{j=1}^{k} \hat{P}([sr_{i}]_{j} | T([sa_{i-1}][sr_{i}]_{<j}), sr_{1:i-1}  )$.
\end{prop}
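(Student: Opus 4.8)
The plan is to derive the claimed mixture form by a two-level application of the Context Tree Weighting (CTW) mixture identity followed by the distributive law; the bulk of the argument is then bookkeeping. First I would expand the composed prediction using the chronological factorisation of an environment model,
\[
\hat{P}(sr_{1:n} \mid a_{1:n}) = \prod_{i=1}^{n} \hat{P}(sr_i \mid asr_{1:i-1}, a_i) = \prod_{i=1}^{n} \prod_{j=1}^{k} \hat{P}^{(j)}([sr_i]_j \mid [sr_i]_{<j}, asr_{1:i-1}, a_i),
\]
and then swap the order of the double product to get $\prod_{j=1}^{k}\big(\prod_{i=1}^{n} \hat{P}^{(j)}([sr_i]_j \mid \cdots)\big)$, reducing the problem to evaluating, for each fixed bit position $j$, the product over time of the $j$-th context tree's conditional predictions.

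For each fixed $j$, I would invoke the fundamental CTW property for the depth-$(d+j-1)$ context tree: unrolling the weighting recursion to the root and telescoping the leaf KT estimators over $i = 1, \ldots, n$ gives
\[
\prod_{i=1}^{n} \hat{P}^{(j)}([sr_i]_j \mid \cdots) = \sum_{T_j \in \mathcal{T}_{d+j-1}} 2^{-\gamma(T_j)} \prod_{i=1}^{n} \hat{P}\!\left([sr_i]_j \mid T_j([s_{i-1}a_i][sr_i]_{<j}),\, sr_{1:i-1}\right),
\]
because the KT estimator at a leaf $u$ accumulates exactly the bits routed to $u$, so its cumulative value is the block probability that $T_j$ assigns. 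The one point needing care is that this CTW identity, as proved in \cite{WST95,veness09}, depends only on the binary branching structure of the trees and the prior $2^{-\gamma(\cdot)}$ over PSTs, not on the semantics of the node tests; replacing the $\ell$-th-context-bit test at level $\ell$ by an arbitrary predicate $p_\ell$ only changes which leaf a given history maps to, so the identity carries over to $\Phi$-BCTW verbatim.

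Next I would substitute these $k$ identities into the regrouped product and apply the distributive law, turning a product of $k$ sums into a single sum over tuples $T = (T_1, \ldots, T_k) \in \mathcal{T}_d \times \cdots \times \mathcal{T}_{d+k-1}$: the weights multiply to $\prod_{j=1}^{k} 2^{-\gamma(T_j)} = 2^{-\Gamma(T)}$ and the per-tree block probabilities multiply to exactly $\hat{P}(sr_{1:n} \mid a_{1:n}, T)$ as defined in the statement. Finally I would check that $\xi$ really is a mixture environment model: each $\hat{P}(\cdot \mid \cdot, T)$ is a product of normalised conditional KT distributions (actions are only conditioned on, never predicted) and is hence a chronological environment model, and the prior coefficients are nonnegative with $\sum_{T} 2^{-\Gamma(T)} = \prod_{j=1}^{k} \sum_{T_j \in \mathcal{T}_{d+j-1}} 2^{-\gamma(T_j)} = 1$ since $\gamma$ is a complete prefix code over PSTs.

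The main obstacle is the per-$j$ CTW block identity; the rest is algebra. Within it, the subtle point is that $\hat{P}^{(j)}([sr_i]_j \mid \cdots)$ must be read as the CTW mixture's one-step conditional — the ratio of root-weighted block probabilities before and after observing $[sr_i]_j$ — since this is exactly what makes the time-product telescope to the root block mixture; the displayed per-bit formula in the construction should be understood in that sense, not as a literal mixture of one-step tree predictions. I would also verify that the Markov context $[s_{i-1}a_i]$ and running history $sr_{1:i-1}$ are threaded identically through all $k$ trees, so that the leaf maps and KT counts agree with the definition of $\hat{P}(sr_{1:n} \mid a_{1:n}, T)$.
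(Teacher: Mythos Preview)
Your proposal is correct and follows essentially the same route as the paper's proof: expand $\xi(sr_{1:n}\mid a_{1:n})$ chronologically and bit-wise, replace each per-bit factor by the CTW mixture over PSTs, then distribute to obtain the product prior $2^{-\Gamma(T)}$ over tuples $T=(T_1,\ldots,T_k)$. Your treatment is in fact more careful than the paper's at the one nontrivial step---you explicitly invoke the CTW block identity so that, for each fixed $j$, the time-product $\prod_{i=1}^n \hat P^{(j)}(\cdot)$ telescopes to a \emph{single} sum over $T_j\in\mathcal T_{d+j-1}$ (rather than an $n$-fold sum), and you additionally verify that the prior weights normalise; the paper simply cites Equation~(\ref{eqn:phi_bctw_T}) and distributes.
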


The following guarantees fast reward distribution convergence for mixture environment models.
\begin{theorem}
    \label{thm:MEM_reward_convergence}
    Let $\mu$ be the true environment and $\xi$ be the mixture environment model over a model class $\mathcal{M}$. For all $n \in \N, a_{1:n}, o_{1:n}$
    \begin{align*}
        \sum_{k=1}^{n} \sum_{r_{1:k}} \mu(r_{<k} | ao_{<k}) \left[ \mu(r_k | aor_{<k}ao_k) - \xi(r_k | aor_{<k}ao_k) \right]^{2} \leq \min_{\rho \in \mathcal{M}} \left[ -\ln w^{\rho}_0 + D_{1:n}(\mu || \rho) \right]
    \end{align*}
    where $D_{1:n}(\mu || \rho) \coloneqq \sum_{r_{1:n}} \mu(r_{1:n} | ao_{1:n}) \ln \frac{\mu(r_{1:n} | ao_{1:n})}{\rho(r_{1:n} | ao_{1:n})}$.
\end{theorem}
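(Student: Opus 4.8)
The plan is to reduce this to the existing convergence bound for mixture environment models, Theorem~\ref{thm:MEMConvergence}, applied not to the full percept $or_k$ but to the reward component $r_k$ conditioned on $ao_{1:k}$. The key observation is that, for a fixed action sequence $a_{1:n}$ and a fixed observation sequence $o_{1:n}$, the conditional distributions $\mu(r_{1:n} \mid a_{1:n}, o_{1:n})$ and $\rho(r_{1:n} \mid a_{1:n}, o_{1:n})$ themselves define a (conditional) environment over the reward alphabet: we can treat the pair $(a_i, o_i)$ as an ``extended action'' fed to the agent at step $i$, and $r_i$ as the percept returned. The chronological-semimeasure / compatibility condition on $\rho$ ensures that $\rho(r_{<k} \mid ao_{<k}) = \sum_{r_k} \rho(r_{1:k} \mid ao_{1:k})$, so this reward-only process is itself a well-defined environment in the sense used in the Background section, and $\xi(r_{1:n} \mid ao_{1:n}) = \sum_{\rho} w_0^\rho \, \rho(r_{1:n} \mid ao_{1:n})$ is a mixture environment model over the induced reward-environment class.

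With that reframing in hand, the first step is to make precise the passage from the mixture $\xi$ over $sr$-sequences (or over full percepts) to the induced mixture over reward sequences: one marginalises out the state/observation components, checks that the prior weights $w_0^\rho$ carry over unchanged, and verifies that the predictive factorisation $\xi(r_k \mid aor_{<k}ao_k) = \xi(r_{1:k}\mid ao_{1:k}) / \xi(r_{<k}\mid ao_{<k})$ is the correct conditional. The second step is to invoke Theorem~\ref{thm:MEMConvergence} verbatim on this induced class, with ``action'' $\leftarrow (a_i,o_i)$ and ``percept'' $\leftarrow r_i$, which immediately yields
\begin{align*}
    \sum_{k=1}^{n} \sum_{r_{1:k}} \mu(r_{<k} \mid ao_{<k}) \left( \mu(r_k \mid aor_{<k}ao_k) - \xi(r_k \mid aor_{<k}ao_k) \right)^2 \leq \min_{\rho \in \mathcal{M}} \left\{ \ln \frac{1}{w_0^\rho} + D_n(\mu \| \rho) \right\},
\end{align*}
and the third step is just to identify the right-hand side's divergence term $D_n$ with $D_{1:n}(\mu\|\rho)$ as defined in the statement, which is the telescoping/chain-rule identity $D_{1:n}(\mu\|\rho) = \sum_{k=1}^n \sum_{r_{<k}} \mu(r_{<k}\mid ao_{<k}) \sum_{r_k} \mu(r_k\mid aor_{<k}ao_k)\ln\frac{\mu(r_k\mid\cdot)}{\rho(r_k\mid\cdot)}$.

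I expect the main obstacle to be purely bookkeeping rather than conceptual: one has to be careful that the original Theorem~\ref{thm:MEMConvergence} is stated for environments whose ``inputs'' are actions chosen by the agent, whereas here the observations $o_i$ are not chosen but generated, so strictly speaking we are conditioning on an event of the form $\{O_{1:n} = o_{1:n}\}$ rather than intervening. The clean way around this is to note that Theorem~\ref{thm:MEMConvergence} is a statement about arbitrary fixed input sequences $a_{1:n}$ and holds for every such sequence, so it applies in particular when we enlarge the input alphabet to $\A \times \Obs$ and fix the sequence $(a_{1:n}, o_{1:n})$; no genuine interventional semantics is needed. The only remaining care-point is ensuring the compatibility/chronological condition genuinely transfers to the reward-marginal environments — this follows from summing the original compatibility condition over the observation coordinate — and that the KL chain rule is applied with the conditioning variables in the correct order. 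None of these steps requires new ideas beyond those already in Theorem~\ref{thm:MEMConvergence}.
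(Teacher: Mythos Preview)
Your proposal is correct and amounts to the same argument the paper gives. The paper does not invoke Theorem~\ref{thm:MEMConvergence} as a black box but instead inlines its proof step by step --- Pinsker-type inequality $\sum_i (y_i - z_i)^2 \le \sum_i y_i \ln(y_i/z_i)$, chain rule/telescoping for the KL term, then the mixture dominance $\xi \ge w_0^\rho \rho$ --- while explicitly remarking that it ``follows the same steps as Theorem~\ref{thm:MEMConvergence}''; your extended-action reframing is a cleaner packaging of the identical computation, with the bookkeeping checks you list (chronological condition on the reward-marginal environments, identification of $D_n$ with $D_{1:n}$) corresponding exactly to the paper's steps (b) and (c).
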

The proofs of Proposition \ref{prop:phi_mixture} and Theorem \ref{thm:MEM_reward_convergence} can be found in Appendix \ref{appendix:ctw_mixture}. 

As a mixture environment model, $\Phi$-BCTW's conditional distribution over reward sequences is guaranteed to converge quickly by Theorem \ref{thm:MEM_reward_convergence} provided the true environment can be modelled by the provided predicates. Combined with $\Phi$-BCTW's Occam-like model size penalty, $\Phi$-BCTW can be viewed as optimising $Cost_M^0$ over the set of abstractions defined by the given predicates. 

\subsection{Expectimax Approximation With $\rho$UCT}
We use Monte-Carlo Tree Search to perform an approximation of the finite horizon expectimax operation in AIXI. We employ the $\rho$UCT algorithm \cite{veness09} where actions are selected according to the UCB policy criteria. Unlike $\rho$UCT however, we are able to store search nodes of the tree and maintain value estimates for the duration of the agent's life-span. This is due to the feature selection and modelling by $\Phi$-BCTW reducing the environment model to an MDP with a `small' state space. Thus, the number of samples through the search tree that need to be generated per step is greatly reduced.

\section{Experiments}
We evaluate our $\Phi$-AIXI-CTW agent's performance across a number of different domains. We begin by evaluating our agent's performance on four simpler domains before considering the epidemic control problem. All experiments were performed on a 12-Core AMD Ryzen Threadripper 1920x processor and 32 gigabytes of memory.

\subsection{Simple Domains}
We consider four simple domains: Biased rock paper scissors (RPS) \cite{FMVRW07}, Taxi \cite{DI99}, Jackpot and Stop Heist. A full description of each of the environments is provided in Appendix \ref{appendix:additional_domains}. As baseline comparison methods we compare against two decision-tree based, iterative state abstraction methods using splitting criteria as defined in U-Tree \cite{McC96} and PARSS-DT \cite{HFD17}. In U-Tree, an existing node (representing a state) is split if splitting results in a statistically significant difference in the resulting Q-values as computed by a Kolmogorov-Smirnov test. In PARSS-DT, nodes are split if the resulting value functions are sufficiently far apart. Model learning is simply done by frequency estimation. To ensure our method does not have an informational advantage, both baseline methods are given the same initial set of predicate functions to consider node splits from. In both baselines, leaf nodes were tested for splits every 100 steps.

% \begin{wrapfigure}{r}{0.6\textwidth}
\begin{figure}[h]
    \centering
    \resizebox{0.8\textwidth}{!}{
    \begin{tabular}{cc}
        \includegraphics[]{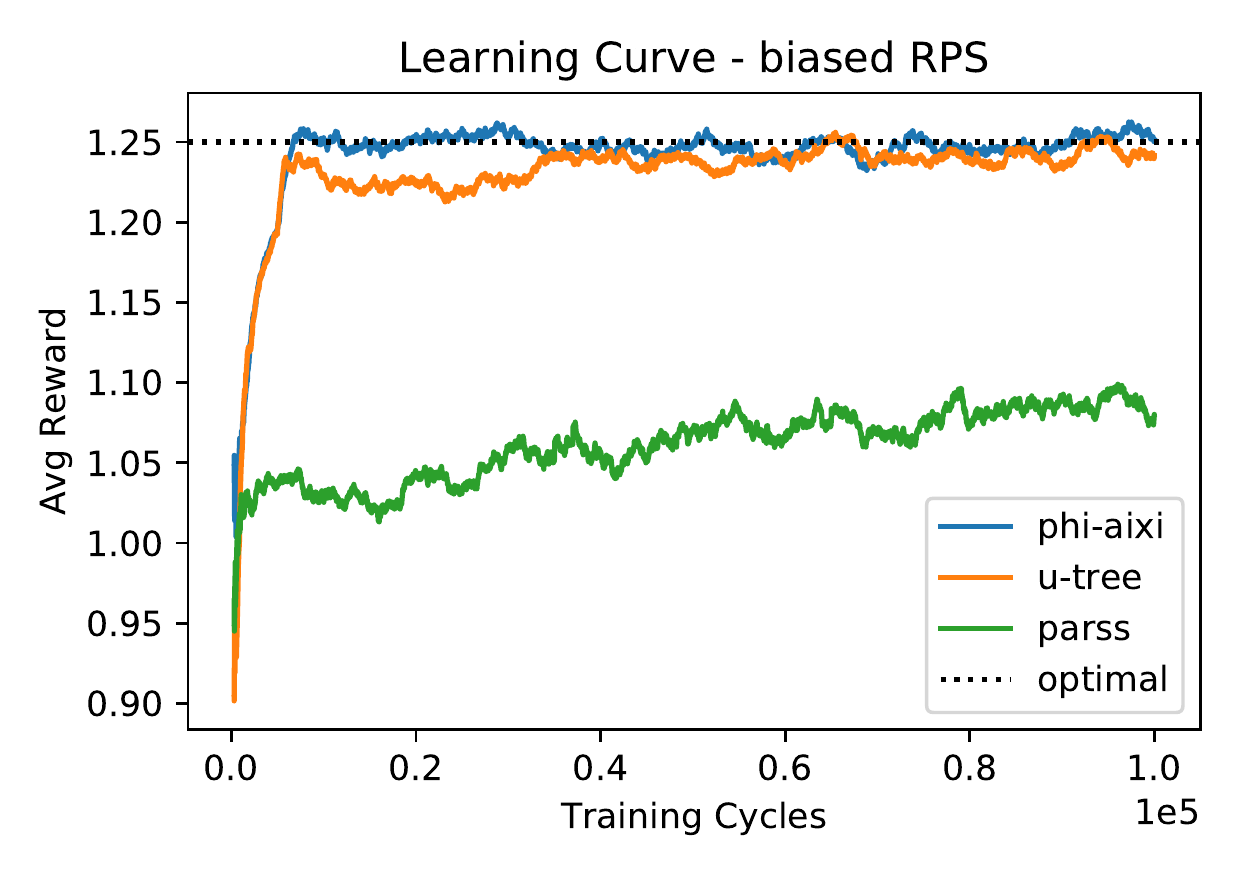} &  
        \includegraphics[]{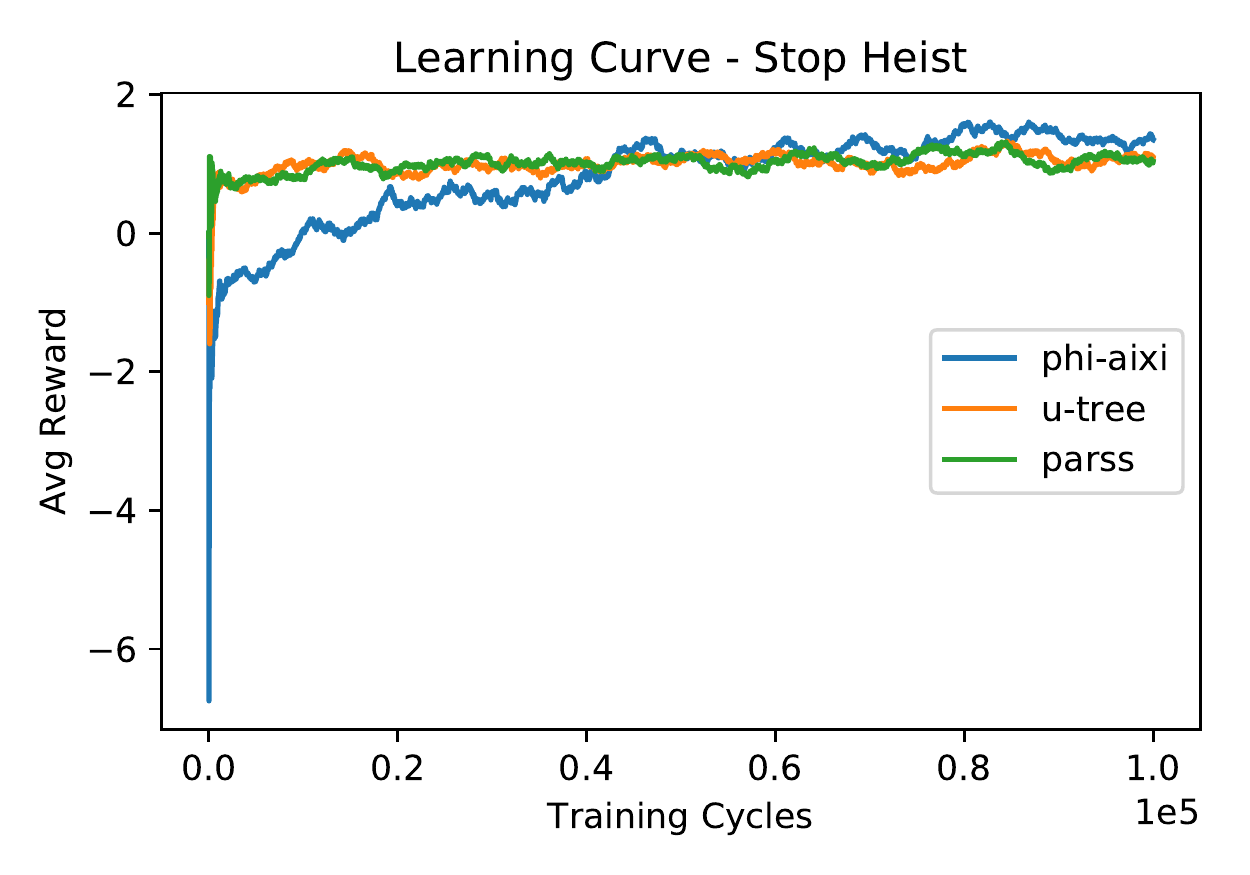} \\
        \includegraphics[]{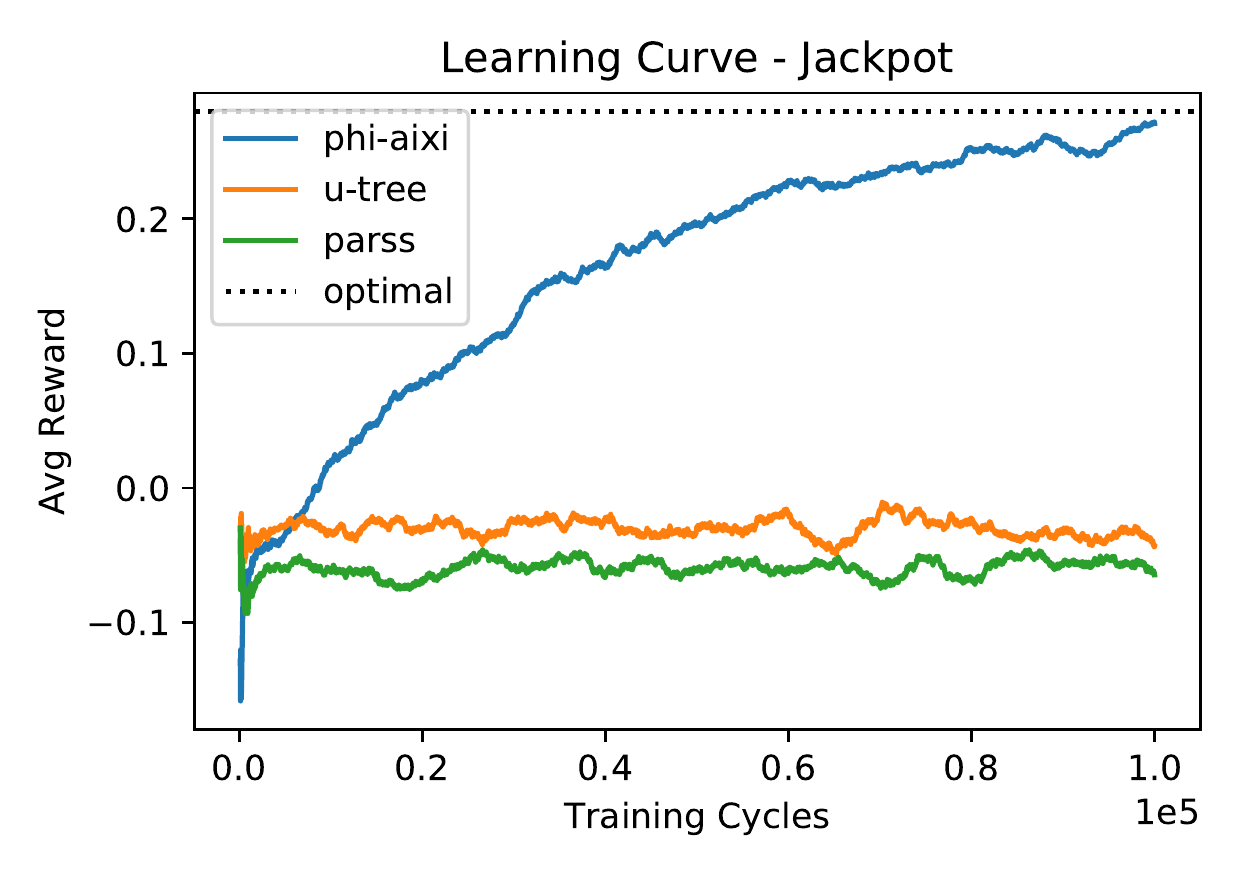} &
        \includegraphics[]{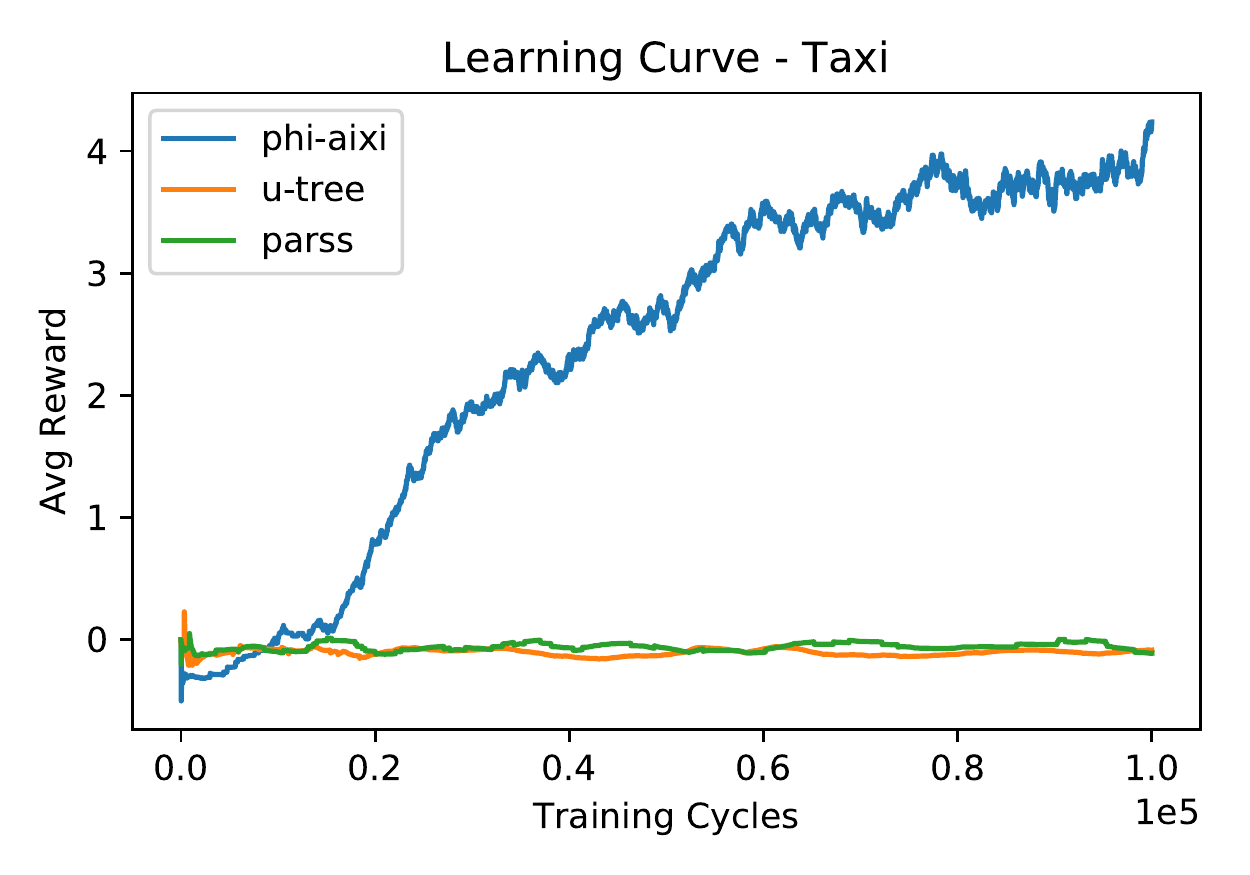}
    \end{tabular}
    }
    \caption{Learning curves on four domains.}
    \label{fig:extra_domains}
\end{figure}
% \end{wrapfigure}

From Figure \ref{fig:extra_domains}, it can be seen that our $\Phi$-AIXI-CTW agent either outperforms or performs on par with the two baseline decision tree methods. U-Tree was able to perform well in the biased RPS and Stop Heist domains. The PARSS based method was able to perform well on the Stop Heist domain and was displaying evidence of learning, albeit slowly, on the biased RPS domain. U-Tree and PARSS were both unable to perform well in the Jackpot and Taxi environments. 
One possible issue is that Jackpot and Taxi require more predicates than biased RPS and Stop Heist to be able to perform well.
% One possible issue is that Jackpot and Taxi require more predicates to be able to perform well compared to the biased RPS and Stop Heist domains. 
Also, both methods can be susceptible to producing spurious splits. This can make the state space too large and the resulting transition model can become too difficult to learn with. 

\subsection{Epidemic Control over Contact Networks}
We now evaluate the agent's performance on the novel task of learning to control an epidemic process over large-scale contact networks, a highly structured environment that exhibits an exponential state space, partial observability and history dependency. 
% We first introduce the epidemiological model before discussing the setup and performance of our agent.

\textbf{Epidemic Processes on Contact Networks.}
In this paper epidemic processes are modelled on contact networks \cite{Pastor:2015,Nowzari:2016,Newman:2018}. 
A contact network is an undirected graph where the nodes represent individuals and the edges represent interactions between individuals. Each individual node in the graph is labelled by one of four states corresponding to their infection status: Susceptible ($\sS$), Exposed ($\sE$), Infectious ($\sI$), and Recovered ($\sR$). Every node also maintains an immunity level $\omega$. The model evolves as a POMDP. At time $t$, the temporal graph is given by $G_t = (V, \mathcal{E}_t)$ with $V$ as the set of nodes and $\mathcal{E}_t$ as the set of edges at time $t$. A function $\zeta_t : V \to \{\sS, \sE, \sI, \sR\} \times \{1, \eta_1, \eta_2\}$ maps each node to its label and one of three immunity levels where $\eta_1, \eta_2 \in  \R_+$. Together, $(G_t, \zeta_t)$ constitute the state. 
% Figure \ref{fig:epi} depicts the dynamics of how a node transitions between states. 
At time $t$, a Susceptible node with $k_t$ Infectious neighbours becomes Exposed with probability $\frac{1-(1-\beta)^{k_t}}{\omega}$, where $\beta \in [0, 1]$ is the rate of transmission. An Exposed node becomes Infectious at rate $\sigma$. Similarly, an Infectious node becomes Recovered at a rate $\gamma$ and becomes Susceptible again at a rate $\rho$.

At every time step, the process emits an observation on each node from $\{+, -, ?\}$ corresponding to whether a node tests positive, negative or is unknown/untested. The observation model is parametrised such that positive tests are more likely if the underlying node is Infectious, corresponding to the realistic scenario where sick individuals are more likely to test and also test positive. A more detailed exposition on the exact transition and observation models is given in Appendix \ref{appendix:epidemic_process}.

The agent can perform an action at each time step. We consider a set of 11 possible actions  $\{ \DoNothing, \Vaccinate(i, j), \Quarantine(i) \}$, where $i \in [0, 0.2, 0.4, 0.6, 0.8, 1.0]$ and $j = i+0.2$. 
% As the name suggests, the $DoNothing$ action performs no operations on the epidemic process. 
A $\Vaccinate(i, j)$ action increases the immunity level of the top $i$th to $j$th percent of nodes as ranked by betweenness centrality by one level, up to the maximum value. Note that a conferred immunity level lasts for the entire length of an episode unless increased. A $\Quarantine(i)$ action quarantines the top $i$th percent of nodes by removing all edges incident on those nodes for one time step. The reward function is parametrised to evaluate the agent's ability to balance the cost of its actions against the cost of the epidemic. At each time step, the instantaneous reward is given by
\begin{align*}
    r_t(o_t, a_{t-1}) \coloneqq -\lambda Positives(o_t) - Action\_Cost(a_{t-1})
\end{align*}
where $\lambda \in R_{+}$, $Positives(o_t)$ counts the number of positive tests in the observation $o_t$ and $Action\_Cost(a_{t-1})$ is a function determining the cost of each action. 
Thus changing $\lambda$ changes the ratio of contributions to the reward received at each time step. If the agent successfully terminates the epidemic, i.e. there are no more Exposed or Infectious nodes, the agent receives a positive reward of $2$ per node. During a run, it is possible the environment never reaches an absorbing state. If the agent has not successfully stopped the epidemic process after 1000 steps, we terminate the episode.

Epidemic control is a topical subject given the prevalance of COVID-19 and recent approaches to this question vary wildly in terms of the how the problem is modelled. 
Most approaches vary in the different ways they model the dynamics of an epidemic and the action space considered is usually fairly small \cite{ArPe20, CELT20, CHRTOMP20, Brauer:2012,Anderson:2013, BKSE21, KCJB20}. Ultimately, the differences in the modelling of epidemic processes makes comparison between results difficult. With no consensus on the appropriate model to use, our model is chosen as it is sufficiently complex to demonstrate the efficacy of our agent. 

\textbf{Experimental Setup.} 
We use an email network dataset as the underlying contact network, licensed under a Creative Commons Attribution-ShareAlike License, containing 1133 nodes and 5451 edges \cite{RN15, GDDG03}. The transition model, observation model, $Action\_Cost(a_t)$ are parametrised the same way across all experiments (see Table \ref{tab:trans_obs_params} in Appendix \ref{appendix:epidemic_process}). A $\Quarantine(i)$ action imparts a cost of $1$ per node that is quarantined at the given time step. A $\Vaccinate(i, j)$ action imparts a lower cost of $0.5$ per node. The parameters $\lambda, \eta_1, \eta_2$ are varied across experiments. We generate a set of 1489 predicate functions consisting of predicates on top of functions such as the percentage each action was selected in the last $N$ steps, the observed infection rate on different subsets of nodes, and also the infection rate on different subset of nodes as computed by particle filtering. A full description of how the predicates were generated is given in Appendix \ref{appendix:predicates}. The $\Phi$-AIXI-CTW agent is trained in an online fashion. The agent explores with probability $\epsilon \alpha^{t}$ at each step $t$ until $\epsilon \alpha^{t} < 0.03$, where the agent performs in an $\epsilon$-greedy way with exploration rate $0.03$. RF-BDD was performed with a threshold value of $0.9$ across all rewards.

\subsubsection{Results}

\textbf{Selected Features.} The set of predicates chosen in each experiment by RF-BDD are described in Appendix \ref{appendix:predicates}. In general, RF-BDD tended to select predicates representing bits in the binary representation of the observed infection rate and the rate of change of the observed infection rate on different subsets of the betweenness centrality nodes. A random selection of different action sequence indicator functions were also kept. Note that predicates on the infection rate as computed by particle filtering were not selected and this can be attributed to the particle degeneracy issue that occurs in high dimensional problems \cite{SBM15}. These choices by RF-BDD highlight its ability to select a small set of predicates that provided useful information for epidemic control. The inclusion of a random selection of different action sequence indicator functions also demonstrates that it is not perfect. A more thorough investigation is the topic of future work. 

\textbf{Comparison to Baseline Methods.} Both U-Tree and PARSS were used as baseline methods in the same configuration as in the simpler domains. From Figure \ref{fig:epidemic_comparisons}, $\Phi$-AIXI-CTW outperforms the two baseline methods and both U-Tree and PARSS were unable to improve. The epidemic problem as formulated is likely difficult for both U-Tree and PARSS as the number of predicates required to perform well is fairly large. Also, both methods can be susceptible to producing spurious splits. 
This can make the resulting state space and transition model too large and difficult to learn with.
% This can make the state space too large and the resulting transition model can become too difficult to learn with.

\begin{figure}%[h]
    \centering
    \resizebox{0.9\textwidth}{!}{\begin{tabular}{cc}
        \includegraphics{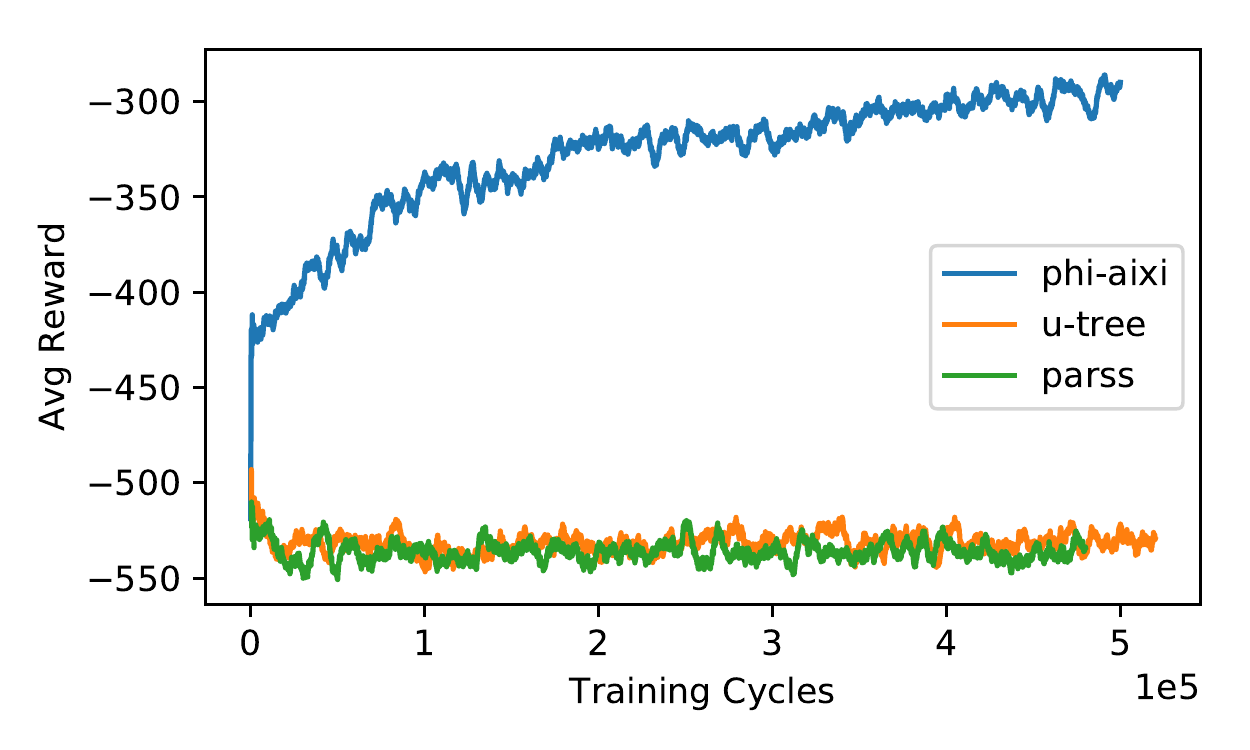} &  
        \includegraphics{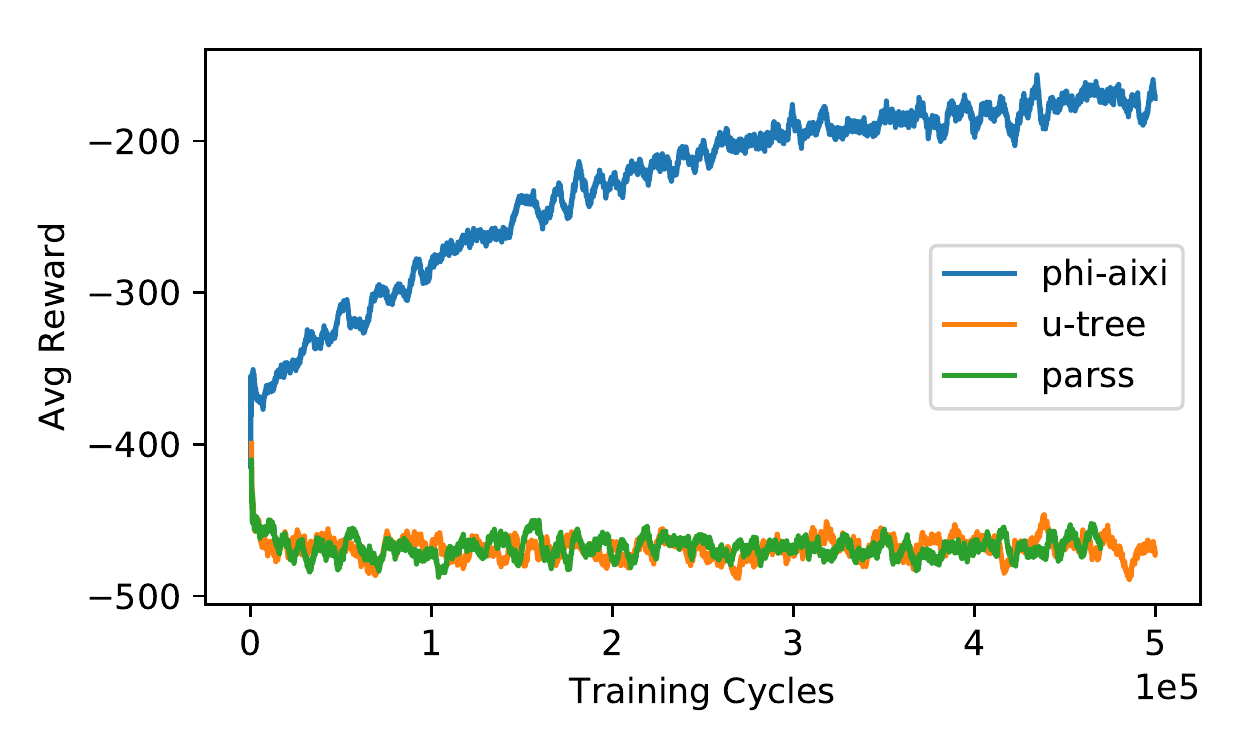} \\
        \tiny a) & \tiny b)
    \end{tabular}}
    \caption{Baseline comparison results on a) $\lambda = 1, \eta_1 = 2, \eta_2 = 4$. b) $\lambda = 1, \eta_1 = 10, \eta_2 = 20$.}
    \label{fig:epidemic_comparisons}
\end{figure}

\textbf{RF-BDD vs. Random Forest Feature Selection.} We also compare the performance of our agent under RF-BDD or Random Forest feature selection. Other feature selection methods such as forward selection methods \cite{BPZL12} were also considered but were found to offer no advantage over Random Forest. For each experiment, we plot four graphs. The first graph plots the learning curve computed as a moving average over 5000 steps of the reward per cycle. The remaining graphs provide insight into the learnt policy. The second graph plots the percentage each type of action is performed over a moving average of 500 steps as well as a moving average of the infection rate over the same window. The remaining two graphs provide an example of the agent's behaviour and its impact on the infection rate over an episode collected toward the end of training. 

Figure \ref{fig:exp12} depicts the $\Phi$-AIXI-CTW agent's performance with RF-BDD feature selection over two experiments with differing vaccination efficacy and an infection rate weighting of $\lambda = 10$. RF-BDD was run with 500 trees generated using random samples of 8 features and used a threshold value of 0.9. 
In both experiments, the agent learns to be more aggressive in terminating the epidemic. In Experiment 1 the agent's behaviour is to quarantine almost always, and occasionally vaccinate. This is reflected in the plots of the agent's behaviour in an episode toward the end of training. Quarantine actions are performed the entire time except once, resulting in a fast termination of the epidemic process. In contrast, Experiment 2 shows that the agent prefers to vaccinate more often. This preference can be attributed to the added effectiveness that a stronger vaccination action imparts to lowering the infection rate. When analysing the agent's behaviour over an episode, it is clear that the agent chooses to vaccinate almost always whilst only quarantining at timely moments. This results in the agent also learning to terminate the episode early, albeit not as quickly. In comparison to Experiment 1, the average reward per cycle is also higher at the end of learning. 

Figure \ref{fig:expRF} depicts the $\Phi$-AIXI-CTW agent's performance with Random Forest feature selection. Random Forest was initialized with 500 trees and splits were chosen using either the gini impurity or entropy and the top thirty highest weighting features were chosen. The resulting predicate sets were very similar for the two criteria and so we only present results for the entropy criteria case. When comparing the agent's performance when using RF-BDD (Figure \ref{fig:exp12}) versus Random Forest (Figure \ref{fig:expRF}), it is quite clear that the agent performs better with the predicates chosen using RF-BDD. As can be seen from Figure \ref{fig:expRF}a) and \ref{fig:expRF}b), the agent's learning curve when using the predicates chosen by Random Forest did not improve in either case and actually decreased. The remaining three plots also show that there is no clear discernible pattern of behaviour learnt by the agent. This suggests that the features chosen by Random Forest were not informative enough to allow the agent to learn to perform well. The inability of traditional feature selection methods like Random Forest to pick out useful features for the agent was a strong motivation for constructing an alternative in RF-BDD.

\begin{figure}%[h]
    \centering
    \setlength{\tabcolsep}{.01in}
    \resizebox{\textwidth}{!}{\begin{tabular}{cccc}
        \includegraphics[scale=0.3]{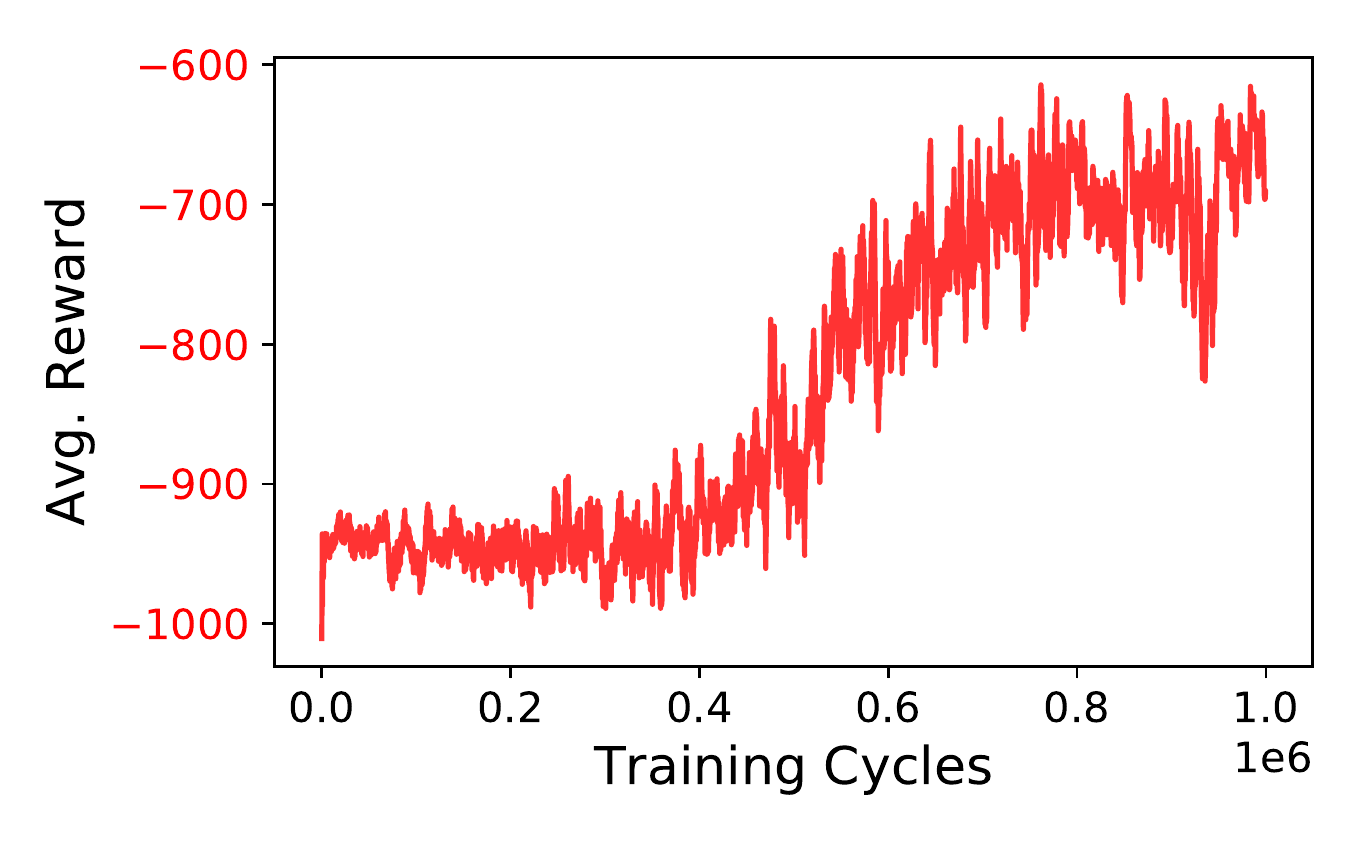} &  \includegraphics[scale=0.3]{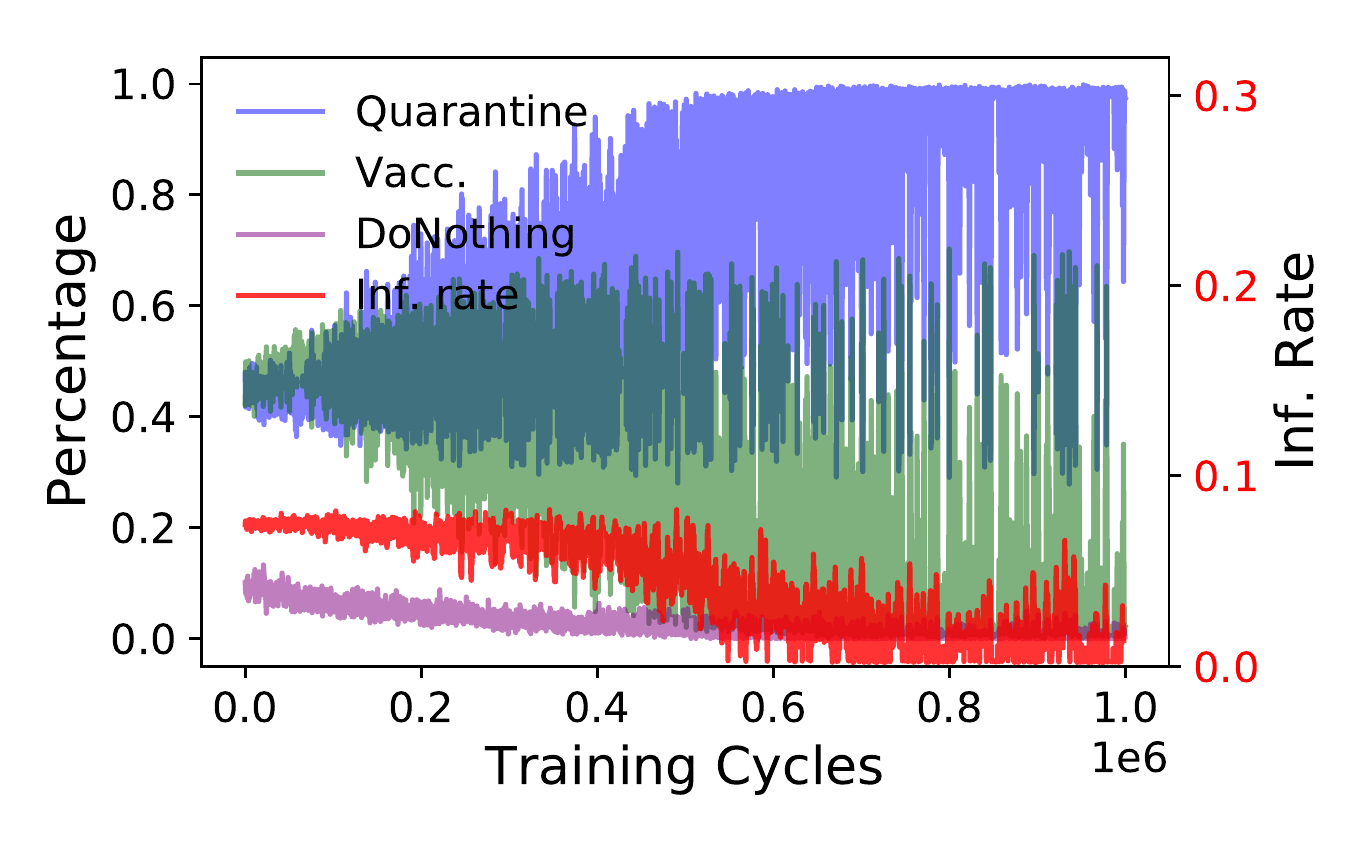} &
        \includegraphics[scale=0.3]{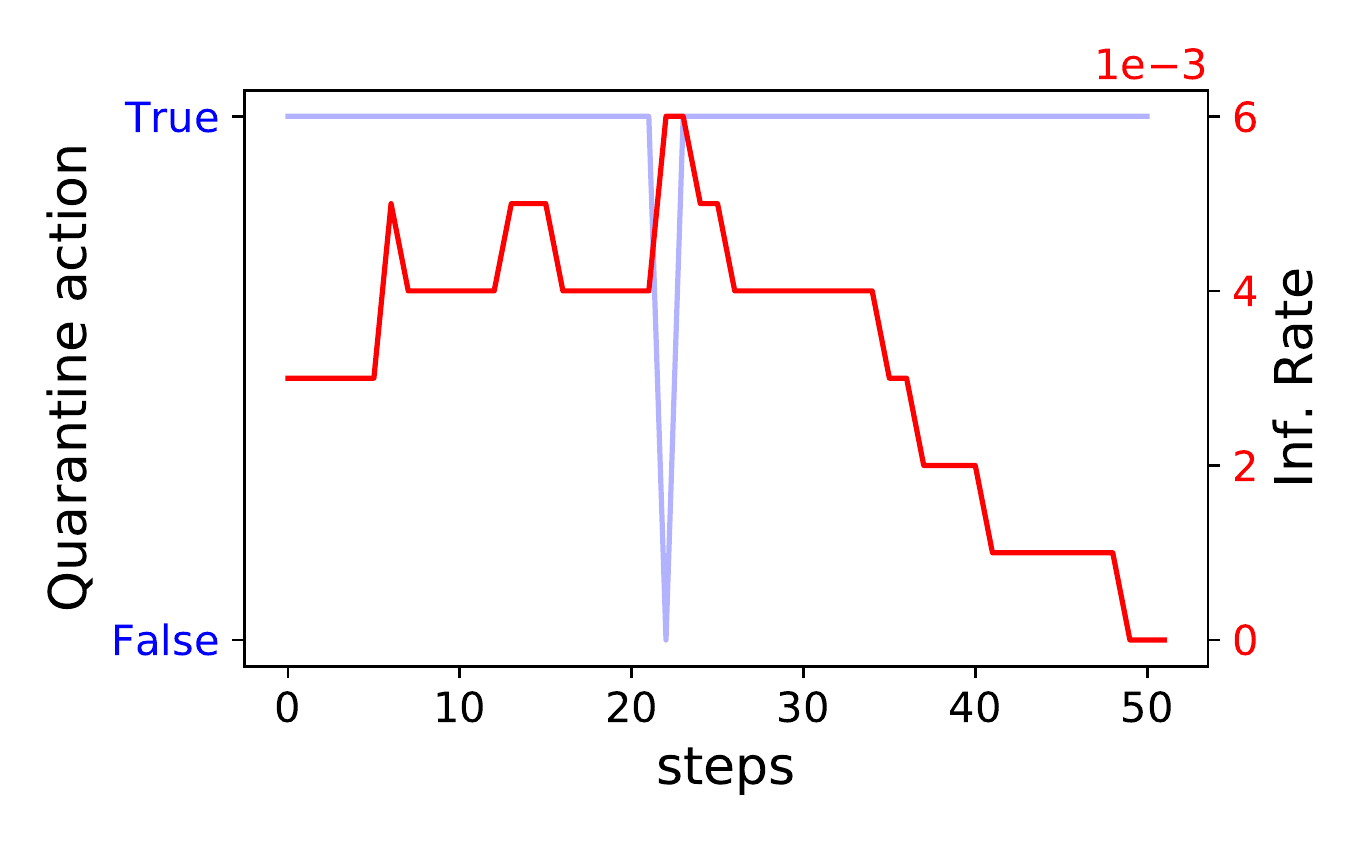} & 
        \includegraphics[scale=0.3]{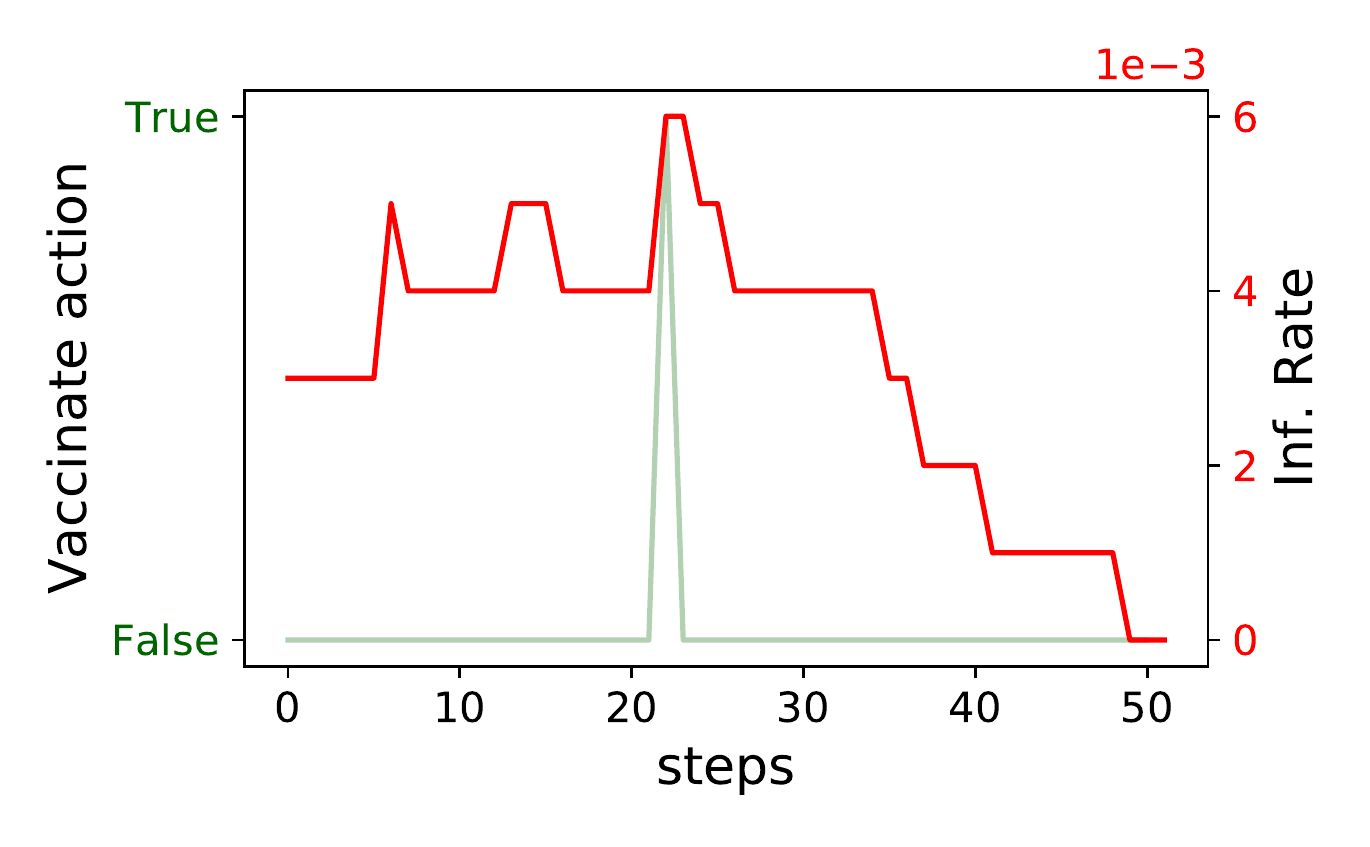}\\
    \end{tabular}}
    \tiny a)
    
    \setlength{\tabcolsep}{.01in}
    \resizebox{\textwidth}{!}{\begin{tabular}{cccc}
        \includegraphics[scale=0.32]{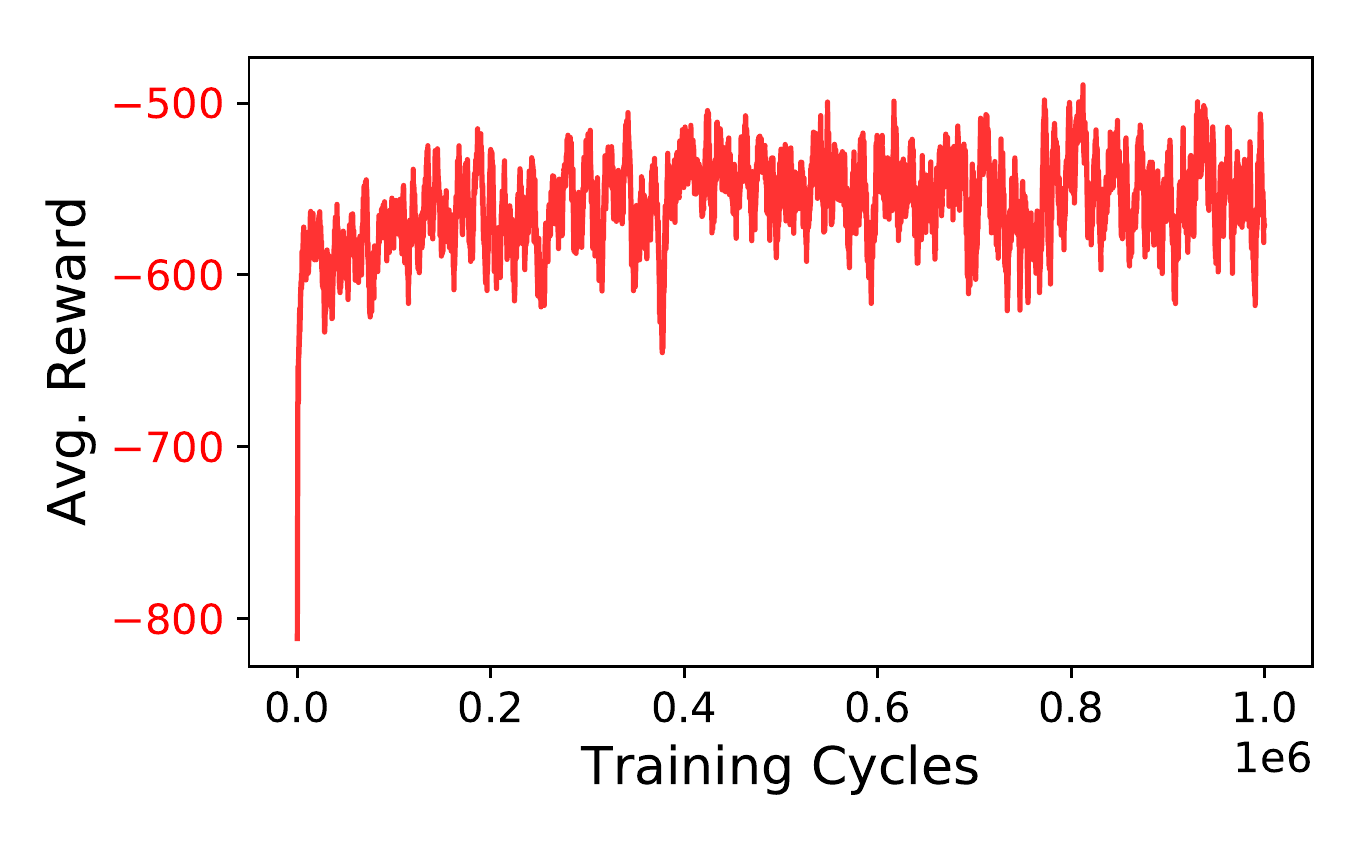} &  \includegraphics[scale=0.32]{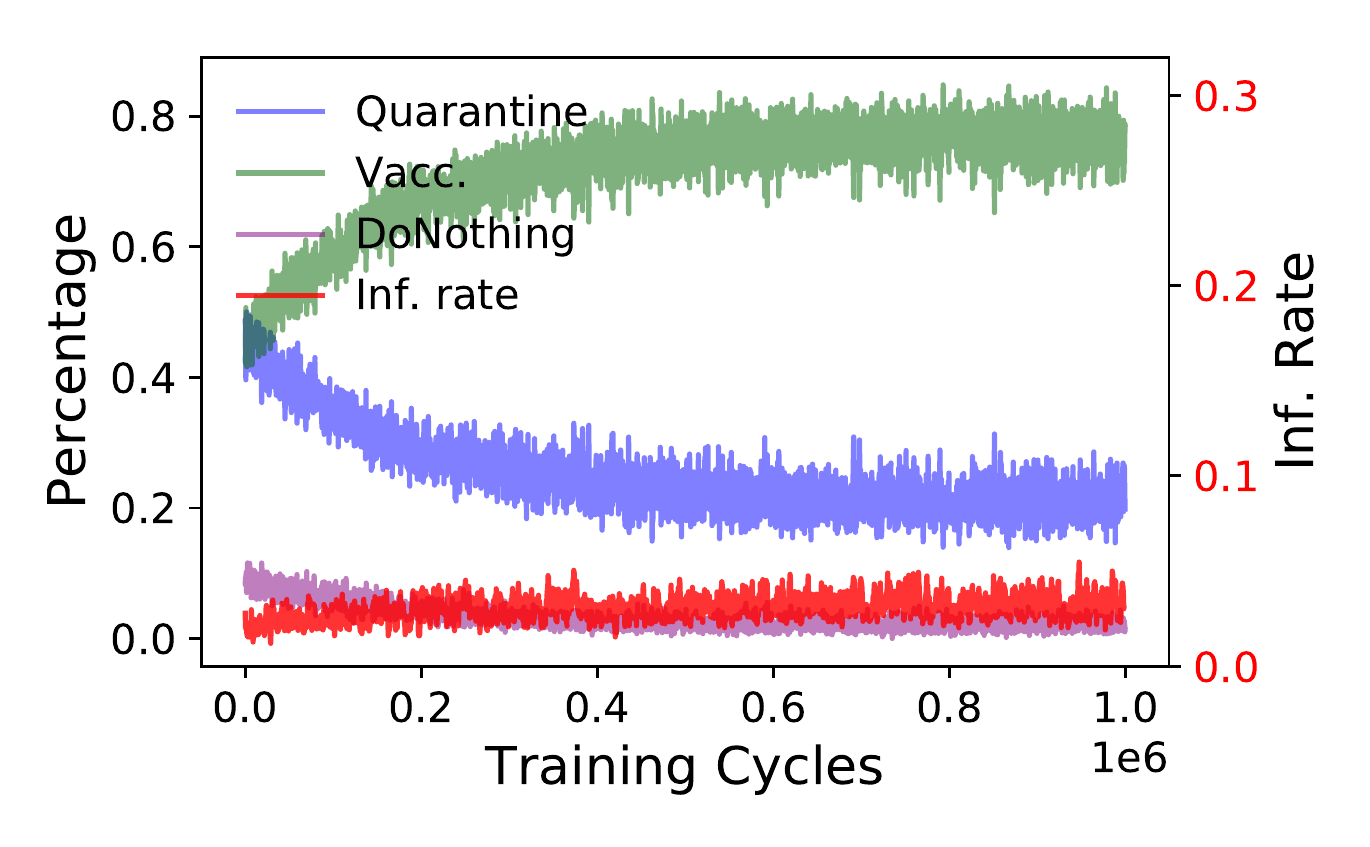} &
        \includegraphics[scale=0.32]{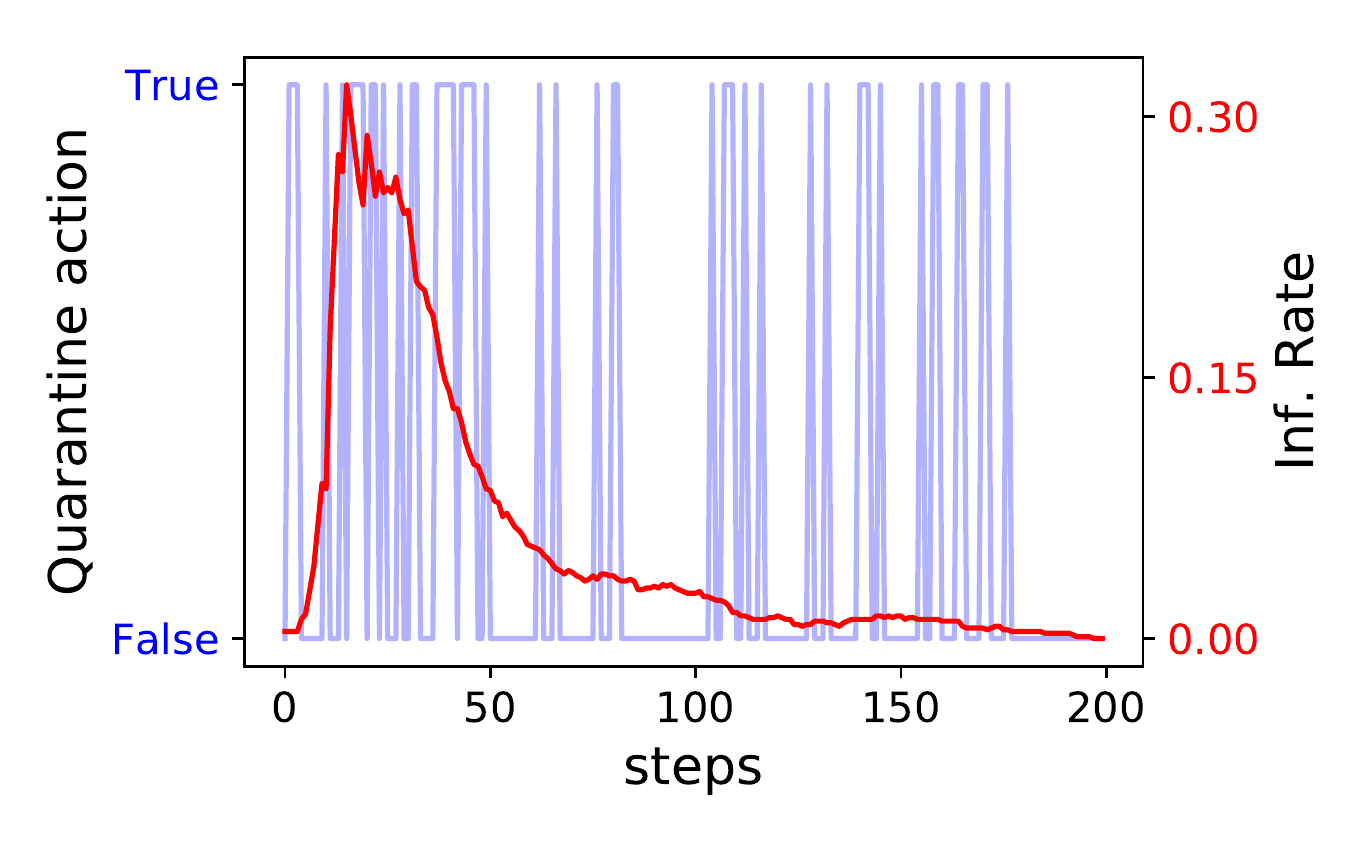} &
        \includegraphics[scale=0.32]{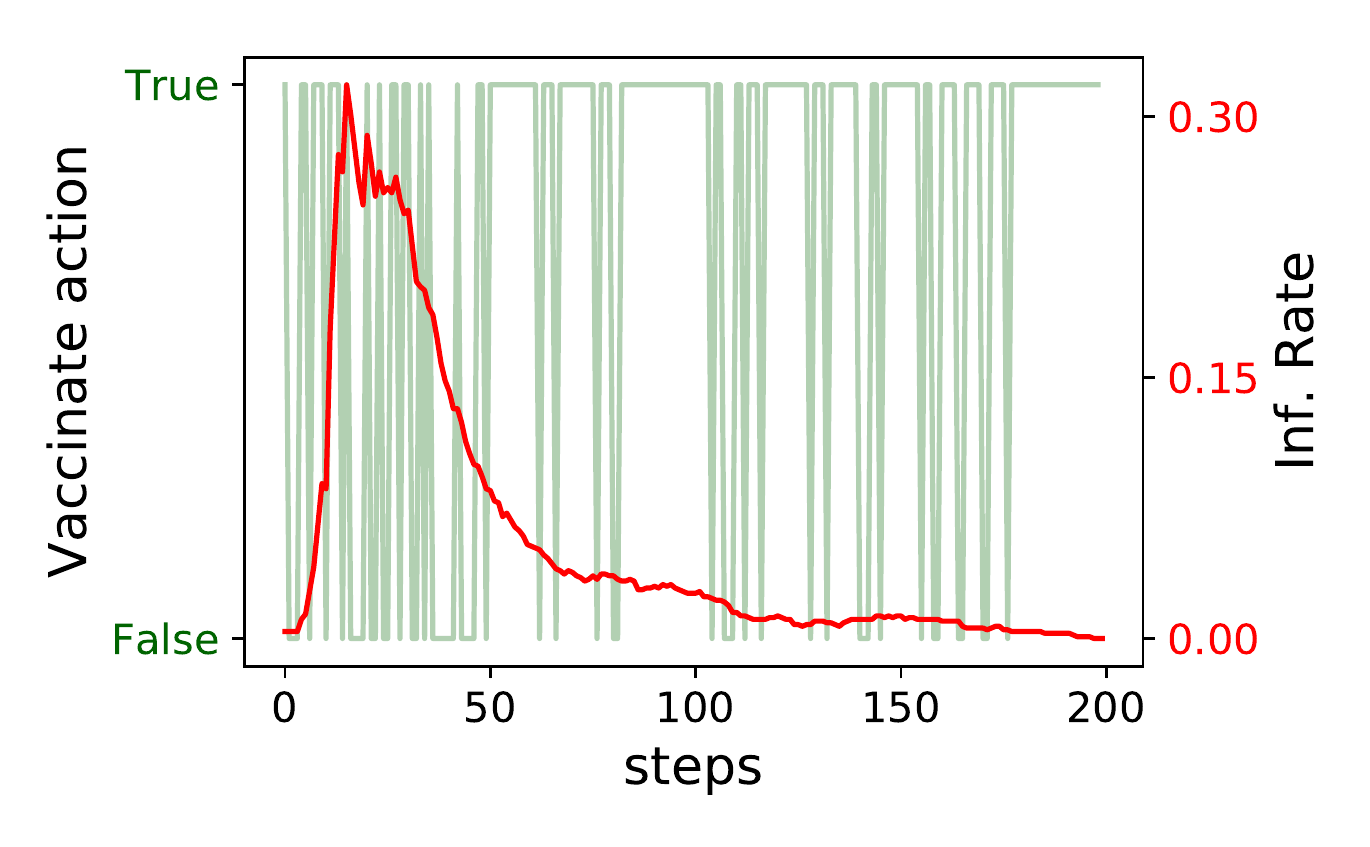}\\
    \end{tabular}}
    \tiny b)
    
    \caption{$\Phi$-AIXI-CTW agent with RF-BDD feature selection. a) Experiment 1: $\lambda = 10, \eta_1 = 2, \eta_2 = 4$. b) Experiment 2: $\lambda = 10, \eta_1 = 10, \eta_2 = 20$.}
    \label{fig:exp12}
\end{figure}

\begin{figure}%[h]
    \centering
    \setlength{\tabcolsep}{.01in}
    \resizebox{\textwidth}{!}{\begin{tabular}{cccc}
        \includegraphics[scale=0.32]{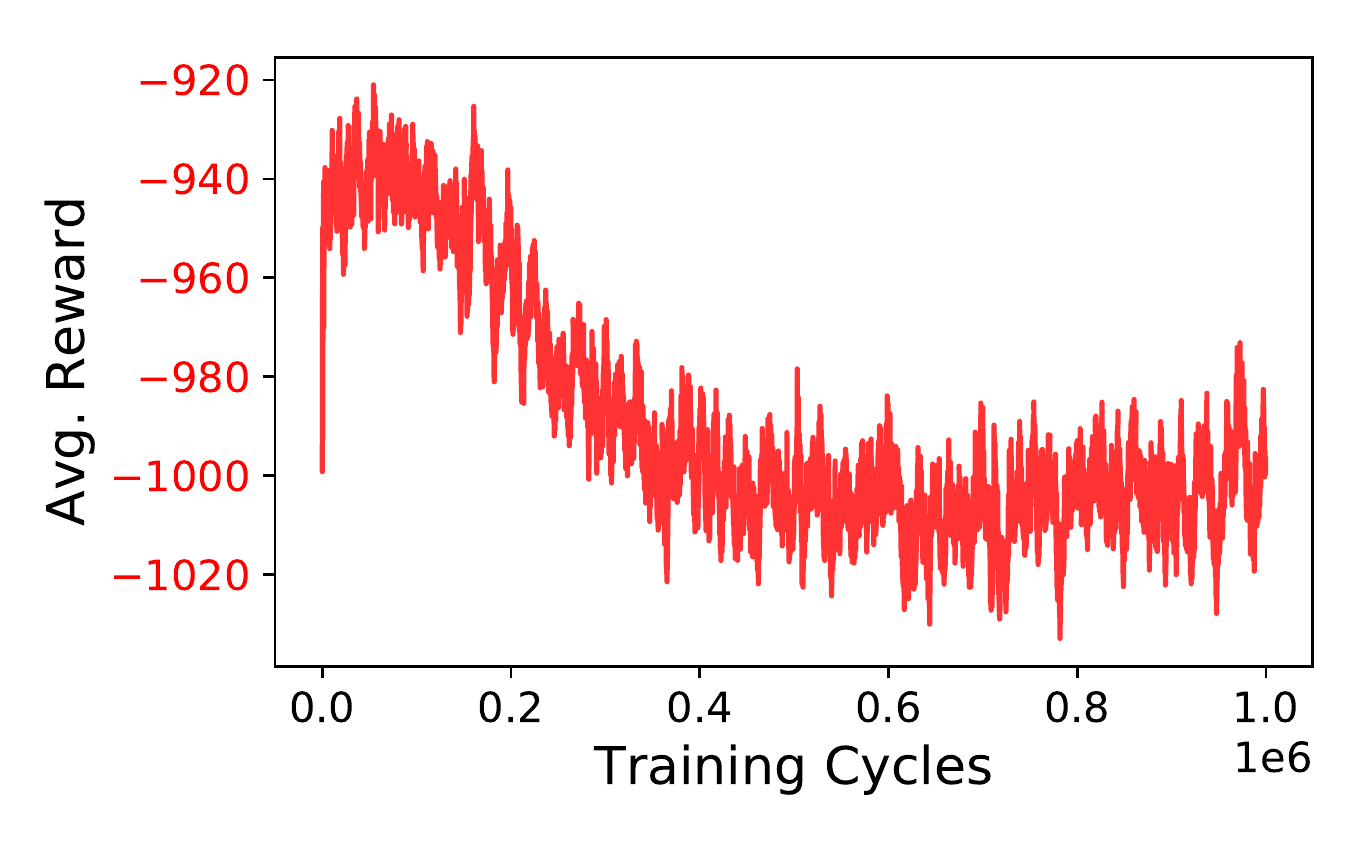} &  \includegraphics[scale=0.32]{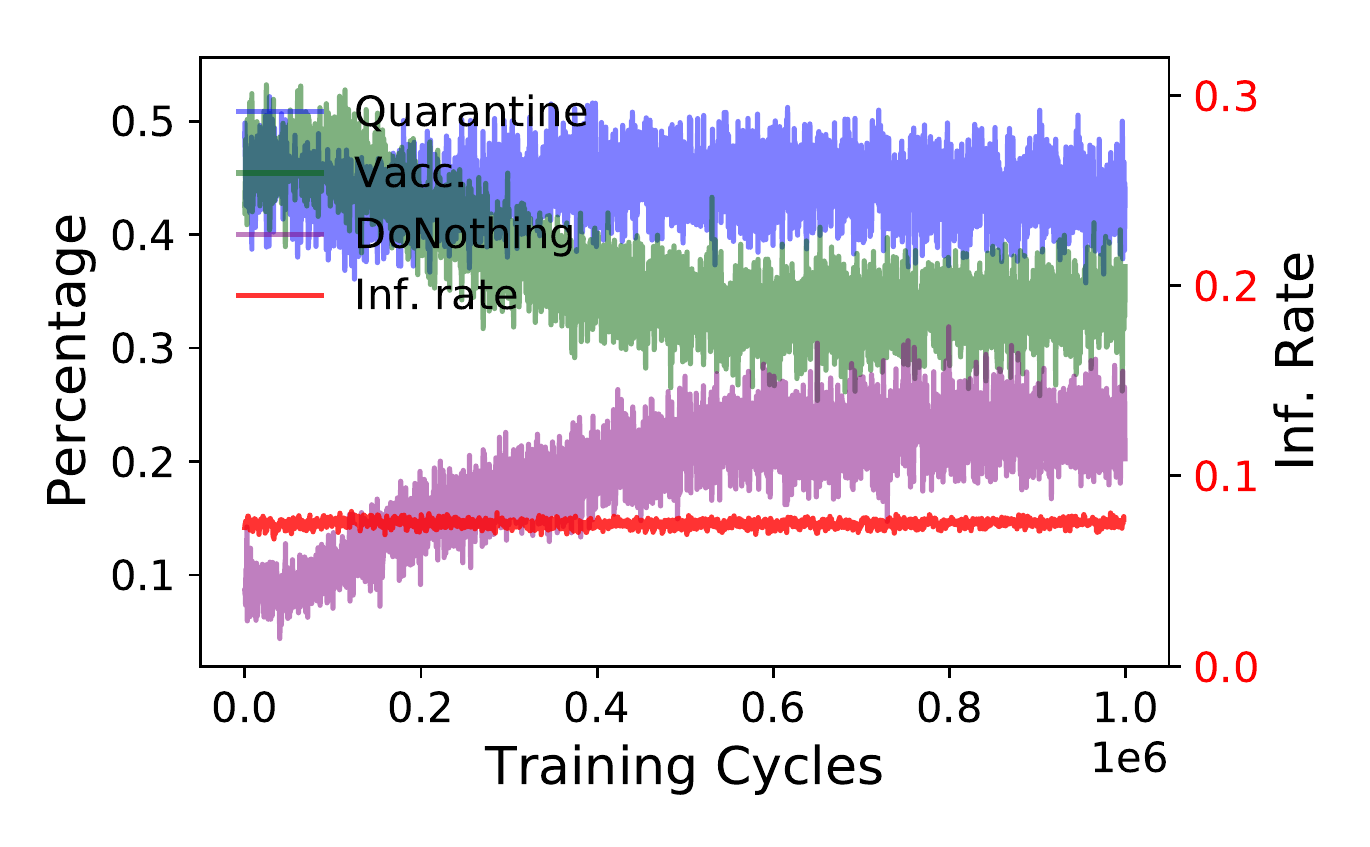} & \includegraphics[scale=0.32]{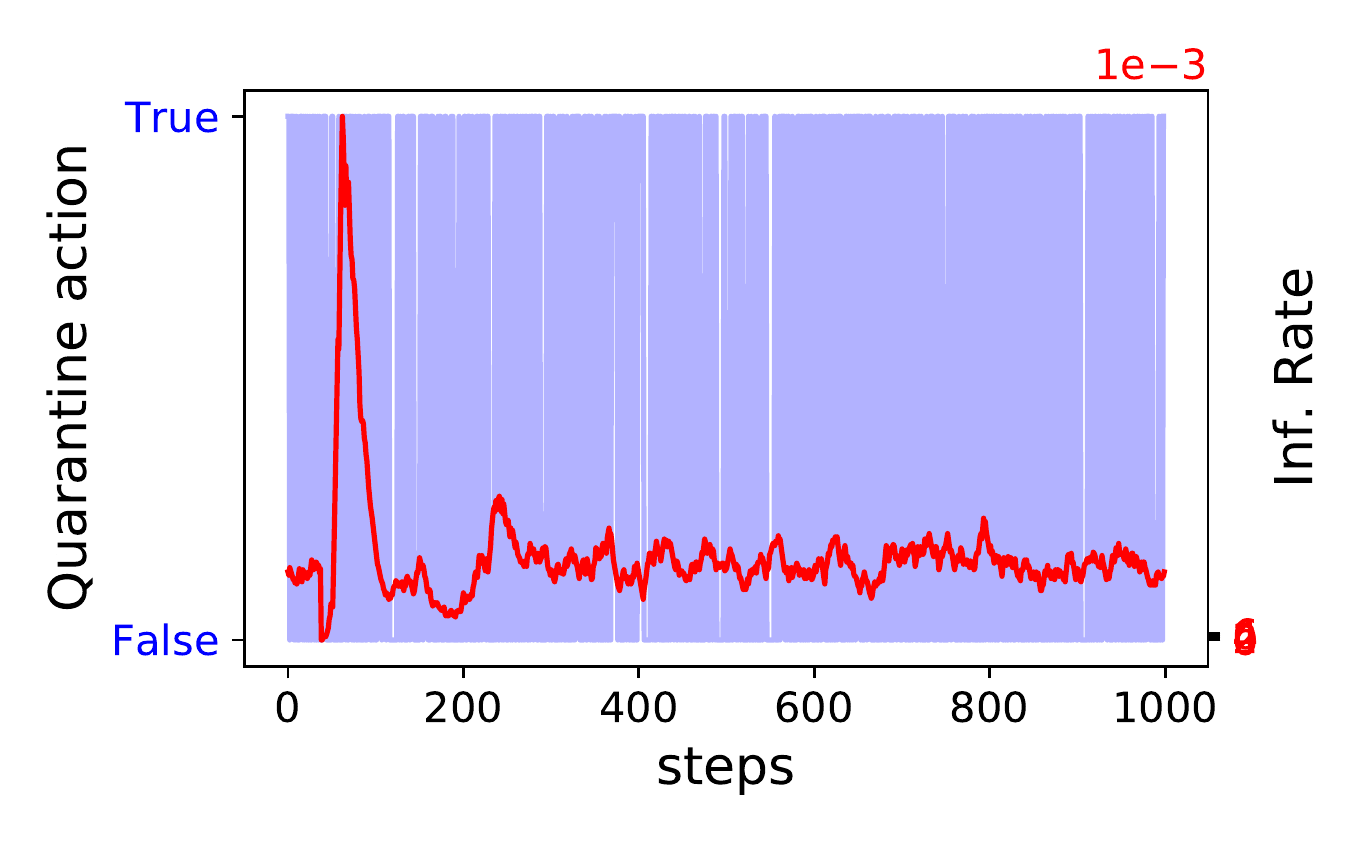} &
        \includegraphics[scale=0.32]{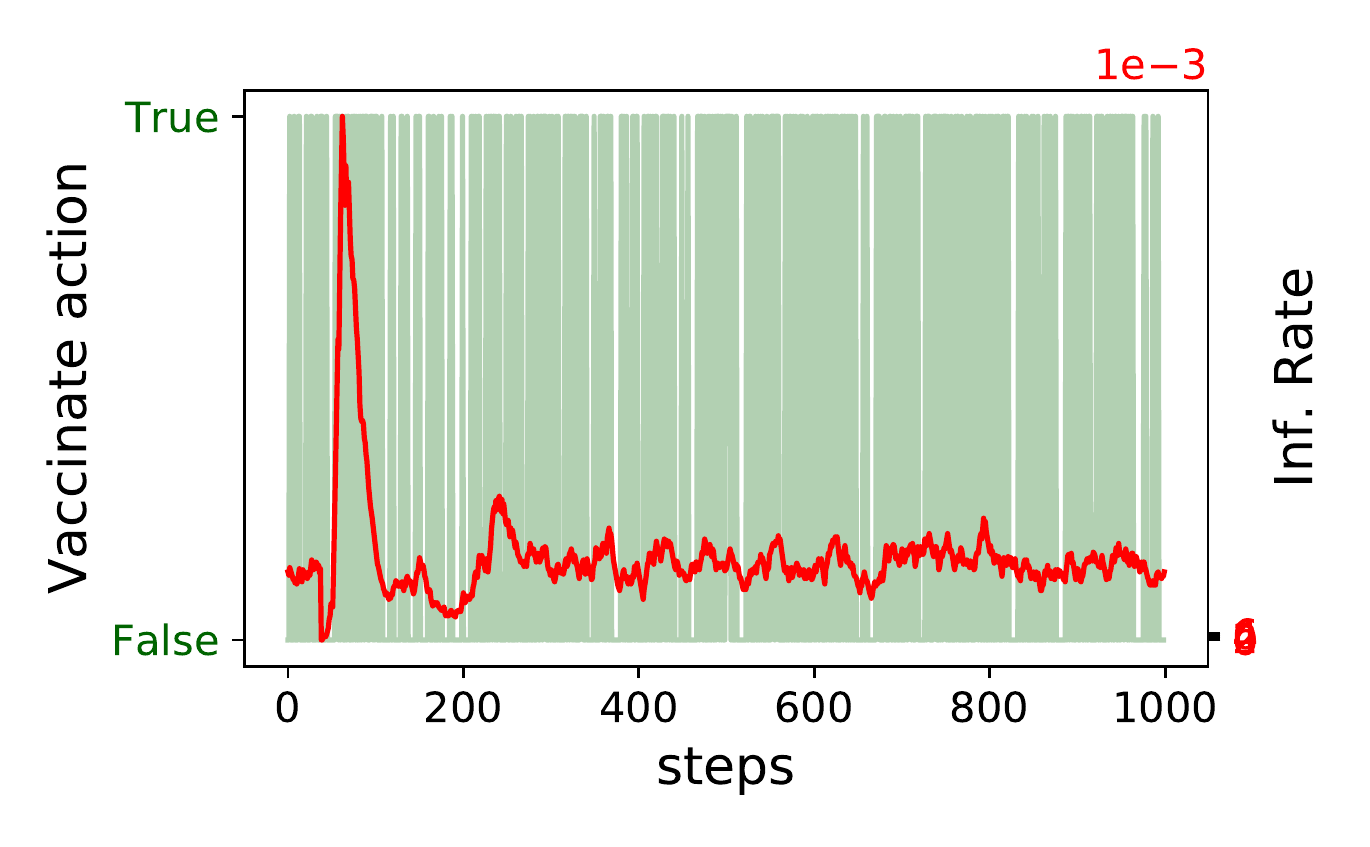}\\
    \end{tabular}}
    \tiny a)
    
    \setlength{\tabcolsep}{.01in}
    \resizebox{\textwidth}{!}{\begin{tabular}{cccc}
        \includegraphics[scale=0.32]{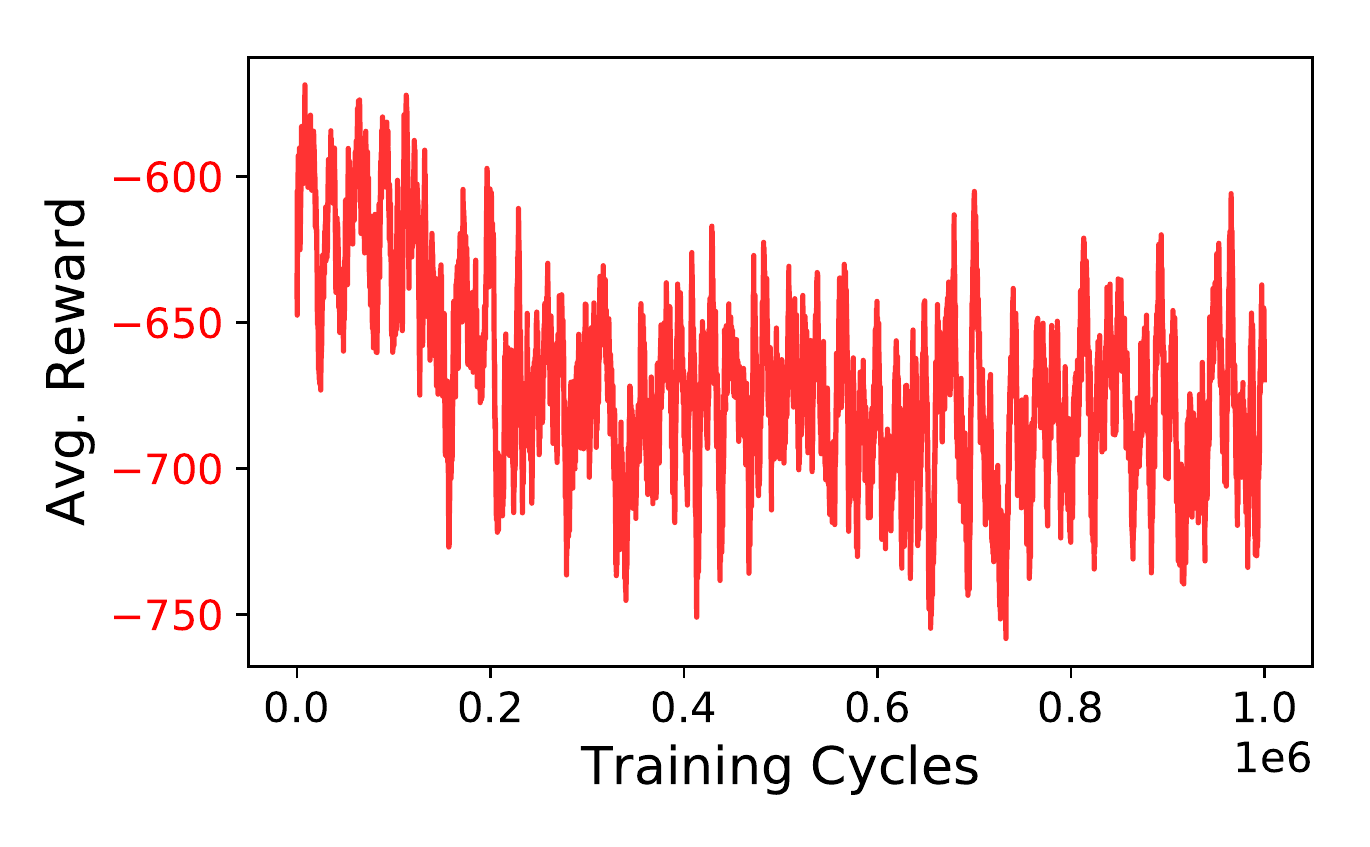} &  \includegraphics[scale=0.32]{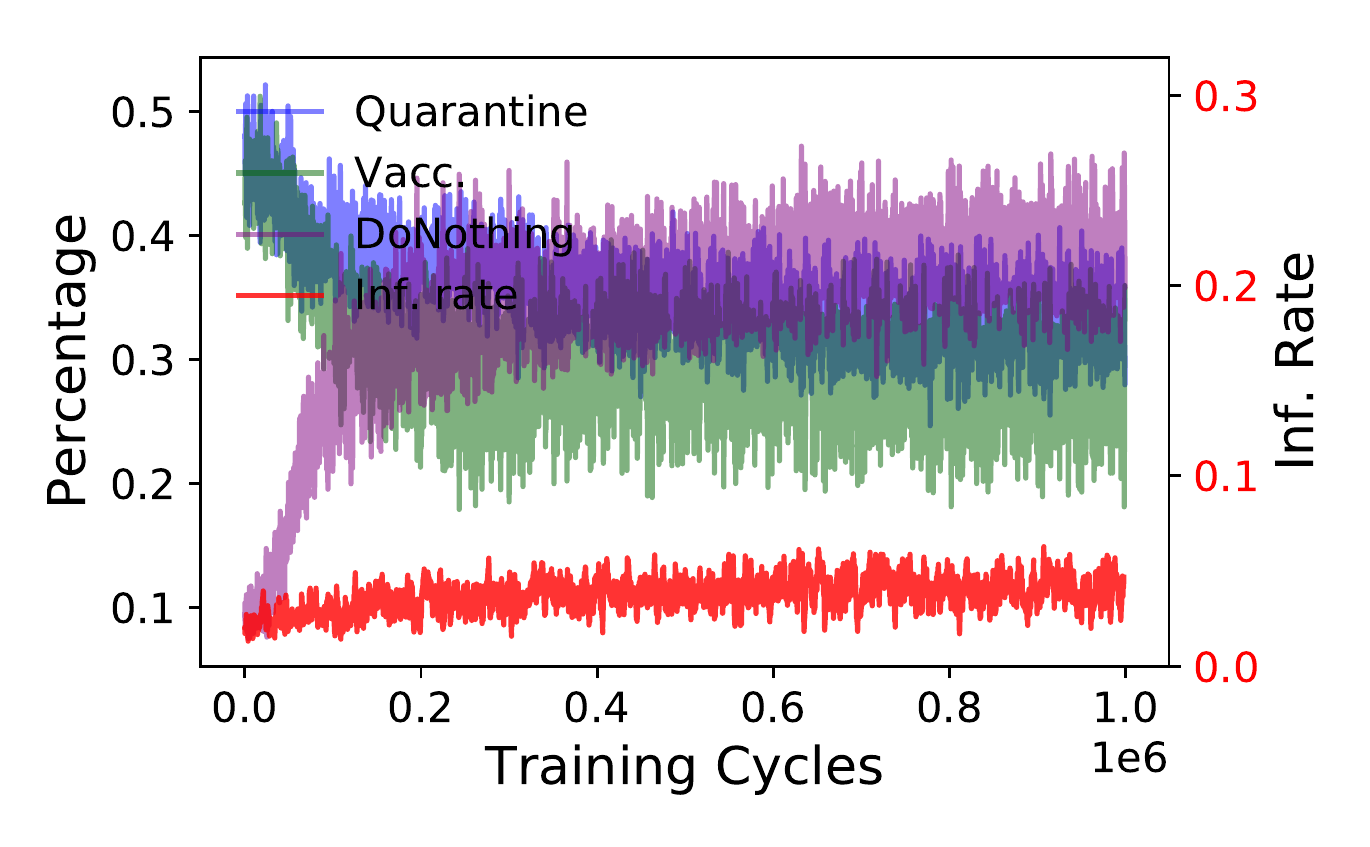} &
        \includegraphics[scale=0.32]{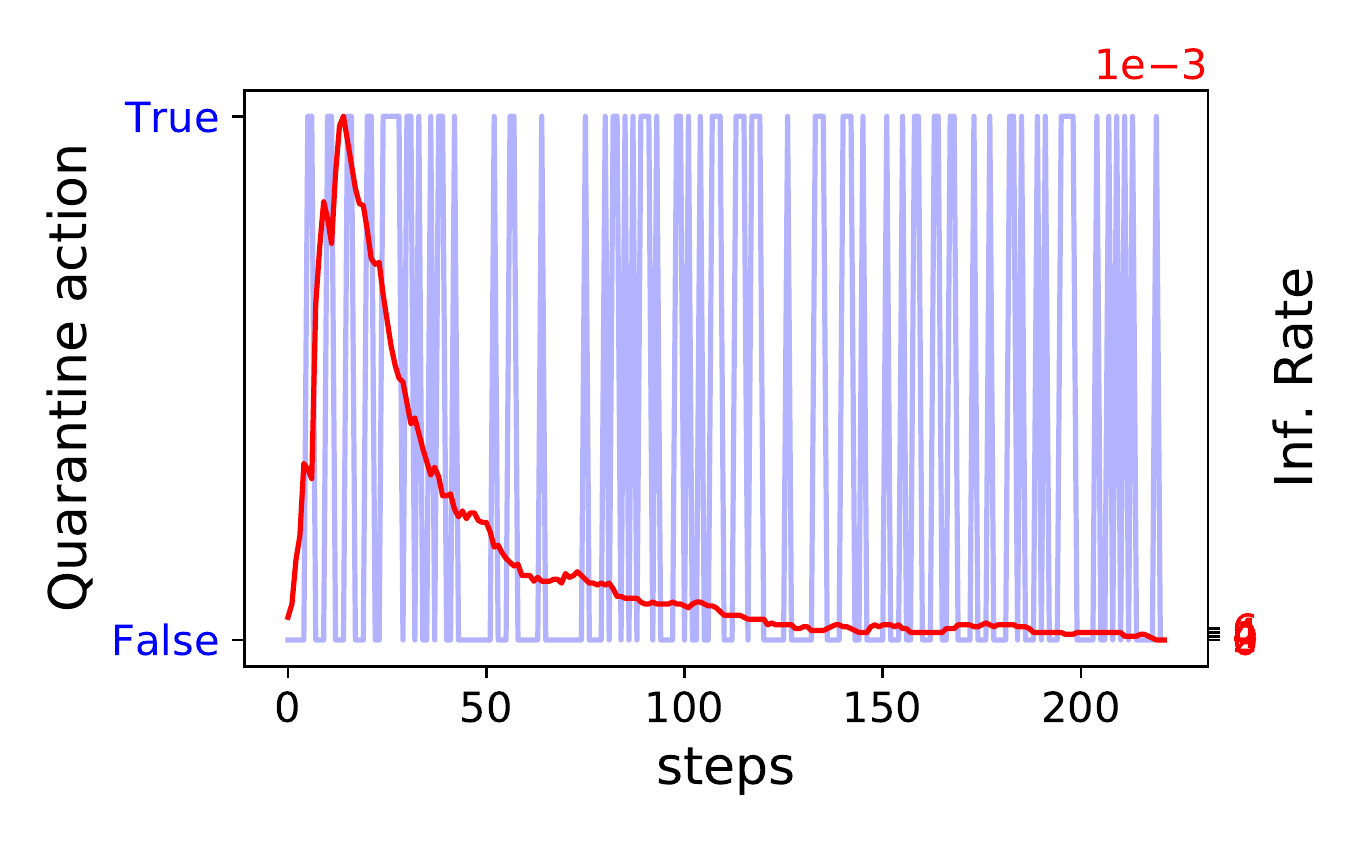} &
        \includegraphics[scale=0.32]{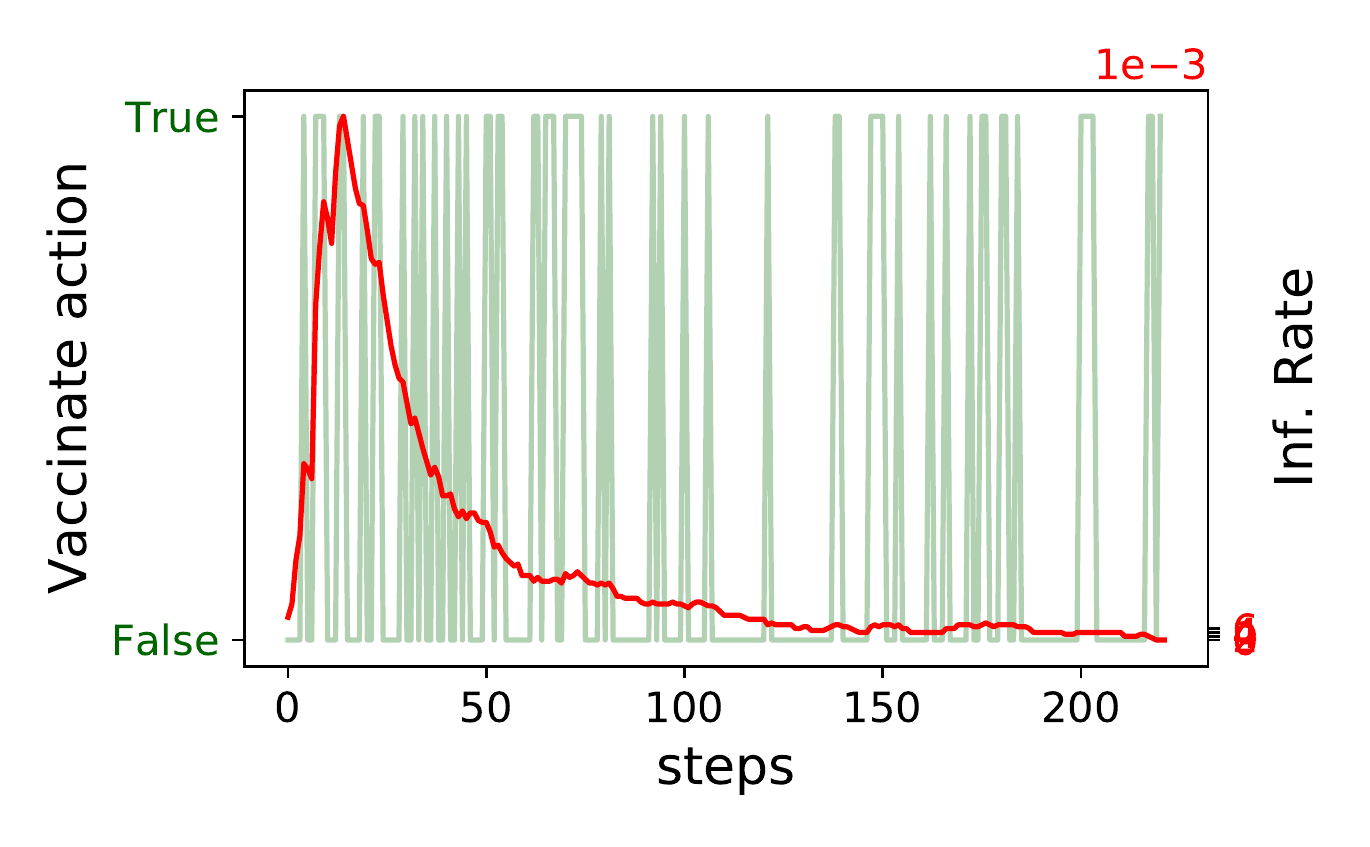}\\
    \end{tabular}}
    \tiny b)
    
    \caption{$\Phi$-AIXI-CTW agent with Random Forest feature selection. a) Experiment 3: $\lambda = 10, \eta_1 = 2, \eta_2 = 4$. b) Experiment 4: $\lambda = 10, \eta_1 = 10, \eta_2 = 20$.}
    \label{fig:expRF}
\end{figure}

Overall, the results demonstrate that the $\Phi$-AIXI-CTW agent is able to learn plausible behaviours under the different environment settings. This demonstrates the potential of our approach in reducing a complex environment to a simpler problem whilst still maintaining performance. Nevertheless, it is also quite clear that the agent did not learn optimal behaviour except potentially in the case of Experiment 1. In Experiment 2, the agent did not learn that vaccinating a node more than twice does not further reduce the cost received. However, this is not surprising as this is a difficult causal relationship to learn from data alone. Furthermore, the full set of generated predicates do not contain enough information to isolate this particular history dependency. In fact, the predicates selected generally capture different statistics about the infection rate at the current time step. If the value estimate of vaccinating in a highly visited state is high due to it lowering future received costs, the agent will learn to perform the vaccination action every time the state is encountered, even if it has done so many times before. We expect that importing better domain knowledge into the original set of predicates would solve this issue for the $\Phi$-AIXI-CTW agent.

\section{Conclusion}
We have introduced the $\Phi$-AIXI-CTW agent model, an agent that expands the model class that AIXI may be approximated over to structured and non-Markovian domains via logical state abstractions. 
% We introduce the RF-BDD feature selection scheme to drastically reduce a large set of predicate functions to a subset that still allows the agent to perform well on the original task. The $\Phi$-BCTW data structure then allow us to efficiently compute a Bayesian mixture environment model over the selected predicates to predict the next state and reward symbols. 
We show that our approach is competitive on simple domains and is sufficiently powerful to handle control of epidemic processes on contact networks, a class of problems with a highly structured exponential state space, partial observability and history dependency. In doing so, our result provides another piece of strong evidence that AIXI is not just a mathematically interesting theory, but can eventually lead to practically useful agents.

Whilst RF-BDD has demonstrated its usefulness empirically in our experiments, a better understanding of how it works theoretically and how it behaves in other contexts is important. In \cite{MMMC20} the authors identify the epidemic control problem within a class of problems they call the control of diffusion processes over networks through nodal interventions. The epidemic control problem constitutes an instance of this class of problems and it would be interesting to extend our agent setup to other problems within this domain.

A key concern for the real-world application of reinforcement learning on contact networks for control is the preservation of privacy for individual nodes. Our agent setup does not restrict the usage of any form of domain knowledge on the complete graph such as an individual node's contact network and extended neighbourhood. Investigating the performance of our agent under privacy preserving constraints such as differential privacy \cite{dwork2014algorithmic} is an interesting topic for future research.

\newpage
\bibliography{references}

\begin{thebibliography}{10}

\bibitem{Hutter:04uaibook}
Marcus Hutter.
\newblock {\em Universal Artificial Intelligence: Sequential Decisions based on
  Algorithmic Probability}.
\newblock Springer, Berlin, 2005.

\bibitem{solomonoff97}
Ray~J. Solomonoff.
\newblock The discovery of algorithmic probability.
\newblock {\em J. Comput. Syst. Sci.}, 55(1):73--88, 1997.

\bibitem{hutter12decade}
Marcus Hutter.
\newblock One decade of universal artificial intelligence.
\newblock {\em CoRR}, abs/1202.6153, 2012.

\bibitem{veness09}
Joel Veness, Kee~Siong Ng, Marcus Hutter, William T.~B. Uther, and David
  Silver.
\newblock A monte-carlo {AIXI} approximation.
\newblock {\em J. Artif. Intell. Res.}, 40:95--142, 2011.

\bibitem{WST95}
F.M.J. Willems, Y.M. Shtarkov, and T.J. Tjalkens.
\newblock The context-tree weighting method: basic properties.
\newblock {\em IEEE Transactions on Information Theory}, 41(3):653--664, 1995.

\bibitem{ks06}
Levente Kocsis and Csaba Szepesv{\'a}ri.
\newblock Bandit based monte-carlo planning.
\newblock In Johannes F{\"u}rnkranz, Tobias Scheffer, and Myra Spiliopoulou,
  editors, {\em ECML 2006}, pages 282--293. Springer Berlin Heidelberg, 2006.

\bibitem{Hutter2009FeatureRL}
Marcus Hutter.
\newblock Feature reinforcement learning: {P}art {I}. {U}nstructured {MDP}s.
\newblock In {\em J. Artif. Gen. Intell.}, 2009.

\bibitem{VH18}
Badri~N. Vellambi and Marcus Hutter.
\newblock Convergence of binarized context-tree weighting for estimating
  distributions of stationary sources.
\newblock In {\em 2018 IEEE International Symposium on Information Theory
  (ISIT)}, pages 731--735, 2018.

\bibitem{LV97}
Ming Li and Paul Vitányi.
\newblock {\em An Introduction to Kolmogorov Complexity and Its Applications}.
\newblock Springer, 1997.

\bibitem{venessNHB12}
Joel Veness, Kee~Siong Ng, Marcus Hutter, and Michael~H. Bowling.
\newblock Context tree switching.
\newblock In James~A. Storer and Michael~W. Marcellin, editors, {\em 2012 Data
  Compression Conference}, pages 327--336. {IEEE} Computer Society, 2012.

\bibitem{venessWBG13}
Joel Veness, Martha White, Michael Bowling, and Andr{\'{a}}s Gy{\"{o}}rgy.
\newblock Partition tree weighting.
\newblock In Ali Bilgin, Michael~W. Marcellin, Joan Serra{-}Sagrist{\`{a}}, and
  James~A. Storer, editors, {\em 2013 Data Compression Conference}, pages
  321--330. {IEEE}, 2013.

\bibitem{bellemareVB13}
Marc~G. Bellemare, Joel Veness, and Michael Bowling.
\newblock Bayesian learning of recursively factored environments.
\newblock In {\em Proceedings of the 30th International Conference on Machine
  Learning}, pages 1211--1219. JMLR.org, 2013.

\bibitem{bellemareVT14}
Marc~G. Bellemare, Joel Veness, and Erik Talvitie.
\newblock Skip context tree switching.
\newblock In {\em Proceedings of the 31th International Conference on Machine
  Learning}, pages 1458--1466. JMLR.org, 2014.

\bibitem{abel18}
David Abel, Dilip Arumugam, Lucas Lehnert, and Michael~L. Littman.
\newblock State abstractions for lifelong reinforcement learning.
\newblock In Jennifer~G. Dy and Andreas Krause, editors, {\em Proceedings of
  the 35th International Conference on Machine Learning}, pages 10--19. {PMLR},
  2018.

\bibitem{li06}
Lihong Li, Thomas~J. Walsh, and Michael~L. Littman.
\newblock Towards a unified theory of state abstraction for {MDP}s.
\newblock In {\em International Symposium on Artificial Intelligence and
  Mathematics}, 2006.

\bibitem{lloyd11}
John~W. Lloyd and Kee~Siong Ng.
\newblock Declarative programming for agent applications.
\newblock {\em Autonomous Agents Multi Agent Systems}, 23(2):224--272, 2011.

\bibitem{lloyd03}
John~W. Lloyd.
\newblock {\em Logic for Learning: Learning Comprehensible Theories from
  Structured Data}.
\newblock Springer, 2003.

\bibitem{church}
Alonzo Church.
\newblock A formulation of the simple theory of types.
\newblock {\em Journal of Symbolic Logic}, 5:56--68, 1940.

\bibitem{farinas02}
Luis~Fari{\~n}as del Cerro and Olivier Gasquet.
\newblock A general framework for pattern-driven modal tableaux.
\newblock {\em Logic Journal of the IGPL}, 10(1):51--83, 2002.

\bibitem{getoor-taskar}
L.~Getoor and B.~Taskar, editors.
\newblock {\em Introduction to Statistical Relational Learning}.
\newblock MIT Press, 2007.

\bibitem{deraedt16}
Luc~De Raedt, Kristian Kersting, Sriraam Natarajan, and David Poole.
\newblock {\em Statistical Relational Artificial Intelligence: Logic,
  Probability, and Computation}.
\newblock Synthesis Lectures on Artificial Intelligence and Machine Learning.
  Morgan {\&} Claypool Publishers, 2016.

\bibitem{dzeroskiRD01}
Saso Dzeroski, Luc~De Raedt, and Kurt Driessens.
\newblock Relational reinforcement learning.
\newblock {\em Mach. Learn.}, 43(1/2):7--52, 2001.

\bibitem{driessens17}
Kurt Driessens.
\newblock Relational reinforcement learning.
\newblock In Claude Sammut and Geoffrey~I. Webb, editors, {\em Encyclopedia of
  Machine Learning and Data Mining}, pages 1096--1103. Springer, 2017.

\bibitem{kerstingOR04}
Kristian Kersting, Martijn van Otterlo, and Luc~De Raedt.
\newblock Bellman goes relational.
\newblock In Carla~E. Brodley, editor, {\em Proceedings of the Twenty-first
  International Conference on Machine Learning}, volume~69 of {\em {ACM}
  International Conference Proceeding Series}. {ACM}, 2004.

\bibitem{sannerK10}
Scott Sanner and Kristian Kersting.
\newblock Symbolic dynamic programming for first-order {POMDP}s.
\newblock In Maria Fox and David Poole, editors, {\em Proceedings of the
  Twenty-Fourth Conference on Artificial Intelligence}. {AAAI} Press, 2010.

\bibitem{farmer}
W.M. Farmer.
\newblock The seven virtues of simple type theory.
\newblock {\em Journal of Applied Logic}, 6(3):267--286, 2008.

\bibitem{hughes89}
John Hughes.
\newblock Why functional programming matters.
\newblock {\em The Computer Journal}, 32(2):98--107, 1989.

\bibitem{Knuth08}
Donald~E. Knuth.
\newblock {\em The Art of Computer Programming, Pre-Fascicle 1B}, volume~4.
\newblock Addison-Wesley, 2008.

\bibitem{KT06}
R.~Krichevsky and V.~Trofimov.
\newblock The performance of universal encoding.
\newblock {\em IEEE Trans. Inf. Theor.}, 27(2):199–207, sep 2006.

\bibitem{FMVRW07}
Vivek~F. Farias, Ciamac~Cyrus Moallemi, Tsachy Weissman, and Benjamin~Van Roy.
\newblock Universal reinforcement learning.
\newblock {\em CoRR}, abs/0707.3087, 2007.

\bibitem{DI99}
Thomas~G. Dietterich.
\newblock Hierarchical reinforcement learning with the {MAXQ} value function
  decomposition.
\newblock {\em CoRR}, cs.LG/9905014, 1999.

\bibitem{McC96}
Andrew~Kachites McCallum.
\newblock Reinforcement learning with selective perception and hidden state.
\newblock 1996.

\bibitem{HFD17}
Jesse Hostetler, Alan Fern, and Thomas Dietterich.
\newblock Sample-based tree search with fixed and adaptive state abstractions.
\newblock {\em J. Artif. Int. Res.}, 60(1):717–777, sep 2017.

\bibitem{Pastor:2015}
Romualdo Pastor-Satorras, Claudio Castellano, Piet Van~Mieghem, and Alessandro
  Vespignani.
\newblock Epidemic processes in complex networks.
\newblock {\em Reviews of Modern Physics}, 87(3), 2015.

\bibitem{Nowzari:2016}
Cameron Nowzari, Victor~M Preciado, and George~J Pappas.
\newblock Analysis and control of epidemics: A survey of spreading processes on
  complex networks.
\newblock {\em IEEE Control Systems Magazine}, 36(1):26--46, 2016.

\bibitem{Newman:2018}
Mark Newman.
\newblock {\em Networks}.
\newblock {O}xford {U}niversity {P}ress, 2018.

\bibitem{ArPe20}
Mauricio Arango and Lyudmil Pelov.
\newblock {COVID-19} pandemic cyclic lockdown optimization using reinforcement
  learning.
\newblock {\em CoRR}, abs/2009.04647, 2020.

\bibitem{CELT20}
Arthur Charpentier, Romuald Elie, Mathieu Laurière, and Viet~Chi Tran.
\newblock Covid-19 pandemic control: balancing detection policy and lockdown
  intervention under {ICU} sustainability, 2020.

\bibitem{CHRTOMP20}
Cédric Colas, Boris Hejblum, Sébastien Rouillon, Rodolphe Thiébaut,
  Pierre-Yves Oudeyer, Clément Moulin-Frier, and Mélanie Prague.
\newblock Epidemioptim: A toolbox for the optimization of control policies in
  epidemiological models, 2020.

\bibitem{Brauer:2012}
Fred Brauer and Carlos Castillo-Ch{\'a}vez.
\newblock {\em Mathematical Models in Population Biology and Epidemiology}.
\newblock Springer, 2012.

\bibitem{Anderson:2013}
Roy~M Anderson.
\newblock {\em The population dynamics of infectious diseases: theory and
  applications}.
\newblock Springer, 2013.

\bibitem{BKSE21}
Konstantin Berestizshevsky, Koffi-Emmanuel Sadzi, Guy Even, and Moni Shahar.
\newblock Optimization of resource-constrained policies for covid-19 testing
  and quarantining.
\newblock {\em Journal of Communications and Networks}, 23(5):326--339, 2021.

\bibitem{KCJB20}
Varun Kompella, Roberto Capobianco, Stacy Jong, Jonathan Browne, Spencer Fox,
  Lauren Meyers, Peter Wurman, and Peter Stone.
\newblock Reinforcement learning for optimization of covid-19 mitigation
  policies, 2020.

\bibitem{RN15}
Ryan~A. Rossi and Nesreen~K. Ahmed.
\newblock The network data repository with interactive graph analytics and
  visualization.
\newblock In {\em AAAI}, 2015.

\bibitem{GDDG03}
R.~Guimer\`a, L.~Danon, A.~D\'{\i}az-Guilera, F.~Giralt, and A.~Arenas.
\newblock Self-similar community structure in a network of human interactions.
\newblock {\em Phys. Rev. E}, 68:065103, Dec 2003.

\bibitem{SBM15}
Chris Snyder, Thomas Bengtsson, and Mathias Morzfeld.
\newblock Performance bounds for particle filters using the optimal proposal.
\newblock {\em Monthly Weather Review}, 143(11):4750 -- 4761, 2015.

\bibitem{BPZL12}
Gavin Brown, Adam Pocock, Ming-Jie Zhao, and Mikel Luj{{\'a}}n.
\newblock Conditional likelihood maximisation: A unifying framework for
  information theoretic feature selection.
\newblock {\em Journal of Machine Learning Research}, 13(2):27--66, 2012.

\bibitem{MMMC20}
Eli~A. Meirom, Haggai Maron, Shie Mannor, and Gal Chechik.
\newblock Controlling graph dynamics with reinforcement learning and graph
  neural networks, 2020.

\bibitem{dwork2014algorithmic}
Cynthia Dwork, Aaron Roth, et~al.
\newblock The algorithmic foundations of differential privacy.
\newblock {\em Found. Trends Theor. Comput. Sci.}, 9(3-4):211--407, 2014.

\end{thebibliography}
\bibliographystyle{unsrt}

\newpage
\appendix

\section{$\Phi$-BCTW Results}
\label{appendix:ctw_mixture}

The following proves Proposition \ref{prop:phi_mixture}.
\begin{proof}
    Under $\Phi$-BCTW, the conditional probability of $sr_{1:n}$ given the actions $a_{1:n}$ can be expressed as
    \begin{align*}
        \xi(sr_{1:n} | a_{1:n}) &= \prod_{i = 1}^{n} \hat{P}(sr_{i} | asr_{1:i-1}, a_i) \\
        &= \prod_{i = 1}^{n} \prod_{j=1}^{k} \hat{P}^{(j)} ([sr_{i}]_{j} | [sr_{i}]_{<j}, asr_{1:i-1}, a_i) \\
        &\overset{(a)}{=} \prod_{i=1}^{n} \prod_{j=1}^{k} \left( \sum_{T \in \mathcal{T}_{d+j-1}} 2^{-\gamma(T)} \hat{P}([sr_{i}]_{j} | T([s_{i-1}a_i][sr_{i}]_{<j}), sr_{1:i-1}) \right)\\
        % &= \prod_{j=1}^{k} \left( \sum_{T \in \mathcal{T}_{d+j-1}} 2^{-\gamma(T)} \prod_{i=1}^{n} \hat{P}(sr_{i}^{j} | T(sa_{i-1}sr_{i}^{<j}), sr_{1:n} \right)\\
        &= \sum_{T_1 \in \mathcal{T}_{d}} \ldots \sum_{T_k \in \mathcal{T}_{d+k-1}} \left( \prod_{j=1}^{k} 2^{-\gamma(t)} \right) \left( \prod_{j=1}^{k} \prod_{i=1}^{n} \hat{P}([sr_{i}]_{j} | T([s_{i-1}a_i][sr_{i}]_{<j}), sr_{1:i-1}) \right)\\
        &\overset{(b)}{=} \sum_{T \in \mathcal{T}_{d} \times \ldots \times \mathcal{T}_{d+k-1}} 2^{-\Gamma(T)} \hat{P}(sr_{1:n} | a_{1:n}, T).
    \end{align*}
    Note that (a) follows from Equation \ref{eqn:phi_bctw_T} and (b) follows by definition. 
\end{proof}

The following proof of Theorem \ref{thm:MEM_reward_convergence} follows the same steps as Theorem \ref{thm:MEMConvergence} in \cite{veness09}.
\begin{proof}
    \begin{align*}
        & \sum_{k=1}^{n} \sum_{r_{1:k}} \mu(r_{<k} | ao_{<k}) \left( \mu(r_k | aor_{<k}ao_k) - \xi(r_k | aor_{<k}ao_k) \right)^{2} \\
        &= \sum_{k=1}^{n} \sum_{r_{<k}} \mu(r_{<k} | ao_{<k}) \sum_{r_k} \left( \mu(r_k | aor_{<k}ao_k) - \xi(r_k | aor_{<k}ao_k) \right)^{2}\\
        &\overset{(a)}{\leq} \sum_{k=1}^{n} \sum_{r_{<k}} \mu(r_{<k} | ao_{<k}) \sum_{r_k} \mu(r_k | aor_{<k}ao_k) \ln \frac{\mu(r_k | aor_{<k}ao_k)}{\rho(r_k | aor_{<k}ao_k)}\\
        &= \sum_{k=1}^{n} \sum_{r_{1:k}} \mu(r_{1:k} | ao_{1:k})\ln \frac{\mu(r_k | aor_{<k}ao_k)}{\xi(r_k | aor_{<k}ao_k)} \\
        &= \sum_{k=1}^{n} \sum_{r_{1:n}} \mu(r_{1:n} | ao_{1:n}) \ln \frac{\mu(r_k | aor_{<k}ao_k)}{\xi(r_k | aor_{<k}ao_k)} \\
        &= \sum_{r_{1:n}} \mu(r_{1:n} | ao_{1:n}) \sum_{k=1}^{n} \ln \frac{\mu(r_k | aor_{<k}ao_k)}{\xi(r_k | aor_{<k}ao_k)} \\
        &\overset{(b)}{=} \sum_{r_{1:n}} \mu(r_{1:n} | ao_{1:n}) \ln \frac{\mu(r_{1:n} | ao_{1:n})}{\xi(r_{1:n} | ao_{1:n})} \\
        &= \sum_{r_{1:n}} \mu(r_{1:n} | ao_{1:n}) \ln \frac{\mu(r_{1:n} | ao_{1:n})}{\rho(r_{1:n} | ao_{1:n})} + 
        \sum_{r_{1:n}} \mu(r_{1:n} | ao_{1:n}) \ln \frac{\rho(r_{1:n} | ao_{1:n})}{\xi(r_{1:n} | ao_{1:n})} \\
        &\overset{(c)}{\leq} D_{1:n}(\mu || \rho) + \sum_{r_{1:n}} \mu(r_{1:n} | ao_{1:n}) \ln \frac{\rho(r_{1:n} | ao_{1:n})}{w_{0}^{\rho} \rho(r_{1:n} | ao_{1:n})}\\
        &= D_{1:n}(\mu || \rho) - \ln w_{0}^{\rho}
    \end{align*}
    The inequality in (a) follows as $\sum_{i} (y_i - z_i)^2 \leq \sum_i y_i \ln \frac{y_i}{z_i}$ for $y_i, z_i \geq 0, ~ \sum_{i} y_i = 1, ~\sum_{i} z_i = 1$. The equality in (b) follows by definition as $\xi(r_{1:n} | ao_{1:n}) \coloneqq \prod_{k=1}^{n} \xi(r_k | aor_{<k} ao_k)$. Finally, (c) follows by definition of a mixture environment model. Note that a $\Phi$-BCTW mixture environment model is a model where $\xi(r_k | aor_{<k} ao_k) = \hat{p}(r_k | sra_{<k} sa_k)$.
\end{proof}

\section{Dynamics for epidemic processes on contact networks}
\label{appendix:epidemic_process}

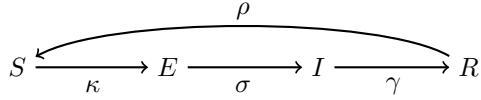
\begin{figure}[H]
    \centering
    \begin{tikzpicture} %[baseline=(s1.base)]
        % SEIRS states
        \node (s2) at (6,0) {$S$};
        \node (e2) at (8,0) {$E$};
        \node (i2) at (10,0) {$I$};
        \node (r2) at (12,0) {$R$};
        % SEIRS transitions
        \draw [->,thick] (s2) to (e2) node[draw=none] at (7,-.25) {$\kappa$};
        \draw [->,thick] (e2) to (i2) node[draw=none] at (9,-.25) {$\sigma$};
        \draw [->,thick] (i2) to (r2) node[draw=none] at (11,-.25) {$\gamma$};
        \draw [->,thick,bend right] (r2) to [looseness=.5] (s2) node[draw=none] at (9, .75) {$\rho$};
    \end{tikzpicture}
    \caption{SEIRS compartmental model on contact networks with transmission rate $\kappa = \frac{1-(1-\beta)^{k_t}}{\omega}$ (where $\beta$ is the contact rate), latency rate $\sigma$, recovery rate $\gamma$ and loss of immunity rate $\rho$. }
    \label{fig:epi}
\end{figure}

Recall that the state of the epidemic process on contact networks is given by $(G_t, \zeta_t)$ at time $t$ where $G_t = (V, \mathcal{E}_t)$ is the temporal graph at time $t$ and $\zeta_t: V \to \{S,E,I,R\} \times \{1, \eta_1, \eta_2\}$ maps each node to its label and one of three immunity levels. Let $\zeta_t(v) = (\zeta_{1, t}(v), \zeta_{2, t}(v))$ where $\zeta_{1, t}$ maps a node to its infection status and $\zeta_{2, t}$ maps a node to its immunity level. The agent performs an action $a_t$. Quarantine actions modify the underlying connectivity of the graph $G_t$ by removing any edges that contain a quarantined node; this leads to the graph at time $t+1$ $G_{t+1}$. Vaccinate actions modify $\zeta_{2, t}$ such that vaccinated nodes have updated immunity levels in $\zeta_{2, t+1}$. The infection status label of every node evolves according to the following equations:
\begin{align}
    \tau(\zeta_{1, {t+1}}(v) | \zeta_{1, t}(v), s_t ) = 
    \begin{cases}
      \frac{1 - (1 - \beta)^{k_t}}{\zeta_{2, t}(v)} \;\; & \text{if } \zeta_{1, {t+1}}(v) = E \text{ and }\zeta_{1, t}(v) = S\\
      1 - \frac{1 - (1 - \beta)^{k_t}}{\zeta_{2, t}(v)} \;\; & \text{if } \zeta_{1, {t+1}}(v) = S \text{ and }\zeta_{1, t}(v) = S\\
      \sigma \;\; & \text{if } \zeta_{1, {t+1}}(v) = I \text{ and }\zeta_{1, t}(v) = E\\
      1 - \sigma \;\; & \text{if } \zeta_{1, {t+1}}(v) = E \text{ and }\zeta_{1, t}(v) = E\\
      \gamma \;\; & \text{if } \zeta_{1, {t+1}}(v) = R \text{ and }\zeta_{1, t}(v) = I\\
      1 - \gamma \;\; & \text{if } \zeta_{1, {t+1}}(v) = I \text{ and }\zeta_{1, t}(v) = I\\ 
      \rho \;\; & \text{if } \zeta_{1, {t+1}}(v) = S \text{ and }\zeta_{1, t}(v) = R\\
      1 - \rho \;\; & \text{otherwise,}
    \end{cases}
    \label{eqn:SEIRS_transition}
\end{align}
where $k_t$ denotes the number of Infectious, connected neighbours that $v$ has at time $t$.

The observations resemble the testing for an infectious disease with positive $+$ and negative $-$ outcomes. Recall that the observations on each node are from $\Obs = \{+, -, ?\}$ where $?$ indicates the corresponding individual has unknown/untested status. At time $t$, node $v \in V$ emits an observation according to the following distribution:
\begin{equation}
% \label{eq:seirs:obs}
\begin{aligned}
    & \xi_t^{v}(+ | \zeta_{1, t}(v) = S) = \alpha_S \mu_S,       &\hspace{2em}& \xi_t^{v}(+ | \zeta_{1, t}(v) = E) = \alpha_E \mu_E, \\
    & \xi_t^{v}(- | \zeta_{1, t}(v) = S) = \alpha_S (1 - \mu_S), && \xi_t^{v}(- | \zeta_{1, t}(v) = E) = \alpha_\sE (1-\mu_E), \\
    & \xi_t^{v}(? | \zeta_{1, t}(v) = S) = 1 - \alpha_S,             && \xi_t^{v}(? | \zeta_{1, t}(v) = E) = 1 - \alpha_E, \\[0.4em]
    & \xi_t^{v}(+ | \zeta_{1, t}(v) = I) = \alpha_I \mu_I, && \xi_t^{v}(+ | \zeta_{1, t}(v) = R) = \alpha_R \mu_R, \\
    & \xi_t^{v}(- | \zeta_{1, t}(v) = I) = \alpha_I (1-\mu_I),       && \xi_t^{v}(- | \zeta_{1, t}(v) = R) = \alpha_R (1 - \mu_R), \\
    & \xi_t^{v}(? | \zeta_{1, t}(v) = I) = 1 - \alpha_I,             && \xi_t^{v}(? | \zeta_{1, t}(v) = R) = 1 - \alpha_R.
\end{aligned}
\label{eqn:SEIRS_obs}
\end{equation}

Here, $\alpha_S, \alpha_E, \alpha_I, \alpha_R$ denote the fraction of Susceptible, Exposed, Infectious and Recovered individuals, respectively, that are tested on average at each time step. The parameters $\mu_S, \mu_E, \mu_I, \mu_R$ denote the probability that a node that is Susceptible, Exposed, Infectious, Recovered respectively tests positive. Table \ref{tab:trans_obs_params} lists the transition and observation model parameters that were used in all experiments.

\begin{center}
    \begin{table}[H]
        \begin{center}
        \begin{tabular}{cccccccccccc}
        $\beta$ & $\sigma$ & $\gamma$ & $\rho$ & $\alpha_S$ & $\alpha_E$ & $\alpha_I$ & $\alpha_R$ & $\mu_S$ & $\mu_E$ & $\mu_I$ & $\mu_R$\\
        \hline
        0.2 & 0.3 & 0.08 & 0.1 & 0.1 & 0.1 & 0.8 & 0.05 & 0.1 & 0.9 & 0.9 & 0.1
        \end{tabular}
        \caption{Transition and observation model parameters.}
        \label{tab:trans_obs_params}
        \end{center}
    \end{table}
\end{center}

\section{Extra Experimental Results}
\label{sec:extra_experimental_results}

\begin{figure}[h]
    \centering
    \setlength{\tabcolsep}{.01in}
    \resizebox{\textwidth}{!}{\begin{tabular}{cccc}
        \includegraphics[scale=0.32]{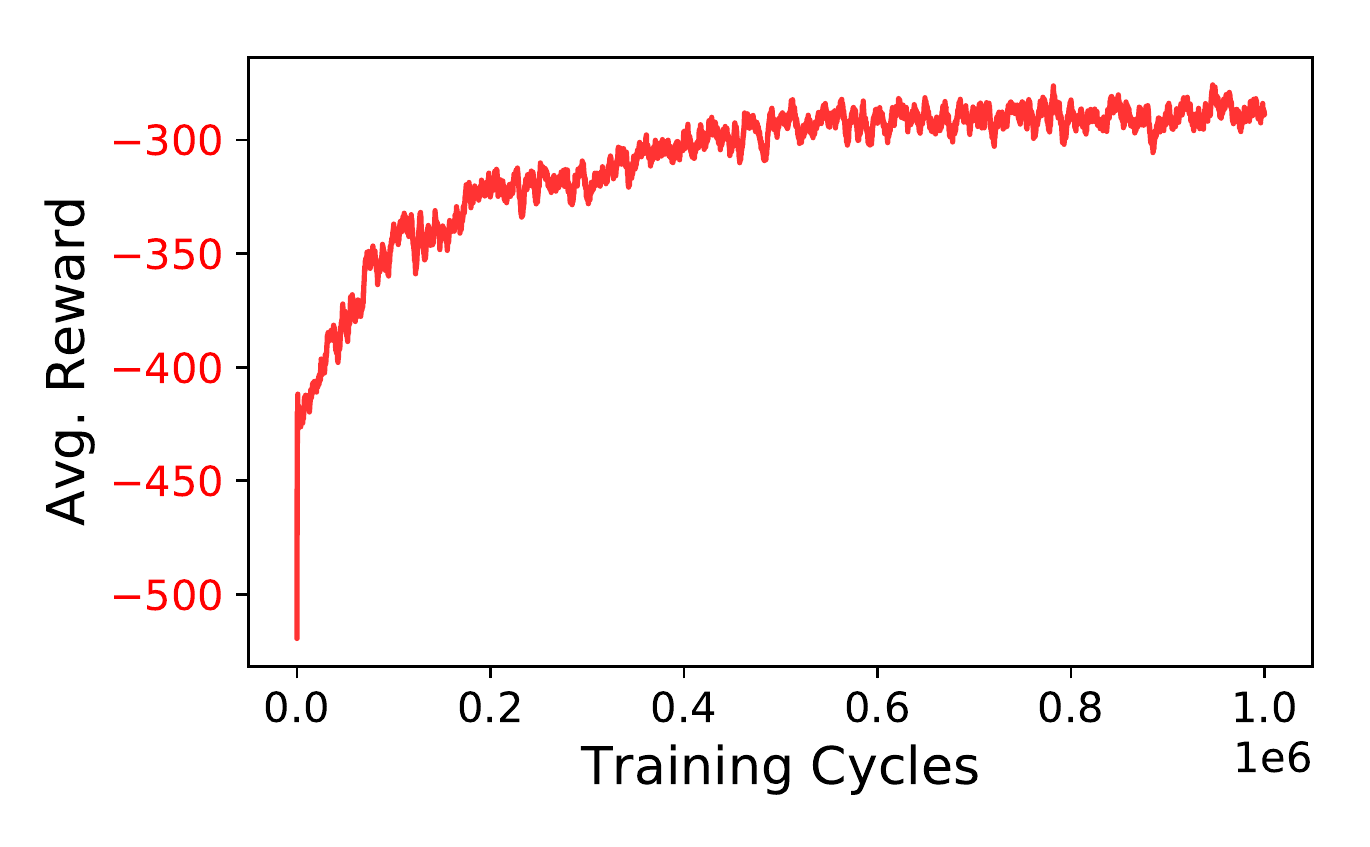} &  \includegraphics[scale=0.32]{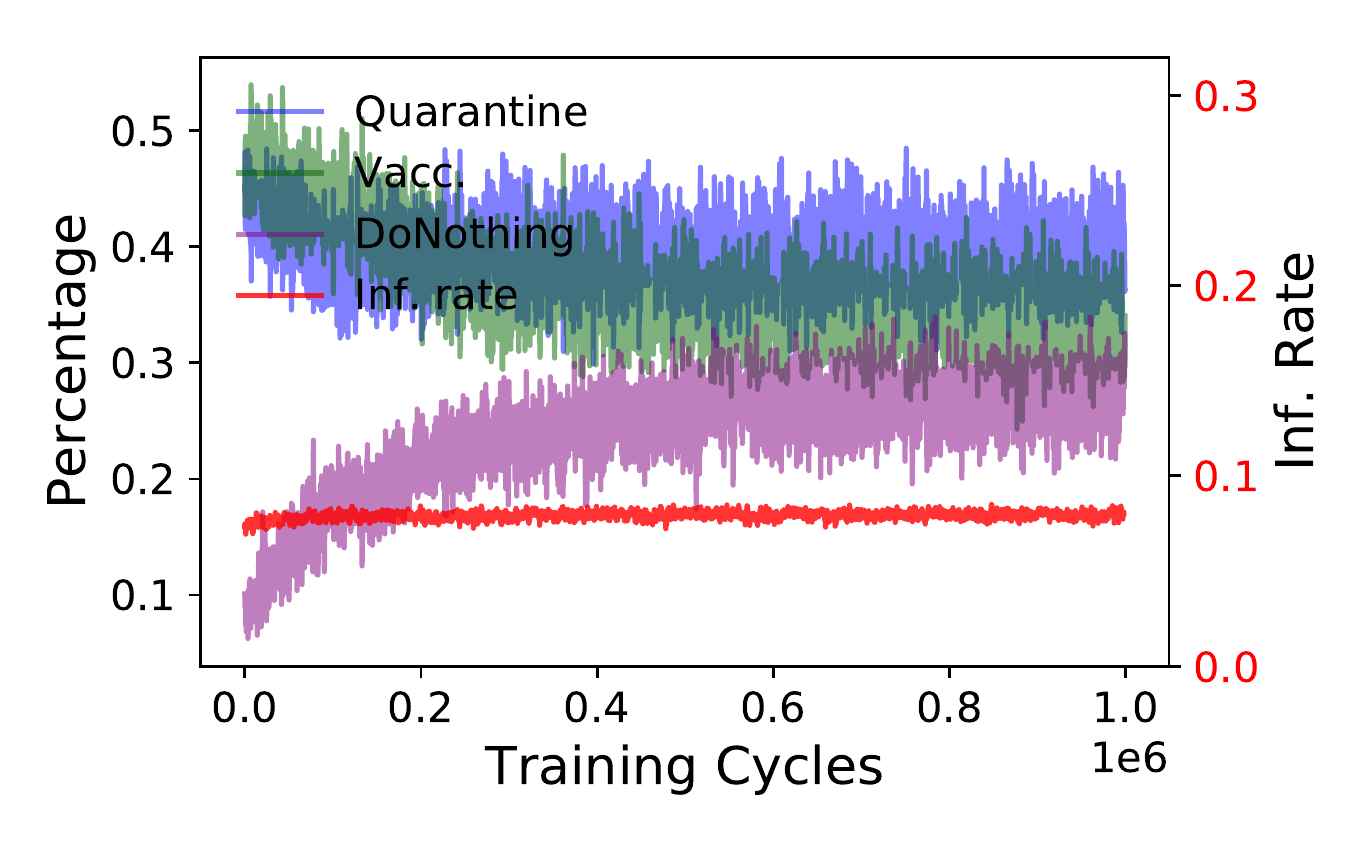} & \includegraphics[scale=0.32]{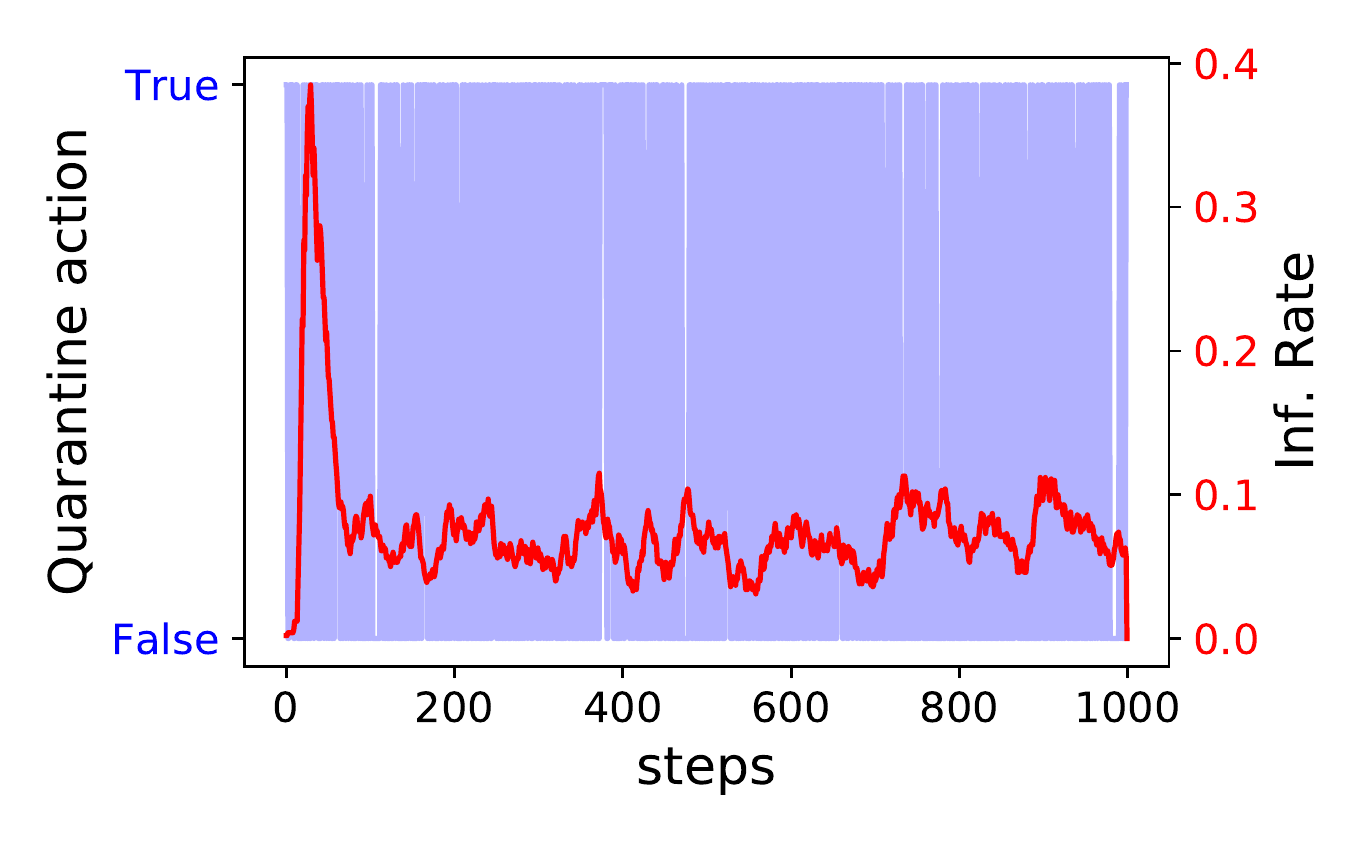} &
        \includegraphics[scale=0.32]{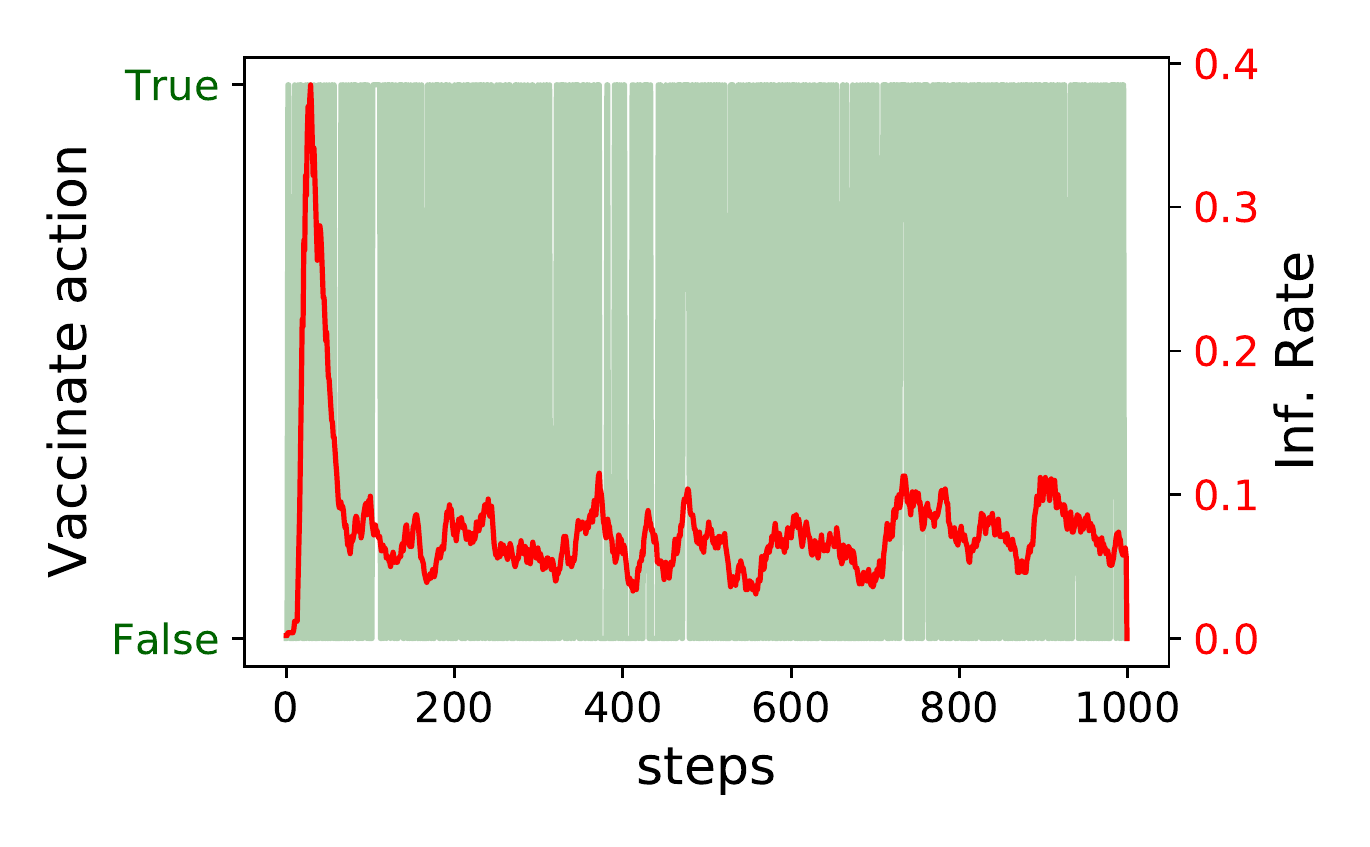}\\
    \end{tabular}}
    \tiny a)
    
    \setlength{\tabcolsep}{.01in}
    \resizebox{\textwidth}{!}{\begin{tabular}{cccc}
        \includegraphics[scale=0.32]{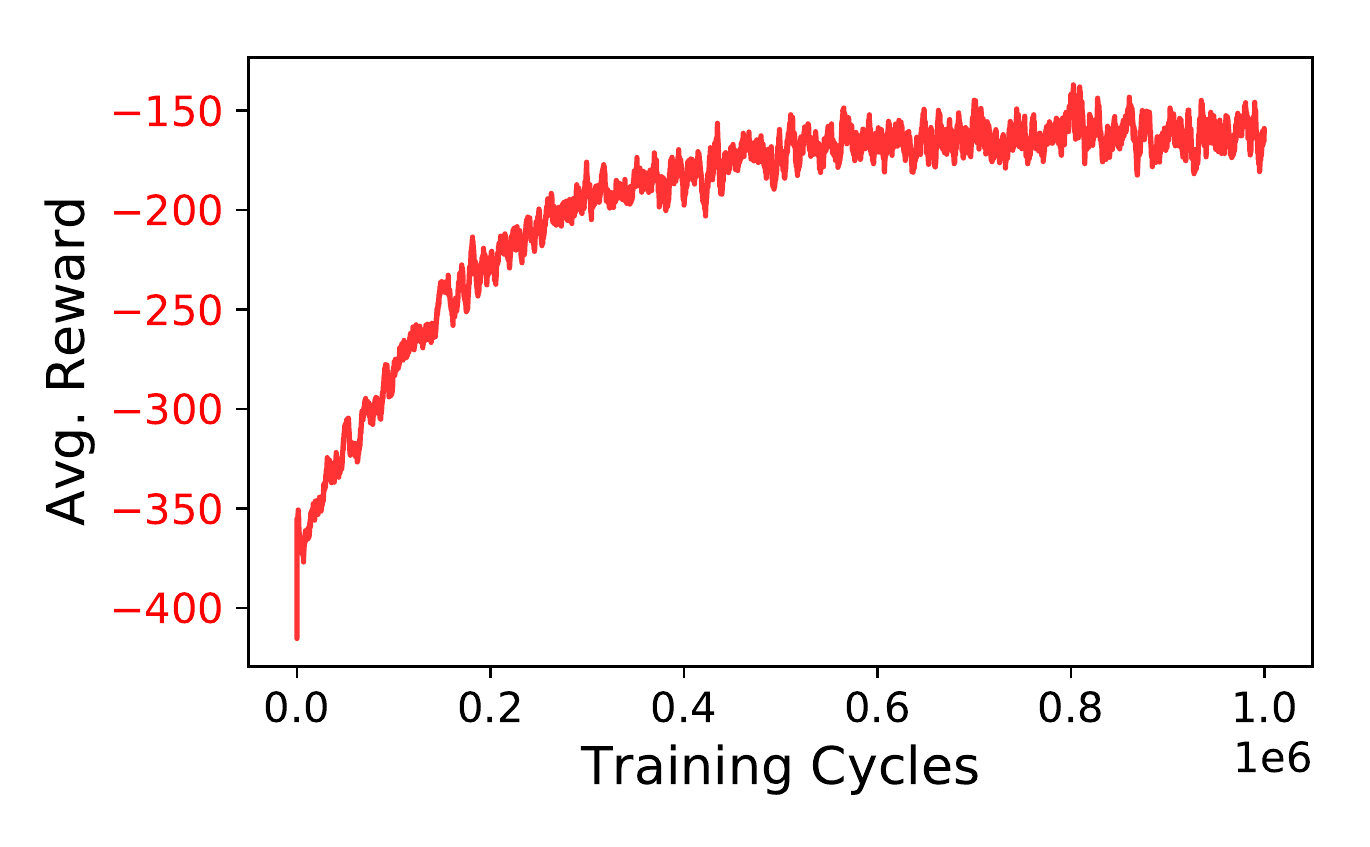} &  \includegraphics[scale=0.32]{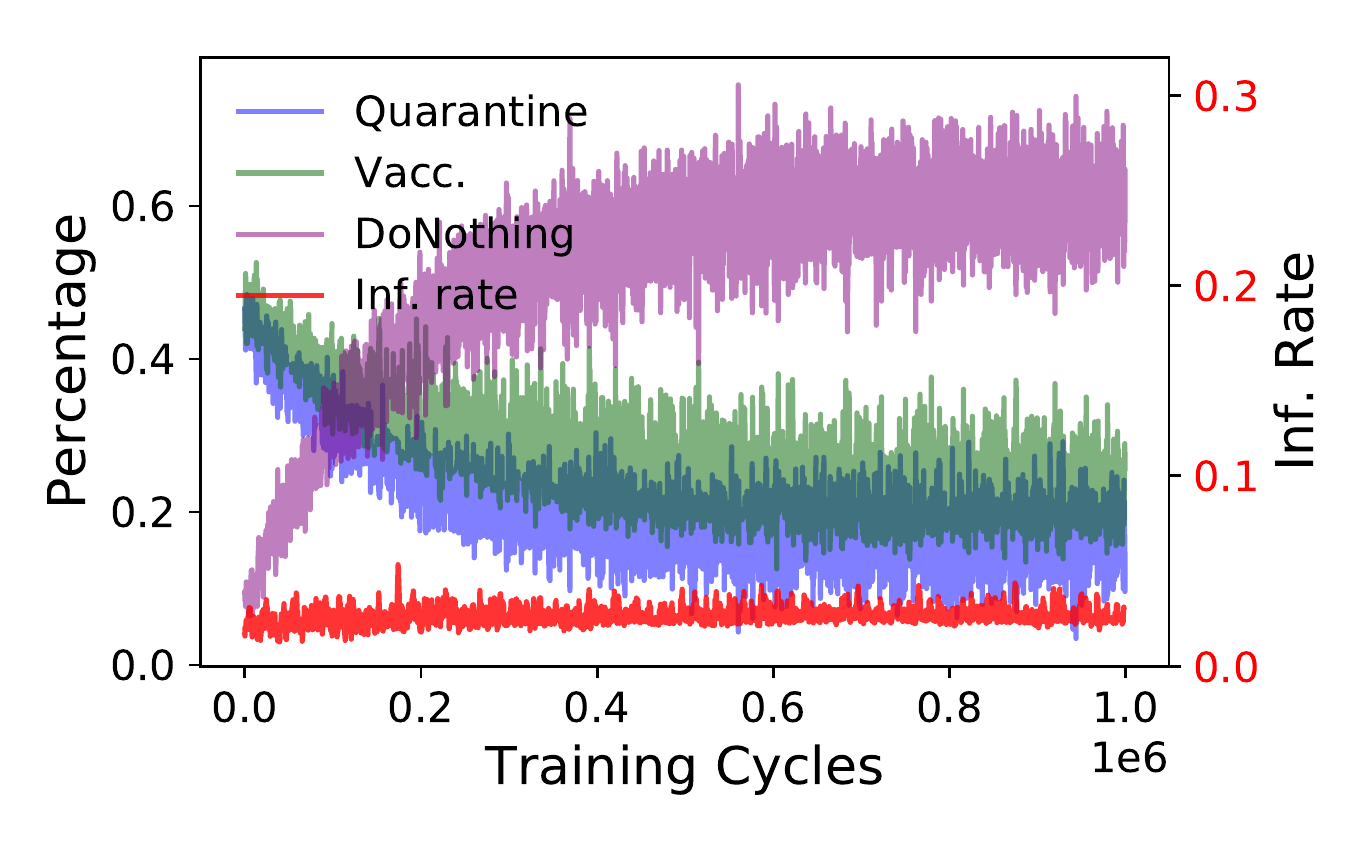} &
        \includegraphics[scale=0.32]{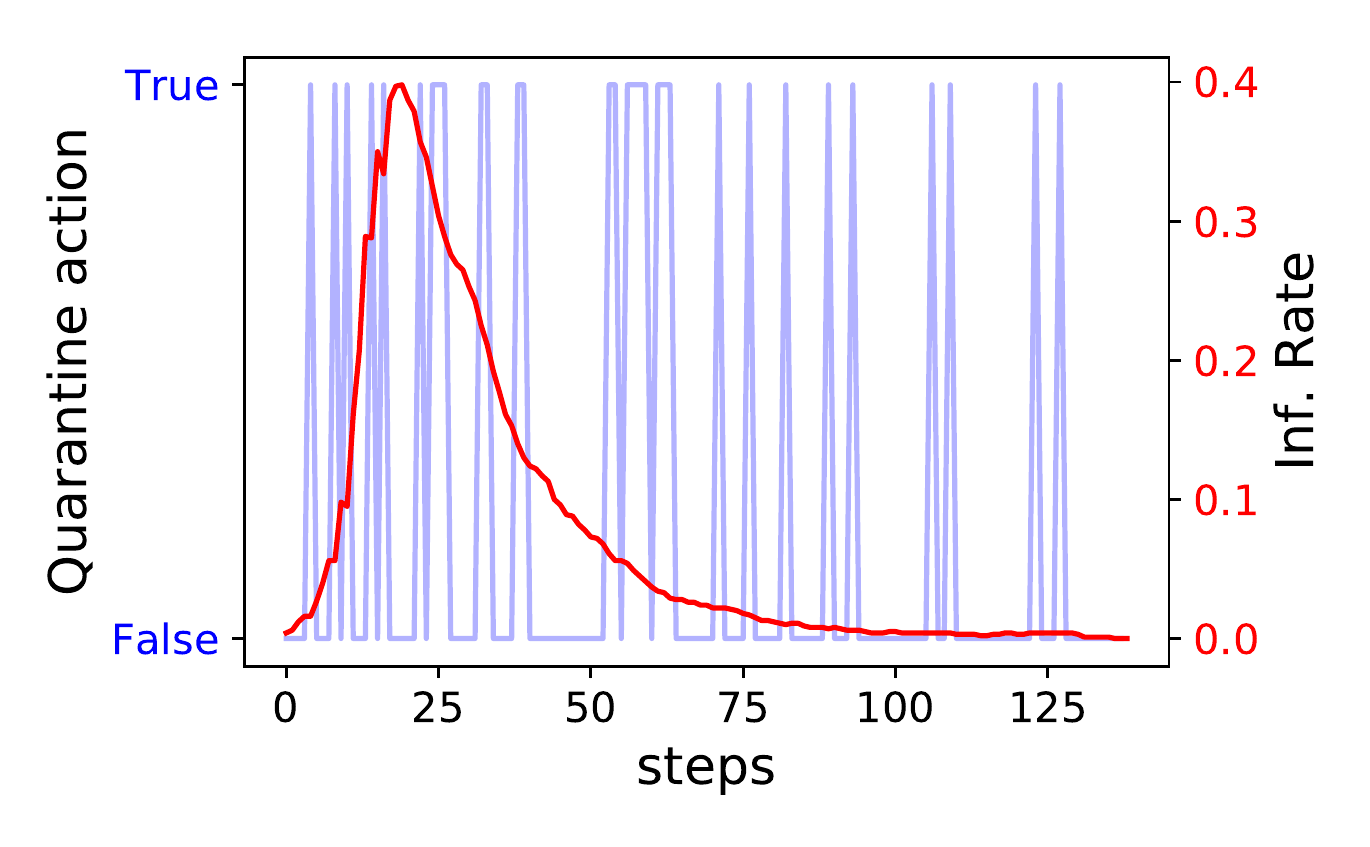} &
        \includegraphics[scale=0.32]{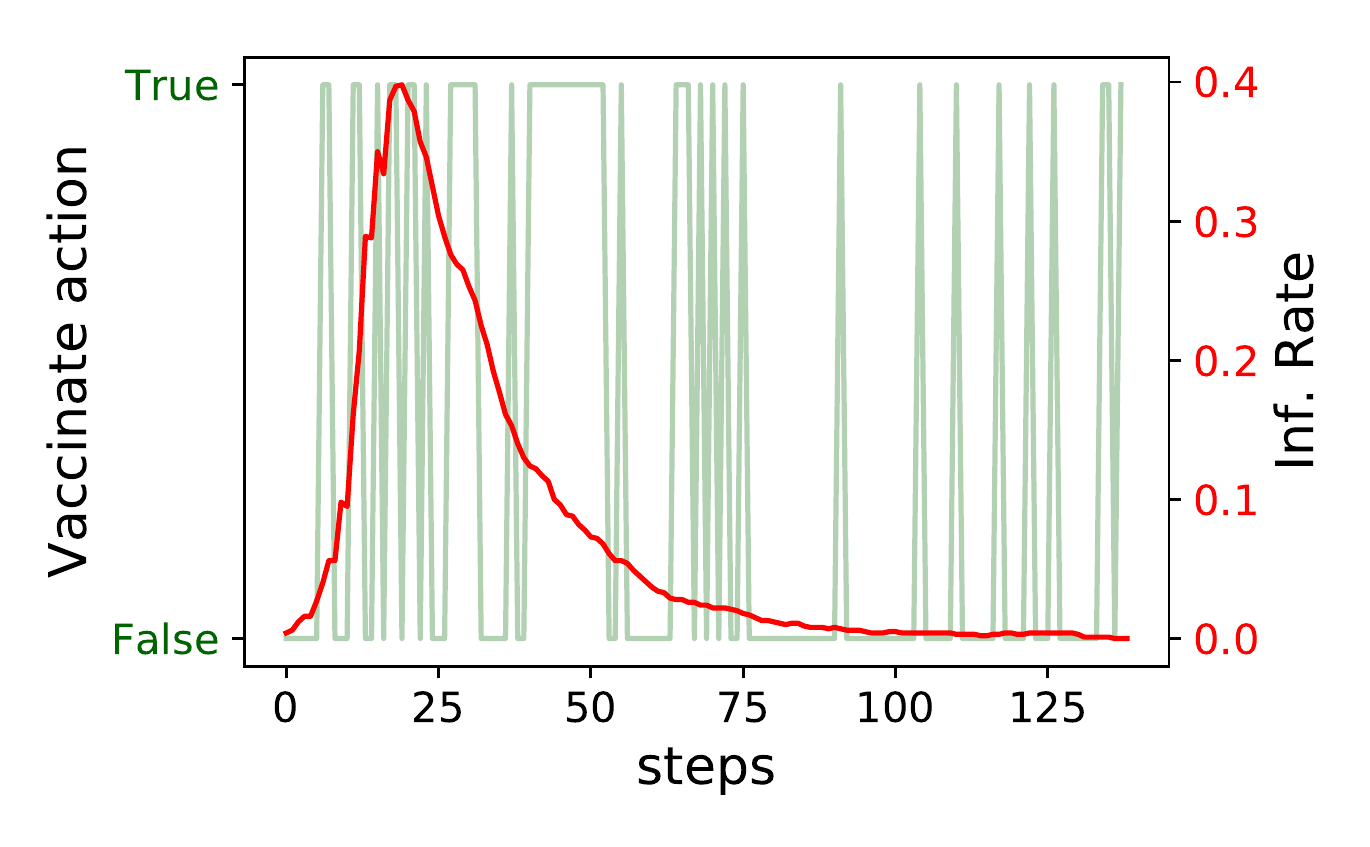}\\
    \end{tabular}}
    \tiny b)
    \caption{a) Experiment 5 $\lambda = 1, \eta_1 = 2, \eta_2 = 4$. b) Experiment 6 $\lambda = 1, \eta_1 = 10, \eta_2 = 20$.}

    \label{fig:exp34}
\end{figure}

Figure \ref{fig:exp34} depicts the agents behaviour over two experiments with differing vaccination efficacy and an infection rate weighting of $\lambda = 1$. With $\lambda = 1$, an observed positive node is equivalent in cost to quarantining a node or vaccinating two nodes. In Experiment 5 (Figure \ref{fig:exp34}a)), the agent learns to perform quarantine and vaccination actions almost equally frequently and slightly more often that doing nothing. Whilst there is no clear discernible pattern in the agent's policy and it appears almost random, the learning curve shows that the agent has learnt to perform better than random. In Experiment 6 the agent prefers to do nothing most of the time and quarantine/vaccinate less. This difference between Experiments 5 and 6 can be attributed to the increased efficacy of vaccination actions in reducing the infection rate throughout an episode. In comparison to Experiments 1 and 2 (Figure \ref{fig:exp12}), where $\lambda = 10$, the action percentages indicate that the agent is more conservative; the agent tends to be less active in managing the epidemic and is more likely to do nothing. This difference in behaviour directly reflects the different cost function between the two sets of experiments.

\subsection{Additional Domains}
\label{appendix:additional_domains}
\textbf{Biased rock paper scissors (RPS).} This domain is taken from \cite{FMVRW07}. Biased RPS is an environment where the agent repeatedly plays RPS against the environment which has a slight bias in its strategy. The environment plays randomly unless it won the previous round playing rock. In this case, the environment will always play rock at the current time step. This bias can be simply captured by a predicate $\it{IsRockAndLose}_{t-1}(h)$ which returns whether the environment played rock and the agent lost at the previous time step. A set of 1000 predicates were generated including predicates that detect whether the nth bit of the suffix of the history equals 1, $\it{Suffix}_N \comp (= 1) (h)$,  and uninformative predicates such as returning whether a random bit equals 1.

\textbf{Stop Heist.} The Stop Heist environment is an environment exhibiting history dependency where the agent acts as the surveillance team for a bank. At every time step, the agent receives a binary observation representing whether a known suspect arrives at the bank. The suspect's arrival times are determined by a Hawkes process. As a Hawkes process is self-exciting, recent arrivals increase the chance of arriving in the near term. The chance that a suspect performs a heist is also modelled as a self-exciting process, increasing the more frequently the suspect arrives. The agent has two actions it can perform: Do Nothing or Stop Suspect. The agent incurs a penalty of -100 if the suspect performs a heist and the agent chose to Do Nothing. If the agent chooses to Stop Suspect and the suspect was performing a heist, it receives a reward of 100. If the agent performs Stop Suspect but the suspect did not plan to perform a heist, it receives a minor penalty of -1. Otherwise, the agent receives 0 reward. This environment requires the agent to capture predicates that maintain some knowledge of the history of arrivals. To facilitate this, predicates of the following form were provided:
\begin{itemize}
    \item $\it{Percent}_{o, n} \comp ( \geq p) (h)$ for various values of $o \in \Obs, n \in \N$ and $p \in [0, 1]$.
\end{itemize}
$\it{Percent}_{o, n}$ computes the percentage of times that observation $o$ was received in the past $n$ steps. $(\geq p)$ returns whether the provided argument is greater than $p$. A set of 1000 predicates were generated for various values of $n$ and $p$ as well as including uninformative predicates such as returning whether a random bit equals 1.

\textbf{Jackpot.} The Jackpot environment is an environment where the agent plays a betting game against the environment at every time step. At every time step, the agent can choose to either Bet or Pass. If the agent performs Pass, it receives 0 reward. The environment is initiated with a pre-defined list of numbers. If the agent chooses to Bet on a time step that is a multiple of a number in the pre-defined list of numbers, it receives reward 1 with 70\% chance. If instead the agent chooses to Bet on a time step that is not a multiple of a number in the pre-defined list of numbers, then the agent receives reward -1 with 70\% chance. The observation received at every time step is constant and is uninformative. This environment requires the agent to capture predicates that provide some knowledge about arithmetic and also be able to count the time steps. Predicates of the following form were provided:
\begin{itemize}
    \item $\it{Count} \comp \it{IsMultiple}_j (h)$ for various values of $j \in \N$.
\end{itemize}
$\it{Count}$ returns the number of steps that have occurred in the provided history $h$ and $\it{IsMultiple}_j$ returns whether the provided argument is a multiple of $j$. A set of 1000 predicates were generated for various values of $j \in \N$ as well as including uninformative predicates such as returning whether a random bit equals 1.

\textbf{Taxi.} The Taxi environment is the well-known environment first introduced in \cite{DI99}. The agent acts as a Taxi in a grid world and must move to pick up a passenger and drop the passenger off at their desired destination. Instead of the 5x5 grid traditionally considered, we consider a 2x5 grid with no intermediate walls. We also change the reward for dropping the passenger at its destination successfully to 100. These modifications were made to shrink the planning horizon as well as make the original problem less sensitive to parameters for the three algorithms tested. Four destination locations are still used. The predicates available to the agent are suffix predicates that determine whether different bits of the history sequence are equal to 1. The agent will be able to recover the original MDP if it captures the suffix predicates that comprise the last observation received.

\section{Predicates}
\label{appendix:predicates}

We assume that the agent has a background theory $\Gamma$ consisting of:
\begin{itemize}
    \item a graph $G = (V, \mathcal{E})$ that captures, either exactly or approximately, the structure of the initial graph $G_0$ but not the disease status of nodes (the connectivity between people could be inferred from census data, telecommunication records, contact tracing apps, etc);
    \item the transition and observation functions of a dynamics model (i.e. Equations \ref{eqn:SEIRS_transition} and \ref{eqn:SEIRS_obs} for an SEIRS model) of the underlying disease but not the parameters. This information would be provided by experts in the chosen domain, i.e. epidemiologists working on epidemic modelling.
\end{itemize}

The agent has access to the following functions:
\begin{itemize}
    \item $\it{Encode}_i$ splits the possible range of its argument into $2^{i}$ equal sized buckets and encodes its argument by the number of whichever bucket it falls into. If the range is unbounded, it is first truncated.
    \item $\it{Bit}_i$ takes a bit string and returns the $ith$ bit.
    \item $\it{NaiveInfectionRate}_{t, \nu}$ takes a history sequence $h \in \Hist$ and computes the infection rate at time $t$ over the set of nodes $\nu \subseteq V$ as the observed infection rate plus a constant multiplied by the number of nodes observed as unknown.
    \item $\it{InfectionRateOfChange}_{t, \nu}$ takes a history sequence $h \in \Hist$ and computes the change in infection rate between timesteps $t-1$ and $t$ over the set of nodes $\nu \subseteq V$. 
    \item $\it{PercentAction}_{a, N}$ takes a history and returns the percentage of time action $a$ was selected in the last $N$ timesteps.
    \item $\it{ActionSequenceIndicator}_{a_{1:k}}$ is an indicator function returning 1 if the last $k$ actions performed match $a_{1:k}$ and 0 otherwise.
    \item $\it{MAReward}_w$ takes a history and returns the moving average of the reward over a window of size $w$. 
    \item $\it{RateOfChange}$ takes two real numbers and computes the ratio between them.
    \item $\it{ParticleFilter}_{\theta, M}$ takes a history sequence and approximates the belief state using the transition and observation models given by $\theta$ and $M$ particles.
    \item $\it{ParticleInfRate}$ takes a belief state represented by a set of particles and computes the expected infection rate. 
\end{itemize}

Function composition\index{composition ($\comp$)} is handled by the (reverse)
composition function 
\[ \comp : (a \rightarrow b) \rightarrow (b \rightarrow c) \rightarrow 
             (a \rightarrow c) \]
defined by
$ ((f \comp g) \; x) = (g \; (f \; x)).$

The set of predicates generated using the above functions were of the following form:
\begin{itemize}
    \item [(1)] $\it{NaiveInfectionRate}_{t, \nu} \comp \it{Encode}_5  \comp \it{Bit}_i \comp (=1) (h) \;\;\; \text{for various } \;i, \nu \subseteq V, \; t = 1$
    \item [(2)] $\it{InfectionRateOfChange}_{t, \nu} \comp  \it{Encode}_{7}  \comp \it{Bit}_i \comp (=1) (h) \;\;\; \text{for various} \;i, \nu \subseteq V, \; t = 1$
    \item [(3)] $\it{PercentAction}_{a, N} \comp \it{Encode}_{8}  \comp \it{Bit}_i \comp (=1) (h) \;\;\; \text{for all } a \in \A  \text{ and various values of } N$
    \item [(4)] $\it{ActionSequenceIndicator}_{a_{1:k}}(h) \;\;\; \text{for various } a_{1:k}, \; k \geq 1$
    \item [(5)] $\lambda s.\it{RateOfChange}(\it{MAReward}_{w_1}(s), \it{MAReward}_{w_2}(s)) \comp ( \geq 1) (h)$ \text{for } $w_1, w_2 \in \N$
    \item [(6)] $\it{ParticleFilter}_{\theta, M} \comp \it{ParticleInfRate} \comp \it{Encode}_5  \comp \it{Bit}_i \comp (=1)(h)$ for various $\theta, M = 100$
\end{itemize}

\iffalse
\begin{align*}
& \it{NaiveInfectionRate}_{t, \nu} \comp \it{Encode}_5  \comp \it{Bit}_i \comp (=1) (h)
\;\;\; \text{for various } \;i, \nu \subseteq V, \; t = 1 \\
%
& \it{InfectionRateOfChange}_{t, \nu} \comp  \it{Encode}_{8}  \comp \it{Bit}_i \comp (=1) (h) \;\;\; \text{for various} \;i, \nu \subseteq V, \; t = 1 \\
%
& \it{PercentAction}_{a, N} \comp \it{Encode}_{8}  \comp \it{Bit}_i \comp (=1) (h) \;\;\; \text{for all } a \in \A  \text{ and various values of } N \\
%
& \it{ActionSequenceIndicator}_{a_{1:k}}(h) \;\;\; \text{for various } a_{1:k}, \; k \geq 1 \\
%
& \it{RateOfChange}(\it{MAReward}_{w_1}, \it{MAReward}_{w_2}) \comp ( \geq 1) (h) \;\;\; \text{for various } w_1, w_2 \in \N \\
%
& \it{ParticleFilter}_{\theta, M} \comp \it{ParticleInfectionRate} \comp \it{Encode}_5  \comp \it{Bit}_i \comp (=1)(h) \;\;\; \text{for various } \theta, M = 100
\end{align*}
\fi

The choice of subsets of nodes were limited to 20\% percentile ranges of the nodes ranked by betweenness centrality and degree centrality. Note that retrieving a subset of the bits from an encoded value essentially selects a range that the encoded value falls within. For example, suppose we have two bits encoding values from 0 to 3. The largest bit indicates whether the encoded value is greater than 2 or not. 

Table \ref{tab:predicates_all} lists the number of each type of predicate that were generated in the initial set. Table \ref{tab:exp_preds} details the type of predicates that were chosen in each experiment. 

\begin{center}
    \begin{table}[H]
        \begin{center}
        \begin{tabular}{cc}
        \textbf{Predicate type} & \textbf{Number generated} \\
        \hline
        (1) & 55\\
        (2) & 80\\
        (3) & 693\\
        (4) & 633\\
        (5) & 8\\
        (6) & 20
        \end{tabular}
        \caption{Number of predicates generated of each type. In total there are 1489 predicates generated.}
        \label{tab:predicates_all}
        \end{center}
    \end{table}
\end{center}

\begin{center}
    \begin{table}[H]
        \begin{center}
        \begin{tabular}{ccccccccc}
        & & \multicolumn{7}{c}{Predicate Type}                   \\
        \cmidrule(r){3-9}
        \textbf{Experiment} & \textbf{FS} Method & \textbf{(1)} & \textbf{(2)} & \textbf{(3)} & \textbf{(4)} & \textbf{(5)} & \textbf{(6)} & \textbf{Total}\\
        \hline
        1 & RF-BDD & 8 & \textbf{7} & 17 & 2 & 0 & 0 & 34 \\
        2 & RF-BDD & 3 & \textbf{9} & 9 & 2 & 0 & 0 & 23 \\
        3 & Random Forest & 16 & 0 & 6 & 0 & \textbf{8} & 0 & 30\\ % V0L10_RF
        4 & Random Forest & 10 & 0 & 12 & 0 & \textbf{8} & 0 & 30\\ % V1L10_RF
        5 & RF-BDD & 6 & \textbf{9} & 9 & 5 & 0 & 0 & 29 \\  %V0L1
        6 & RF-BDD & 10 & \textbf{4} & 16 & 4 & 0 & 0 & 34\\
        \end{tabular}
        \caption{Number of predicates generated of each type for each experiment.}
        \label{tab:exp_preds}
        \end{center}
    \end{table}
\end{center}

        % 5 & RF-BDD & 7 & \textbf{3} & 17 & 0 & 0 & 0 & 27 \\ % V0_sparse
        % 6 & RF-BDD & 15 & \textbf{13} & 9 & 0 & 0 & 0 & 37 \\ % V1_sparse
        % 7 & RF-BDD & 13 & \textbf{6} & 7 & 1 & 0 & 0 & 27 \\ % V0_death
        % 8 & RF-BDD & 7 & \textbf{5} & 16 & 5 & 0 & 0 & 33 \\ % V1_death

As can be seen from Table \ref{tab:exp_preds}, the RF-BDD feature selection method was able to pick out predicates that provided information about the current infection rate (type (1)), its rate of change (type (2)), action percentages (type (3)) and a few action sequence indicators (type (4)). This behaviour was consistent across all experiments where RF-BDD was used.

Random Forest feature selection chose predicates that provided information about the current infection rate (type (1)), action percentages (type (3)), and the reward rate of change (type (5)). 

Note that predicates on the infection rate as computed by particle filtering were not selected by either method. This can be attributed to the particle degeneracy issue that occurs in high dimensional problems \cite{SBM15}. When the size of the state space is very high dimensional and the number of particles used is not sufficient, the particle set can collapse to a very small number of points when updated, resulting in poor estimates.

It is likely that the difference in performance when comparing Experiments 1, 2, 3, and 4 (see Appendix \ref{sec:extra_experimental_results} stems from the fact that RF-BDD chose type (2) predicates whereas Random Forest chose type (5) predicates. It is unlikely that type (4) predicates provided much useful information as the sequences chosen were not insightful and varied across the four experiments. In contrast, type (2) predicates inform the agent about the infection rate of change, which can provide useful information about the next state of the epidemic process provided the transition is sufficiently smooth. 

\section{Feature Selection Using Binary Decision Diagrams}
\label{appendix:bdd}

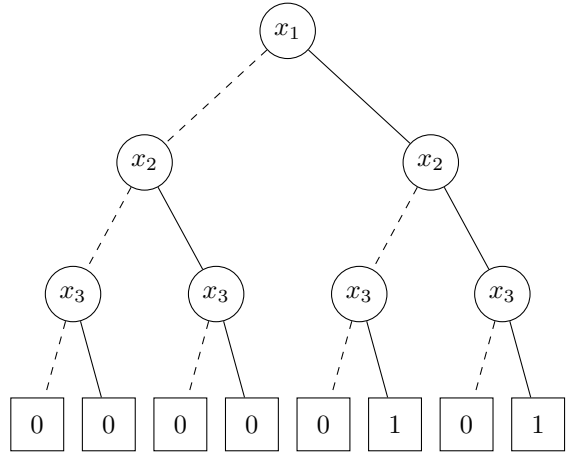
\begin{figure}[h]
\begin{center}
	\begin{forest}
		for tree = {%
			l sep = 1cm,
% 			edge = ->,
		},
		[ $x_1$, circle, draw, 
			[ $x_2$, circle, draw, edge=dashed 
				[ $x_3$, circle, draw,  edge=dashed 
					[$0$, r, edge=dashed],
					[$0$, r]
				]
				[$x_3$, circle, draw
				    [$0$, r, edge=dashed],
				    [$0$, r]
				] 
%					{ \draw () edge (end); }
			]
			[ $x_2$, circle, draw
				[$x_3$, circle, draw, edge=dashed
				    [$0$, r, edge=dashed],
				    [$1$, r]
				]
				[$x_3$, circle, draw
				    [$0$, r, edge=dashed],
				    [$1$, r]
				]
%			 { \draw () edge (end); }
			]
		]
% 		\node (end) at (current bounding box.south) [below=1cm] {End};
		\tikzset{every edge/.style = {draw, ->}}
	\end{forest}
	\caption{Full binary tree representation of the  boolean function $f(x_1, x_2, x_3) = 1 \text{ if } x_1 \text{ and } x_3$.}
	\label{fig:bdd_tree_eg1}
\end{center}
\end{figure}

\begin{figure}[h]
    \centering
    
    \begin{tikzpicture}[node distance=0.5cm and 0.5cm]\footnotesize
        \node[c] (1) {$x_1$};
        \node[c, draw=none] (2) [below left=of 1] {};
        \node[c] (4) [below=of 1, xshift=10pt]{$x_3$};
        \node[r] (5) [below left=of 4]{$0$};
        \node[r] (6) [below right=of 4, xshift=-18pt]{$1$};
    
        \draw[dashed] (1) -- (5);
        \draw[dashed] (4) -- (5);
        
        \draw (1) -- (4);
        \draw (4) -- (6);
    
    \end{tikzpicture}

    \caption{Reduced BDD for the boolean function $f(x_1, x_2, x_3) = 1 \text{ if } x_1 \text{ and } x_3$.}
    \label{fig:bdd_reduced_eg1}
\end{figure}
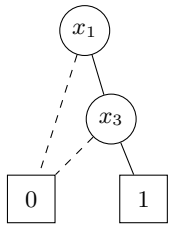

Figure \ref{fig:bdd_tree_eg1} depicts the full binary tree representation of the boolean function $f(x_1, x_2, x_3) = 1 \text{ if } x_1 \text{ and } x_3$. Dotted lines indicate a path that is traversed when the boolean variable at the node equals $0$ and solid lines are traversed when the boolean variable equals $1$. We will always place variables $x_i$ higher in the tree than variables $x_j$ if $i < j$. Figure \ref{fig:bdd_reduced_eg1} depicts the same Boolean function but in reduced BDD form; as can be seen, the redundant feature $x_2$ has been eliminated. 

We now consider the properties of reduced BDDs that make them appropriate for feature extraction. Every Boolean function $f: \{0, 1\}^n \to \{0, 1\}$ corresponds to a $2^n$ bit string representing the function's output on a canonical ordering of its input. More specifically, let $\beta$ be the $2^n$ bit string representing $f$. Then $\beta$ starts with $f(0, \ldots, 0)$ and continues with $f(0, \ldots, 0, 1)$, $f(0, \ldots, 1, 0)$, $f(0, \ldots, 1, 1)$, $\ldots$, until $f(1, \ldots, 1, 1)$. The $2^n$ bit string is known as a Boolean function's truth table. For example, the truth table of $f(x_1, x_2, x_3) = 1 \text{ if } x_1 \text{ and } x_3$ is $00000101$. The truth table also corresponds to the sequence of outputs we would get from the BDD representing the Boolean function if we traversed the BDD by taking $0$ paths before $1$ paths. We now define the notion of a bead.

\begin{defn}[Bead]
	A \textit{bead} of order $n$ is a truth table $\beta$ such that there does not exist a string $\alpha$ of length $2^{n-1}$ such that $\beta = \alpha \alpha$.
\end{defn}
In other words, a bead is a truth table that does not repeat itself at its mid-point. Now note that every node in a BDD can be identified with a substring of the truth table. Consider the full binary tree in Figure \ref{fig:bdd_tree_eg1} again. The root node of the BDD can be identified with the full truth table of length $2^3$. The nodes one layer below correspond to a length $2^2$ partition of of the truth table. The full identification is shown in Figure \ref{fig:bead_bdd_tree_eg1}. Depending on the truth table, nodes in a non-reduced BDD can correspond to either beads and non-beads. A key property of reduced BDD structures is that they only maintain nodes corresponding to beads.

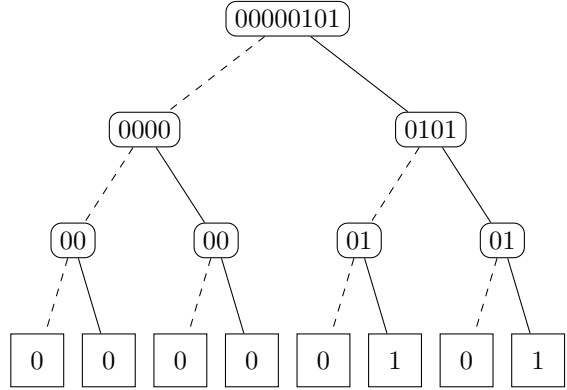
\begin{figure}[h]
\begin{center}
	\begin{forest}
		for tree = {%
			l sep = 1cm,
% 			edge = ->,
		},
		[ $00000101$, draw, rounded corners
			[ $0000$, draw, rounded corners, edge=dashed 
				[ $00$, draw, rounded corners, edge=dashed 
					[$0$, r, edge=dashed],
					[$0$, r]
				]
				[$00$, draw, rounded corners
				    [$0$, r, edge=dashed],
				    [$0$, r]
				] 
%					{ \draw () edge (end); }
			]
			[ $0101$, draw, rounded corners
				[$01$, draw, rounded corners, edge=dashed
				    [$0$, r, edge=dashed],
				    [$1$, r]
				]
				[$01$, draw, rounded corners,
				    [$0$, r, edge=dashed],
				    [$1$, r]
				]
%			 { \draw () edge (end); }
			]
		]
% 		\node (end) at (current bounding box.south) [below=1cm] {End};
		\tikzset{every edge/.style = {draw, ->}}
	\end{forest}
	\caption{Full binary tree of the $f(x_1, x_2, x_3) = 1 \text{ if } x_1 \text{ and } x_3$ with nodes identified with corresponding beads.}
	\label{fig:bead_bdd_tree_eg1}
\end{center}
\end{figure}

\begin{theorem}
	\label{thm:bead_BDD}
	Suppose $G = (V, E)$ is a reduced BDD representing a Boolean function $f$ of $n$ variables with truth table $\beta$. Let $h$ be a function mapping all nodes $v \in V$ to its relevant substring of $\beta$. Then $h(v)$ is a bead for all nodes $v \in V$. 
\end{theorem}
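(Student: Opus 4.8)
The plan is to derive the theorem from the two defining reduction rules of a reduced BDD, using the identification of $h(v)$ with a truth table. Recall first that the substring $h(v)$ attached to a node $v$ is precisely the truth table of the subfunction $g_v$ that the sub-BDD rooted at $v$ computes: if $v$ is labelled $x_j$ then $g_v$ is a function of $x_j,x_{j+1},\ldots,x_n$ and $h(v)$ is the length-$2^{n-j+1}$ aligned block of $\beta$ it induces. (This makes $h$ well-defined even when several paths reach $v$: all of them commit the variables they test consistently and leave the remaining earlier variables free, and $g_v$ does not depend on those, so every such path selects the same block.) I will use three standard facts about a reduced BDD $G$: (R1) $G$ is an ordered BDD computing $f$, so following the $c$-edge out of an internal node $v$ labelled $x_j$ reaches a node whose function equals $g_v|_{x_j=c}$; (R2) no two distinct nodes of $G$ compute the same Boolean function (the subgraph-sharing rule); (R3) no internal node of $G$ has its two children equal (the redundant-node-elimination rule). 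Facts (R2)--(R3) say exactly that $G$ sits at the fixpoint of the two reduction steps \cite{Knuth08}.

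The heart of the proof is then a one-node computation. If $v$ is a sink, $h(v)\in\{0,1\}$ has length $1$ and is trivially a bead of order $0$. If $v$ is internal and labelled $x_j$, write $\beta_0$ and $\beta_1$ for the truth tables of $g_v|_{x_j=0}$ and $g_v|_{x_j=1}$ over $x_{j+1},\ldots,x_n$; since $x_j$ is the most significant variable of $g_v$'s truth table we have $h(v)=\beta_0\beta_1$ with $|\beta_0|=|\beta_1|=2^{n-j}$, so $\beta_0,\beta_1$ are exactly the two halves of $h(v)$. By (R1) the two children of $v$ compute $g_v|_{x_j=0}$ and $g_v|_{x_j=1}$; by (R3) these children are distinct nodes; by (R2) distinct nodes compute distinct functions, hence $g_v|_{x_j=0}\neq g_v|_{x_j=1}$ and so $\beta_0\neq\beta_1$. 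Therefore $h(v)$ is not of the form $\alpha\alpha$, i.e. it is a bead (of order $n-j+1$), which is the claim.

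The only delicate point, and what I expect to be the main obstacle, is that a reduced BDD may skip variables: the $c$-child of a node labelled $x_j$ can be labelled $x_{j'}$ with $j'>j+1$ and so nominally compute a function of fewer variables than $g_v|_{x_j=c}$. I would dispose of this by observing that a skipped variable is exactly one on which the relevant cofactor does not depend, so ``the two children of $v$ are the same node'' stays equivalent to ``$g_v|_{x_j=0}=g_v|_{x_j=1}$ as functions of $x_{j+1},\ldots,x_n$'', which is all that (R2)--(R3) are used for. If a fully self-contained argument is preferred, the same conclusion follows by induction on $n$: split $\beta=\beta_0\beta_1$ along $x_1$; if $\beta_0=\beta_1$ then $G$ is literally the reduced BDD of the common cofactor over $x_2,\ldots,x_n$ and the inductive hypothesis applies verbatim, using that being a bead is an intrinsic property of a bit string (it does not mention the ambient $\beta$); and if $\beta_0\neq\beta_1$ then the root is a genuine $x_1$-node with $h(\text{root})=\beta$ a bead of order $n$, while every remaining node lies in the reduced BDD of $f|_{x_1=0}$ or of $f|_{x_1=1}$ and is covered by the inductive hypothesis. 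In both presentations all the work is in the set-up — the characterisation of $G$ by (R1)--(R3) and the identification of $h(v)$ with the truth table of $g_v$ — after which the bead property is immediate.
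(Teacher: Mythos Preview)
Your argument is correct and follows essentially the same route as the paper: both show that if $h(v)=\alpha\alpha$ then the two children of $v$ compute the same subfunction, which contradicts reducedness. Your version is more careful in separating the two reduction rules (R2) and (R3) and in explicitly handling the variable-skipping subtlety that the paper glosses over by appealing to ``a full tree representation of the BDD''; the optional inductive alternative you offer is not in the paper but is a sound way to make the skipping issue fully explicit.
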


\begin{proof}
	First note that the terminal nodes always correspond to the length 1 beads $0$ and $1$. We thus focus on internal nodes. Suppose an internal node $v$ is such that $h(v)$ is not a bead. Then its two children $v_0$ and $v_1$ found by following paths $0$ and $1$ respectively have equivalent representations, i.e. $h(v_0) = h(v_1)$. This means that the two strings $h(v_0)$ and $h(v_1)$ are the same length. The substrings being the same length implies that both nodes are of the same depth in a full tree representation of the BDD and hence are labelled with the same Boolean variable. Furthermore, having the same substring also means that the subtrees starting with $v_0$ and $v_1$ as the root are the same. Thus, nodes $v_0$ and $v_1$ are equivalent, contradicting the claim that the BDD is reduced. 
\end{proof}

The following proposition shows that conditionally redundant features correspond directly to non-beads in any BDD representation of a given decision rule. 
\begin{prop}
	\label{prop:redundant_nonbead}
	Let $G_x$ be a BDD representation of the decision rule $F^{r}_{\phi, D}: \St_{\phi} \times \A \to \{0, 1\}$. Then any node in $G_x$ labelled with a conditionally redundant feature will correspond to a non-bead.
\end{prop}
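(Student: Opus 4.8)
The plan is to unwind the two definitions involved and reduce the claim to a one-line statement about truth tables, using the node-to-substring picture already set up around Figure~\ref{fig:bead_bdd_tree_eg1} and Theorem~\ref{thm:bead_BDD}. Recall that $F^{r}_{\phi, D}$ is a Boolean function of $N$ variables, namely the $n$ state bits $s_1, \dots, s_n$ (the values of the predicates $p_1, \dots, p_n$) followed by the bits of the binarised action, and that the variable ordering used for $G_x$ lists these in that fixed order. Suppose $v$ is a node of $G_x$ labelled by the feature $p_j$, i.e.\ sitting at the level corresponding to the variable $s_j$. By the construction recalled for the full binary tree and for reduced BDDs, the function computed at $v$ is a restriction $g$ of $F^{r}_{\phi, D}$ obtained by assigning values to the variables above that level; $g$ is a function of $s_j, s_{j+1}, \dots$, and $h(v)$ is its truth table, a contiguous substring of the truth table $\beta$ of $F^{r}_{\phi, D}$ of length $2^{N-j+1}$.

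First I would observe that conditional redundancy of $p_j$ (Definition~\ref{defn:redundant_featu}) says precisely that $F^{r}_{\phi, D}$ does not depend on the bit $s_j$: flipping $s_j$ while holding all other state bits and the action fixed never changes the output. (The definition is written for the listing $\{p_1, \dots, p_n\}$, but its content --- ``the rule ignores bit $j$'' --- is independent of the ordering, and the ordering is all we use below.) Since restricting by fixing other variables cannot introduce a dependence on $s_j$ that was not already there, every such restriction, in particular the subfunction $g$ computed at any node labelled $p_j$, is also independent of $s_j$.

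Next I would translate this into truth-table language. In the canonical ordering, the truth table of a function whose first listed variable is $s_j$ has, as its first half, the outputs with $s_j = 0$ and, as its second half, the outputs with $s_j = 1$. Since $g$ does not depend on $s_j$, these halves coincide, so $h(v) = \alpha \alpha$ where $\alpha$ is the truth table of $g$ with $s_j$ set to $0$, of length $2^{N-j} = 2^{(N-j+1)-1}$. That is exactly the negation of the defining condition of a bead of order $N-j+1$, so $h(v)$ is a non-bead, which is the claim. The boundary case in which $s_j$ is the last variable in the ordering is identical: $g$ is then a constant, whose truth table $00$ or $11$ is again of the form $\alpha\alpha$.

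The only place any care is needed is the first step: establishing, for a general (not necessarily reduced) $G_x$ over the fixed ordering, that a node labelled $p_j$ really does compute the claimed restriction of $F^{r}_{\phi, D}$ and that its associated string $h(v)$ is a contiguous length-$2^{N-j+1}$ substring of $\beta$. This is exactly the correspondence the paper has already fixed via the map $h$ of Theorem~\ref{thm:bead_BDD}; I would just note that it carries over verbatim to any OBDD with this ordering, since any two input prefixes routing to the same node must induce the same restricted function and hence be assigned the same substring. Once that bookkeeping is in place, the rest is the elementary ``two equal halves'' computation above.
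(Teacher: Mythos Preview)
Your proposal is correct and follows essentially the same route as the paper's own proof: both arguments reduce conditional redundancy of $p_j$ to the statement that flipping the $j$th bit leaves the decision rule unchanged, and then conclude that the truth table at any node labelled $p_j$ splits into two identical halves, i.e.\ is a non-bead. Your version is slightly more explicit about the action bits and the restriction-to-subfunction step, but there is no substantive difference in method.
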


\begin{proof}
    Let $\phi_1 = \{p_1, \ldots, p_n\}$ be the predicate set and a predicate $p_i$ labels nodes on the $ith$ depth of $G_{x}$. Recall that a feature $p_j$ is conditionally redundant iff for all $s \in \St_{\phi_1},$ $F^{r}_{\phi_1, D}(s, a) = F^{r}_{\phi_2, D}(s',a)$ where $\phi_2 = \phi_1 \setminus \{ p_j \}$ and $s' = s_{1:j-1}s_{j+1:n}$. 
    
    Suppose that a predicate $p_j$ is conditionally redundant. Let $a \in \{0, 1\}^{j-1}$ and $b \in \{0, 1\}^{n-j}$. Since $p_j$ is conditionally redundant, we must have $F_{\phi_1}(a0b) = F_{\phi_2}(ab)$. Similarly, $F_{\phi_1}(a1b) = F_{\phi_2}(ab)$. Thus, $F_{\phi_1}(a0b) = F_{\phi_1}(a1b)$. Therefore any node labelled by $p_j$ has a truth table that repeats a binary string of length $2^{j-1}$ about its midpoint and is thus a non-bead.
\end{proof}

% \begin{proof}
% 	Recall that a feature $p_j$ is redundant if for all $x, y \in \St_{\phi} \times \A$ such that $x_i = y_i$ for $i \neq j$ and $x_j \neq y_j$, $F^{r}_\phi(x) = F^{r}_\phi(y)$. 
	
% 	Let $N$ be the number of predicates and $p_i, i = 1, \ldots, N$ denote the predicate labelling nodes on the $ith$ depth of $G_x$. Let $h$ be the function that maps nodes to their corresponding substring of the truth table of $F^{r}_{\phi}$. Suppose $p_j$ is a redundant feature and let $v$ be any node labelled by $p_j$. Let $s$ be any $2^{N-j-1}$ bit string. Then since $v$ is redundant, the terminal node taking path $s$ from $v$'s children nodes $v0$ and $v1$ must have the same label. This implies that $h(v)$ is a non-bead. 
% \end{proof}
Theorem \ref{thm:bead_BDD} guarantees that a BDD reduction procedure will remove all non-beads and Proposition \ref{prop:redundant_nonbead} guarantees that conditionally redundant features correspond to non-beads in any BDD representation of a decision rule. Thus, Theorem \ref{thm:bead_BDD} and Proposition \ref{prop:redundant_nonbead} provide the basis upon which BDD reduction can be utilised to remove conditionally redundant features. The proof of Theorem \ref{thm:bdd_informative} follows directly from combining Proposition \ref{prop:redundant_nonbead} and Theorem \ref{thm:bead_BDD}.

Whilst Theorem \ref{thm:bdd_informative} shows that BDD reduction can remove conditionally redundant features, it is important to note that the definition of a conditionally redundant feature is dependent upon the ordering of the predicates. This is part of the motivation for performing RF-BDD.

\section{Coding $MDP$ Sequences and a $Cost_M$ Counter-example}
\label{appendix:counter_example}

\subsection{Coding $MDP$ Sequences}
To evaluate candidate state abstractions, the $\Phi$MDP approach ranks mappings based on the code length of the state-action-reward sequence. %generated. 
For this approach to be well formed, the underlying interaction sequence must be ergodic: the frequencies of every finite substring of the sequence converge asymptotically. Consider the set of states $\left\{s_{t_1}, \ldots, s_{t_k}\right\}$ transitioned to from a particular state $s$ and action $a$. This set of states will be an i.i.d. sequence generated according to $\theta_{s, s'}^{a} = \Pa(s_{t_i} = s' | s, a)$. The state sequence can then be considered a combination of i.i.d. sequences and the probability of the state sequence conditioned on actions is given by
\begin{align}
\Pa(s_{1:n} | a_{1:n}, \theta) &= \prod_{s, a} \prod_{j = t_1}^{t_k} \Pa(s_j | s, a) = \prod_{s, a} \prod_{j = t_1}^{t_k} \theta_{s, s_j}^{a} 
= \prod_{s, a} \prod_{s'} \left(\theta_{s, s'}^{a}\right)^{n_{s,s'}^{a}}
\end{align}

In practice, $\theta$ is unknown and we instead rely on estimates $\hat{\theta}$. If $\theta_{s, s'}^{a}$ is estimated by a frequency estimator $\hat{\theta}_{s, s'}^{a} = \frac{n^{a}_{s, s'}}{n_s^a}$, then the code length of $s_{1:n}$ given $a_{1:n}$ is given by
\begin{align}
	CL(s_{1:n} | a_{1:n}) = -\log \Pa(s_{1:n} | a_{1:n}, \hat{\theta}) 
	= - \sum_{s, a} \sum_{s'} n_{s, s'}^{a} \log(\hat{\theta}_{s, s'}^{a}) 
	= \sum_{s, a} CL(\bm{\hat{\theta}_{s}^{a}})~, \label{eqn:CL_S}
\end{align}
where $\bm{\hat{\theta}_s^{a}} = (\hat{\theta}_{s, s'}^{a})_{s'}$ and $CL(\bm{\hat{\theta}_s^{a}}) = - \sum_{s'} n_{s, s'}^a \log (\hat{\theta}_{s,s'}^{a}) = n^a_s H(\bm{n^a_s}/n^a_s)$. Similarly, the code length for the reward sequence is given by
%\begin{align}
%	\label{eqn:CL_R}
	$CL(r_{1:n} | s_{1:n}, a_{1:n}) = \sum_{s, a, s'} CL(\bm{\hat{\theta}_{s, s'}^{a}})$,
% \end{align}
where $\bm{\hat{\theta}_{s, s'}^{a}} = (\hat{\theta}_{s, s'}^{a, r})_r$ and $CL(\bm{\hat{\theta}_{s, s'}^{a}}) \coloneqq \sum_{s, a, s'} n_{s, s'}^{a, r} \log (\theta_{s, s'}^{a, r})$.

\subsection{$Cost_M$ Counter-example}
\begin{figure}[H]
	\centering
	
	\begin{tikzpicture}
	% States
	\node[state]							(o1) {$o$};
	\node[state, right=of o1]		   (o2) {$o'$};	
	
	\draw[every loop]
	(o1) edge[bend right, auto=left] node {0.1} (o2)
	(o1) edge[loop above] node {0.9} (o1)
	(o2) edge[loop above] node {0.9} (o2)
	(o2) edge[bend right, auto=right] node {0.1} (o1);
	\end{tikzpicture}
	\caption{A diagram illustrating the observation transition probabilities of the environment in the counter-example.}
	\label{Fig:Cost_Counterexample}
\end{figure}
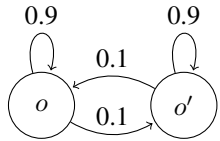

Consider the counter-example presented in Figure \ref{Fig:Cost_Counterexample}. The environment emits two observations $o, o'$ and two rewards 0, 1 with the following distribution:
\begin{align*}
&p(o_t | h_{t-1}, a_{t-1}) = \begin{cases*}
0.9 \quad& \text{if $o_t = o_{t-1}$}\\
0.1 \quad & \text{otherwise.}
\end{cases*}\\
&\text{if $o_t \neq o_{t-1}$:}\\
&p(r_t | h_{t-1}, a_{t-1}, o_t ) = \begin{cases*}
0.1 \quad& \text{if $r_t = 1$}\\
0.9 \quad& \text{if $r_t = 0$}
\end{cases*}\\
&\text{if $o_t = o_{t-1}$:}\\
&p(r_t | h_{t-1}, a_{t-1}, o_t ) = \begin{cases*}
0 \quad& \text{if $r_t = 1$}\\
1 \quad& \text{if $r_t = 0$}
\end{cases*}\\
\end{align*}

Note that the given percept dynamics of the given environment are effectively independent of the actions performed. Now consider two candidate MDP mappings $\phi_0$ and $\phi_1$. For all $h_t \in \Hist$, 
%
%\begin{align*}
%	CL(s_{1:n} | a_{1:n}) &= - n \sum_{s, a} \frac{n_{s}^{a}}{n} \sum_{s'} \frac{n_{s, s'}^{a}}{n_{s}^{a}} \log \frac{n_{s, s'}^{a}}{n_{s}^{a}}\\
%	&= n \hat{H}(S' | S ,A)\\
%\end{align*}
%
\begin{align*}
\phi_0(h_{t}) &= 
\begin{cases*}
1 \quad &\text{if $o_{t-1} \neq o_t$}\\
0 \quad &\text{otherwise.}
\end{cases*}\\
\phi_1(h_t) &= 0~.
\end{align*}

Clearly the mapping $\phi_0$ is more informative and provides direct indication of the dynamics under which positive reward is received. Let us consider $Cost_M$ in the limit normalised by $n$. For ergodic sequences, the frequency estimates will converge to the true probabilities. Thus we have

\begin{align*}
\lim_{n \to \infty} \frac{Cost_{M}(n, \phi)}{n} &= \lim_{n \to \infty} \frac{CL(s_{1:n} | a_{1:n}) + CL(r_{1:n} | s_{1:n}, a_{1:n}) + CL(\phi)}{n} \\
&\overset{(a)}{=} - \lim_{n \to \infty} \sum_{s, a} \sum_{s'} \frac{n_{s,s'}^{a}}{n} \log \left(\frac{n_{s,s'}^{a}}{n_s^a} \right) -
\lim_{n \to \infty} \sum_{s, a} \sum_{r} \frac{n_{s}^{ar}}{n} \log \left(\frac{n_{s}^{ar}}{n_s^a} \right) \\
&\overset{(b)}{=} - \sum_{s, a} p_{\phi} (s,  a) \sum_{s'} p_{\phi}(s' |  s, a) \log p_{\phi} (s' | s, a) - \\
&\quad\quad   \sum_{s, a} p_{\phi}(s, a) \sum_{r} p_{\phi}(r | s, a) \log  p_{\phi}(r  | s, a) \\
&\overset{(c)}{=} - \sum_{s} p_{\phi}(s)  \sum_{s'} p_{\phi}(s' | s) \log (p_{\phi}(s'  | s)) - \sum_{s} p_{\phi}(s) \sum_{r} p_{\phi}(r |  s)  \log p_{\phi}(r | s)\\
&\overset{(d)}{=} \sum_{s} p_{\phi}(s) H_{\phi}(S' | S = s) + \sum_{s} p_{\phi}(s) H_{\phi}(R | S = s)
\end{align*}
Here (a) follows by definition, (b) follows since the frequency estimates converge, (c) follows since the actions do not affect the state or reward probabilities and (d) uses the definition of conditional entropy where we subscript by $\phi$ to indicate that the distribution is under the $\phi$ mapping.

We can now compute $\lim_{n \to \infty} \frac{Cost_{M}(n, \phi_0)}{n}$:
\begin{align*}
    \sum_{s} p_{\phi_0}(s) H_{\phi_0}(S' | S = s)
    &= p_{\phi_0}(s = 0) H_{\phi_0}(S' | S=0) + p_{\phi_0}(s = 1) H_{\phi_0}(S' | S = 1)\\
    &= 0.9 \cdot 0.47 + 0.1 \cdot 0.47\\
    &= 0.47\\
    \sum_{s} p_{\phi_0}(s) H_{\phi_0}(R | S = s)
    &= p_{\phi_0}(S = 0)H_{\phi_0}(R | S = 0) + p_{\phi_0}(S = 1)H_{\phi_0}(R | S = 1)\\
    &= 0.9 \cdot 0 + 0.1 \cdot 0.47\\
    &= 0.047
\end{align*}

We thus have that $\lim_{n \to \infty} \frac{Cost_M(n, \phi_0)}{n} = 0.47 + 0.047 = 0.517$. For $\phi_1$ we have
\begin{align*}
\sum_{s} p_{\phi_1}(s) H_{\phi_1}(S' | S = s) &= 0\\
\sum_{s} p_{\phi_1}(s) H_{\phi_1}(R | S = s) &= p_{\phi_1}(S = 0)H_{\phi_1}(R | S = 0) + p_{\phi_1}(S = 1)H_{\phi_1}(R | S = 1)\\
&= 0.08
\end{align*}
We then have $\lim_{n \to \infty} \frac{Cost_M(n, \phi_1)}{n} = 0.08$. Thus the uninformative mapping $\phi_1$ minimises $Cost_M$ even though it is less predictive of the rewards. As mentioned earlier, this phenomenon occurs due to the heavy weighting given to the code length of the state sequence. Even if the model is no good for predicting reward, a simple transition model can cause decreases in the state sequence code length that can offset increases in the reward sequence code length. This issue will be exacerbated in environments where the reward signal is sparse; the cost associated with being unable to predict a sparse reward is negligible relative to the modest reduction in complexity of the state transition model.

\end{document}